\documentclass{article}

\usepackage{dilab_arxiv}
\usepackage{amsmath,amsfonts,bm}
\usepackage{amsthm}

\def\eqref#1{equation~\ref{#1}}
\def\1{\bm{1}}

\DeclareMathAlphabet{\mathsfit}{\encodingdefault}{\sfdefault}{m}{sl}
\SetMathAlphabet{\mathsfit}{bold}{\encodingdefault}{\sfdefault}{bx}{n}

\usepackage[capitalize,noabbrev]{cleveref}

\usepackage{hyperref}
\usepackage{url}
\newtheorem{theorem}{Theorem}[section]
\newtheorem{lemma}[theorem]{Lemma}

\usepackage{amsmath,amsfonts,amssymb}
\usepackage{mathtools}
\usepackage{algorithm}
\usepackage{algpseudocode}
\usepackage{enumerate}
\usepackage{array}

\title{Lipschitz Bandits with Arbitrary Feedback Delays}
\runningtitle{Lipschitz Bandits with Arbitrary Feedback Delays}
\date{arXiv preprint, \today}

\paperlogo{\includegraphics[height=1.5cm]{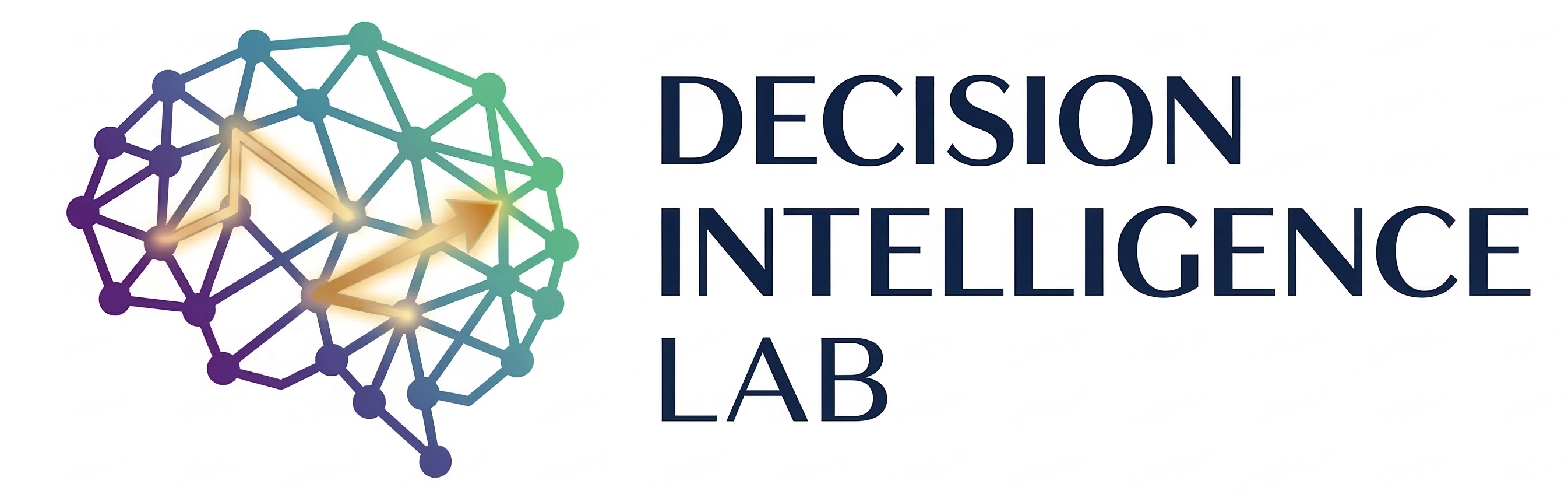}}

\author{
  Yuhao Liu$^{1}$, Yu Chen$^{1}$, and Longbo Huang$^{1\,\text{\faEnvelope}}$
  \\[0.3em]\normalfont
  $^1$Institute for Interdisciplinary Information Sciences, Tsinghua University
  \\
  \quad \text{\faEnvelope}\ Correspondence: longbohuang@tsinghua.edu.cn
}

\begin{document}

\maketitle
\thispagestyle{fancy}

\begin{abstract}
The Lipschitz bandit problem extends the traditional multi-armed bandit framework to continuous action spaces by assuming that the reward functions satisfy a Lipschitz condition. This work investigates Lipschitz bandits under arbitrary feedback delays, where reward signals are not received immediately upon taking an action but after an arbitrarily chosen delay. We consider both stochastic and adversarial reward settings, proposing an elimination-based algorithm and an EXP3-based algorithm, respectively. For both settings, our algorithms achieve a regret bound of $\tilde{O}\left(T^{\frac{d_z+1}{d_z+2}}+\sqrt{D}\right)$ over a time horizon $T$ with total delay $D$, where the main difference between settings lies in the definition of the zooming dimension $d_z$. Our bounds match existing delay-free regret guarantees for Lipschitz bandits and characterize the additional $\tilde{O}(\sqrt{D})$ impact introduced by feedback delays.
\end{abstract}

\section{Introduction}

The Lipschitz bandit framework \citep{continuum95,kleinberg2008multi} extends the classical multi-armed bandit (MAB) formulation to continuous action spaces. In this setting, the agent selects actions from a metric space and receives stochastic rewards whose expectations vary smoothly according to a Lipschitz-continuous function. A standard approach for handling continuous action domains is discretization, which converts the continuous domain into a finite-armed bandit problem \citep{kleinberg2019bandits,feng2022lipschitz}. This approach requires balancing a fundamental trade-off: a finer discretization reduces the bias of the continuous approximation but increases the model complexity and sample regret of the finite-armed problem.

In classical formulations, reward feedback is assumed to be immediate. However, in many real-world applications, feedback is delayed, forcing the agent to continue making decisions without access to the most recent observations. While bandit problems with delayed feedback have been extensively studied, their extension to Lipschitz bandits remains largely unexplored. Recently, \citet{liu2026lipschitz} studied Lipschitz bandits with delayed feedback, but their analysis is restricted to stochastic delays. Therefore, in this work, we address this gap by studying Lipschitz bandits under more general delay models, where delays can be arbitrarily chosen before the procedure starts.

\subsection{Lipschitz Bandits with Delayed Feedback}

Let $(X,\delta)$ be a compact metric space, where $\delta: X \times X \to \mathbb{R}_{\ge 0}$ denotes the metric on $X$. Let $T$ be the number of rounds. For each $t$, let $\mu_t: X \to [0,1]$ be the reward function, which is assumed to be Lipschitz continuous, i.e., $|\mu_t(x) - \mu_t(y)| \le \delta(x,y)$ for all $x,y \in X$. We call the tuple $(X,\delta,\{\mu_t\}_{t=1}^T)$ an instance of the Lipschitz bandit problem. In the standard stochastic reward settings, the reward function remains the same, that is, $\mu_t\equiv\mu$ for all $t=1,\dots,T$. We also consider the "oblivious" adversarial reward settings, where the reward function $\mu_t$ is chosen arbitrarily before all interaction starts.

The interaction proceeds as follows. At each round $t=1,\dots,T$, the agent selects an arm $x_t \in X$, and a reward $r_t$ is generated after the arm selection. The reward $r_t$ is drawn from a distribution that depends only on $x_t$ and satisfies $\mathbb{E}[r_t] = \mu_t(x_t)$.  The agent's goal is to minimize the cumulative regret, which is defined as $$R(T)=\max_{x^*\in X}\sum_{t=1}^T\mu_t(x^*)-\sum_{t=1}^T\mu_t(x_t).$$

In order to optimize the cumulative regret, the agent should use previous feedback to update its policy. However, in the delayed feedback setting, when a reward is generated, it may not be immediately observed by the agent. Instead, each reward $r_t$ is observed after a delay of $d_t$ rounds. Therefore, at the beginning of round $t$, the agent observes the history of actions $\{x_i : i < t\}$ and the set of rewards $\{r_i : i + d_i < t\}$. A delay $d_t = 0$ indicates an immediate feedback. Without loss of generality, we assume that $t+d_t\le T$, since $d_t=T-t$ already ensures that the reward of round $t$ be never used to update the agent policy. In this work, we assume that $\{d_t\}_{t=1}^T$ is \textbf{arbitrarily} determined before the interaction starts (i.e., oblivious delays), but the agent only knows $d_t$ after the $(t+d_t)$-th round.

Another important concept for a Lipschitz bandit problem instance is the zooming dimension, whose definition is based on the covering number. A ball in $(X,\delta)$ with diameter $l$ can be written as $B(x,l):=\{y\in X:\delta(x,y)<l/2\}$. Let $S\subseteq X$, we write $\text{cov}(S,l)$ as the minimum number of balls of diameter $l$ to cover the set $S$. It is known that we can define the covering dimension of $X$ using the covering number: $$d:=\inf\{n\ge0:\exists c>0,\forall \epsilon>0,\text{cov}(X,\epsilon)\le c\epsilon^{-n}\}.$$
For the definition of zooming dimension, one need to first define some sub-optimal region $X_\epsilon\subseteq X$, and the $c$-zooming dimension for a problem instance as typically defined as $$d_z=\inf\{n\ge0:\forall r\in(0,1],\text{cov}(X_\epsilon,\epsilon)\le c\epsilon^{-n}\}.$$ The explicit definition of zooming dimension is given in Sections \ref{sec:result-stochastic} and \ref{sec:result-adversarial} for the corresponding settings.

Compared to the covering dimension, the zooming dimension does not control the scale of whole space. Instead, it only focuses on a smaller region that is nearly optimal. Intuitively, the zooming dimension measures the difficulty of distinguishing the optimal arm from the other arms. Note that the zooming dimension depends on the reward function $\mu$, and might be significantly smaller than the dimension $d$ of the whole space $X$.

In the absense of feedback delays, the state-of-the-art algorithms for Lipschitz bandits with stochastic rewards achieve a regret bound of $\tilde{O}\left(T^{\frac{d_z+1}{d_z+2}}\right)$, where $d_z$ denotes the zooming dimension \citep{kleinberg2019bandits}. Moreover, similar regret bound are also achieved for adversarial rewards, where the only difference lies in the definition of zommming dimension. In this work, we extend existing results by providing algorithms with feedback delays and achieving regret bounds that explicitly depends on both $T$ and the delay budget $D$.

\paragraph{Main Contribution.}

We summarize our contribution as follows. We also compare our results with previous results in Table \ref{tab:comparison}.

\begin{itemize}
    \item We investigate the performance of an existing elimination-based algorithm in stochastic Lipschitz bandits, within the setting of arbitrary feedback delays. By developing a greedy-based analysis that characterizes the impact of worst-case delay patterns, we show in Theorem \ref{thm:oblivious} that the algorithm achieves an regret bound of $\tilde{O}\left(T^{\frac{d_z+1}{d_z+2}}+\sqrt{D}\right)$.

    \item We design an EXP3-based algorithm for adversarial rewards with arbitrary feedback delays. By carefully selecting the zooming schedule and the weight-update procedure, we prove that, when the delay budget $D$ and the zooming dimension $d_z$ are known, the proposed algorithm achieves $\tilde{O}\left(T^{\frac{d_z+1}{d_z+2}}+\sqrt{D}\right)$ regret under a modified notion of zooming dimension.

    \item Our regret bounds recover the optimal rates of the corresponding delay-free settings in both stochastic and adversarial cases. Furthermore, our analysis demonstrates that arbitrary feedback delays contribute an additional penalty of order $\tilde{O}(\sqrt{D})$ to the regret, highlighting the precise impact of delayed feedback on Lipschitz bandit problems.
\end{itemize}

% \begin{table}[h!]
% \label{tab:results}
% \begin{minipage}{\textwidth}
%     \centering
%     \setlength{\extrarowheight}{2pt}
%     \begin{tabular}{>{\raggedright\arraybackslash}m{3.4cm}|>{\raggedright\arraybackslash}m{3.8cm}|>{\raggedright\arraybackslash}m{4.7cm}}
%          \hline
%          Scheme & Stochastic Rewards & Adversarial Rewards \\
%          \hline
%          Optimality Gap $\Delta$ & $\Delta(x)=\mu(x^*)-\mu(x)$ & $\Delta_t(x)=\frac{1}{t}\sum_{s\in\tilde{O}_t}[\mu_t(x^*)-\mu_t(x)]$\\
%          \hline
%          Definition of Suboptimal Region $X_\epsilon$ & $\{x:\Delta(x)\le O(\epsilon)\}$ & $\{x:\exists t\ge\Omega(\epsilon^{-2}),\Delta_t(x)\le O(\epsilon\ln T)\}$\\
%          \hline
%          Definition of Zooming Dimension $d_z$ & \multicolumn{2}{|l}{$d_z=\inf\{n\ge0:\forall\epsilon\in(0,1],\mathrm{cov}(X_\epsilon,\epsilon)\le c\epsilon^{-n}\}$}\\
%          \hline
%          Regret without Delay & $O\left(T^{\frac{d_z+1}{d_z+2}}\left(c\log T\right)^{\frac{1}{d_z+2}}\right)$ & $O\left(T^{\frac{d_z+1}{d_z+2}}\sqrt{d2^d}c^{\frac{1}{d_z+2}}\log T\right)$\\
%          \hline
%          Penalty due to Delay & $O\left(\sqrt{D\log D}\right)$ & $O\left(\sqrt{dD\log T}\right)$ \\
%          \hline
%     \end{tabular}
%     \caption{Summarization of our main results.}
% \end{minipage}
% \end{table}

\begin{table}[h!]
\begin{minipage}{\textwidth}
    \centering
    \setlength{\extrarowheight}{2pt}
    \begin{tabular}{>{\raggedright\arraybackslash}m{3.2cm}|c|c|c}
         \hline
         & Reward & Delay & Regret\\
         \hline
         \citet{kleinberg2008multi}, Theorem 2.4 & Stochastic & No Delay & $O\left(T^{\frac{d_z+1}{d_z+2}}(c\log T)^{\frac{1}{d_z+2}}\right)$ \footnote{$d_z$ represents the zooming dimension with multiplier $c$.}\\
         \hline
         \citet{podimata2021adaptive}, Theorem 2 & Adversarial & No Delay & $O\left(T^{\frac{d_z+1}{d_z+2}}d^{1/2}(c\cdot C_{\text{dbl}})^{\frac{1}{d_z+2}}\log^5T\right)$ \footnote{The definition of $d_z$ is different for adversarial bandits.}\\
         \hline
         \citet{liu2026lipschitz}, Theorem 1 & Stochastic & Bounded & $O\left(T^{\frac{d_z+1}{d_z+2}}(c\log T)^{\frac{1}{d_z+2}}+c\tau_{\max}\left(\frac{T}{\log T}\right)^\frac{d_z}{d_z+2}\right)$\\
         \hline
         \citet{liu2026lipschitz}, Theorem 3 & Stochastic & Stochastic & $O\left(\frac{1}{p}T^{\frac{d_z+1}{d_z+2}}(c\log T)^{\frac{1}{d_z+2}}+Q(p)\right)$ \footnote{$Q(\cdot)$ represents the quantile function.}\\
         \hline
         \textbf{Theorem \ref{thm:oblivious} (Ours)} & Stochastic & {\textbf{Arbitrary}} & $O\left(T^{\frac{d_z+1}{d_z+2}}\left(c\log T\right)^{\frac{1}{d_z+2}}+\sqrt{D\log D}\right)$\\
         \hline
         \textbf{Theorem \ref{thm:adversarial} (Ours)} & Adversarial & {\textbf{Arbitrary}} & $O\left(T^{\frac{d_z+1}{d_z+2}}\sqrt{d2^d}c^{\frac{1}{d_z+2}}\log T+\sqrt{dD\log T}\right)$\\
         \hline
    \end{tabular}
    \caption{Comparison with Previous Results.}
    \label{tab:comparison}
\end{minipage}
\end{table}

\section{Related Works}

\paragraph{Lipschitz Bandits.}

Lipschitz bandits, first introduced by \citet{continuum95}, extend the classical multi-armed bandit (MAB) framework to continuous action spaces. \citet{kleinberg2008multi} further generalized this setting to arbitrary metric spaces. Building on these foundational results, subsequent works \citep{feng2022lipschitz,kleinberg2019bandits} adapt techniques from finite-armed bandits to the Lipschitz setting, achieving provably efficient algorithms with near-optimal regret guarantees.

A variety of extensions to the standard Lipschitz bandit framework have also been studied. \citet{podimata2021adaptive} consider an oblivious adversary in a non-stationary environment, where the reward function evolves over time but is fixed in advance. \citet{kang2023robust} investigate adversarial reward corruption and derive regret bounds that depend explicitly on the corruption budget. Similarly, \citet{nguyen2025nonstationary} analyze non-stationary settings, with regret controlled by the number of significant distributional shifts. Other works explore alternative reward assumptions and problem settings, including heavy-tailed rewards \citep{lu2019optimal}, as well as multi-agent and quantum extensions studied by \citet{chakraborty2026multiagent} and \citet{yi2026quantum}, respectively.

Most recently, \citet{liu2026lipschitz} studied Lipschitz bandits with stochastic delayed feedback, providing algorithms that handle both bounded and unbounded delays. In contrast, the setting we are considering focuses on a more general delay model, where delays may be arbitrary and potentially adversarial. In particular, we aim to derive regret bounds that explicitly depend on the total delay budget.

\paragraph{Bandits with Delayed Feedback.}

Multi-armed bandit problems with delayed observations have been well studied in the literature. \citet{joulani2013online} studies general online learning problems with delayed feedback, and then provides a modified UCB algorithm for the stochastic MAB scenarios. \citet{vernade2017stochastic} studies the stochastic MAB problem with delayed and censored reward, while assuming a known delay distribution. Adversarial bandits with arbitrary feedback delays are also widely studied. For example, \citet{thune2019nonstochastic,bistritz2019onlineexp3} provide typical results, where the regret is added by $O(\sqrt{D\log K})$ compared to the case without delay. \citet{zimmert20a} refines the bound so that it is tighter in some cases. In addition, there are a variety of works about topics such as best-of-both-worlds algorithms \citet{bobw2022} and linear bandits \citet{vernade20a}.
\section{Results for Stochastic Rewards}
\label{sec:result-stochastic}
% \subsection{Oblivious Adversary}

We first consider the stochastic reward setting with arbitrary oblivious feedback delays. In this section, we state the algorithm and establish a regret upper bound for the stochastic reward setting. We consider arbitrary delays that are fixed before the first round starts, but remain unknown to the agent until the corresponding feedback is received. Following \citet{liu2026lipschitz}, we assume that the reward is generated as $r_t=\mu(x_t)+\epsilon_t$, where $\{\epsilon_t\}_{t=1}^T$ are independent sub-Gaussian random variables satisfying $\Pr\{|\epsilon_t|>\xi\}\le 2e^{-\xi^2/2}$. For simplicity of presentation, we specialize our discussion to the metric space $X=[0,1]^d$ equipped with the metric $\delta(x,y)=|x-y|_{\infty}$. Under this metric, each ball in $(X,\delta)$ is a $d$-dimensional cube, and the covering dimension of $(X,\delta)$ is exactly $d$. This setting is widely adopted in the literature \citep{feng2022lipschitz,podimata2021adaptive} and can be generalized to arbitrary metric spaces by assuming access to a covering oracle (see \citealt{liu2026lipschitz} and Section 6 of \citealt{podimata2021adaptive}).

Before introducing the algorithm, we first define the \textbf{zooming tree} over $X=[0,1]^d$. Each node $u$ in the zooming tree corresponds to an $\ell_\infty$-ball (i.e., a cube) in $X$, and we denote its diameter by $L(u)$. The root node $u_{\mathrm{root}}$ corresponds to the entire space $X$. Each node $u$ is recursively partitioned into $2^d$ children, each having diameter $L(u)/2$, such that the children form an exact partition of $u$. Therefore, for any node $u$ at depth $h(u)$, its diameter satisfies $L(u)=2^{-h(u)}$. We denote by $C(u)$ the set of children of node $u$.

We directly adopt the Delayed Lipschitz Phased Pruning algorithm proposed by \citet{liu2026lipschitz}. The algorithm proceeds in phases $m=1,2,\ldots$. In each phase, it maintains an "active" set $U_m$ consisting of nodes at depth $m-1$ in the zooming tree. For every node $u\in U_m$, the algorithm selects an arbitrary point $x\in u$ and plays it. Whenever a delayed reward signal $(u,r)$ is received, the algorithm updates the empirical mean reward $\hat{\mu}(u)$ and the number of observed rewards $v(u)$. Each node is sampled repeatedly until its number of observed rewards reaches the required threshold $v_m$. After all nodes in $U_m$ have received sufficiently many observations, the algorithm prunes nodes with $\Omega(2^{-m})$ estimated optimality gap, zooming-in the remaining nodes by adding their children to the active set, and proceeds to the next phase. The pseudo-code is provided in Algorithm \ref{alg:prune-oblivious}.

\begin{algorithm}[htbp]
\caption{Delayed Lipschitz Phased Pruning \citep{liu2026lipschitz}}
\label{alg:prune-oblivious}
\begin{algorithmic}[1]

\Require $(X=[0,1]^d,\delta=\ell_\infty)$, $T$, and parameter $\delta>0$.

\State Initialize timer $t=1$, phase counter $m=1$, and $U_m=\{u_{\text{root}}\}$.

\While{$t\le T$}
    \State $U_m^+\gets U_m$; set $v(u)\gets0$, $\hat{\mu}(u)\gets0$ for all $u\in U_m$. \qquad {\color{red}// Initialize phase $m$.}
    \While{$t\le T$ and $U_m^+\neq\emptyset$}
        \For{$u\in U_m^+$}
            \State Sample $x\in u$ and pull the arm $x$.
            \For{any incoming reward $(u',r)$ with $u'\in U_m^+$}
                \State $\hat{\mu}(u')\gets(\hat{\mu}(u')\cdot v(u')+y)/(v(u')+1)$. \qquad {\color{red}// Update the average reward.}
                \State $v(u')\gets v(u')+1$.
                \State Remove $u'$ from $U_m^+$ if $v(u')\ge v_m=(2\ln T+\ln(2/\delta))\cdot 2^{2m-1}$.
            \EndFor
            \State $t\gets t+1$. Break if $t>T$.
        \EndFor
    \EndWhile
    \State $\hat{\mu}_m^*\gets\max_{u\in U_m}\hat{\mu}(u)$.
    \For {$u\in U_m$}
        \State Remove $u$ it from $U_m$ if $\hat{\mu}_m^*-\hat{\mu}(u)>8\cdot 2^{-m}$. \quad{\color{red}// Prune the nodes with high instantaneous regret.}
    \EndFor
    \State $U_{m+1}\gets\bigcup_{u\in U_m}C(u)$. \qquad {\color{red}// Zoom-in the remaining nodes.}
    \State $m\gets m+1$.
\EndWhile
\end{algorithmic}
\end{algorithm}

The key components of Algorithm \ref{alg:prune-oblivious} are the pruning step (line 16) and the zoom-in step (line 17). The pruning step guarantees that the remaining nodes contain only near-optimal actions, with their optimality gap bounded by $O(2^{-m})$ in phase $m$. The zoom-in step then refines the discretization of the action space by partitioning each remaining node into smaller balls with half the diameter. In this case, the algorithm automatically adapts the discretization level according to the observed rewards, thereby balancing the trade-off between exploration and approximation error.

Moreover, Algorithm \ref{alg:prune-oblivious} does not require knowledge of the total delay budget $D$. As we will show below, the algorithm achieves a regret bound that automatically adapts to the actual total delay $D$.

\paragraph{Regret Bound.} For an arm $x\in X$, define its optimality gap as $\Delta(x)=\max_{x^*\in X}\mu(x^*)-\mu(x)$. Define the $\epsilon$-optimal region as $$X_\epsilon=\{x\in X:\Delta(x)\le\epsilon\}.$$
The $c$-zooming dimension of the problem instance $(X,\delta,\mu)$ is defined as $$d_z=\inf\{n\ge0:\forall\epsilon\in(0,1],\mathrm{cov}(X_\epsilon,\epsilon/16)\le c\epsilon^{-n}\},$$
where $\mathrm{cov}(X,\epsilon)$ denotes the minimum number of zooming-tree nodes with diameter at most $\epsilon$ required to cover $X$. The constant $1/16$ and the restriction to tree-node coverings are introduced only for notational convenience and can be absorbed into the constant $c$. As shown later, the dependence of our regret bound on $c$ is mild.

Our main theoretical result is summarized in the following theorem. We show that Algorithm \ref{alg:prune-oblivious} achieves nearly the same regret rate as Lipschitz bandits without delays, with an additional penalty that depends on the total feedback delay.

\begin{theorem}
    With probability at least $1-\delta$, the regret of Algorithm \ref{alg:prune-oblivious} satisfies 
    $$R(T)\lesssim T^{\frac{d_z+1}{d_z+2}}\left(c\log\frac{T}{\delta}\right)^{\frac{1}{d_z+2}}+\sqrt{D\log D},$$
    where $D=\sum_{t=1}^{T}d_t$, and $d_z$ denotes the $c$-zooming dimension of the problem instance $(X,\delta,\mu)$.
    \label{thm:oblivious}
\end{theorem}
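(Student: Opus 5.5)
Our plan is to split the regret into the ``delay-free'' cost of the phased elimination and an extra cost paid for pulls whose rewards are still outstanding when a node reaches its threshold.

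\textbf{Step 1: clean event and pruning guarantees.} We use a union bound over nodes and phases. With $v_m=(2\ln T+\ln(2/\delta))2^{2m-1}$ and sub-Gaussian noise, with probability at least $1-\delta$ every active $u\in U_m$ satisfies $|\hat\mu(u)-\bar\mu(u)|\le 2^{-m}$ at the end of phase $m$, where $\bar\mu(u)$ is the average of $\mu$ over the points actually sampled in $u$. Since $L(u)=2^{-(m-1)}$, Lipschitzness places $\bar\mu(u)$ within $2^{-m+1}$ of $\mu$ at any point of $u$. From this we argue in the standard way. The node containing $x^*$ is never pruned, because its estimated gap is at most $O(2^{-m})<8\cdot2^{-m}$. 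Every node surviving into phase $m$ was not pruned at phase $m-1$, so all its points have $\Delta(x)\le c_0 2^{-m}$ for an absolute constant (say $c_0=32$). Hence the nodes of $U_m$ are depth-$(m-1)$ tree nodes inside $X_{c_02^{-m}}$. The covering definition with $\epsilon/16$ and tree-node coverings then gives $|U_m|\lesssim c\,2^{m d_z}$.

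\textbf{Step 2: regret decomposition.} Within phase $m$, every pull is in a node of $U_m$, so each pull costs at most $O(2^{-m})$ instantaneous regret. Write $n_m(u)$ for the number of pulls of $u$ in phase $m$. Then $n_m(u)=v_m+o_m(u)$, where $o_m(u)$ counts pulls whose rewards had not arrived when $u$ left $U_m^+$, or arrived in a later phase and were discarded. The first part gives the usual bound. Summing over $m\le M$ we get
\[
\sum_m |U_m|v_m 2^{-m}\lesssim \sum_m c\,2^{m(d_z+1)}\log(T/\delta),
\]
and phases with $2^{-m}<\rho$ contribute at most $T\rho$. Optimizing $\rho$ gives $T^{\frac{d_z+1}{d_z+2}}(c\log\frac T\delta)^{\frac1{d_z+2}}$. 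Because the round-robin within a phase also spends rounds on other nodes, we account for every round by the phase's per-pull regret rather than per node.

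\textbf{Step 3, the main obstacle: the delay term.} We must bound $\sum_m\sum_u o_m(u)\,2^{-m}$ by $\sqrt{D\log D}$ plus lower-order terms, and this is where the greedy worst-case argument enters. A pull at time $t$ counted in $o_m(u)$ is still outstanding at a moment when feedback for $u$ completes. With round-robin play, the outstanding pulls of a node at a given time have delays at least $1,2,\dots$ in an approximately arithmetic pattern, with spacing $|U_m^+|\ge1$.

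We first try a greedy/exchange argument. To maximize $\sum_t 2^{-m(t)}\mathbf 1\{t \text{ wasted}\}$ subject to $\sum d_t\le D$, an adversary must give the $k$-th most recent outstanding pull delay at least about $k$. So wasting $N_m$ pulls in a phase costs at least about $N_m^2/2$ total delay, though it can cost less when spread across nodes with spacing. We handle the spreading by noting that spread waste across $|U_m|$ nodes is still charged at least $\sum_u o_m(u)^2/2$. We also use that overall waste is capped by $T$ and by $D$.

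Next we bound the contribution of each phase. Phase $m$ contributes $2^{-m}N_m$ with $N_m\lesssim\sqrt{D_m}$ plus a per-node overhead, where $D_m$ is the delay spent there. Cauchy--Schwarz over phases then gives $\sum_m 2^{-m}\sqrt{D_m}\le\sqrt{\sum_m 4^{-m}}\sqrt{D}$. The $\log D$ factor absorbs the additional harmonic-type sum that appears when a single large delay straddles many phases: a pull delayed $d$ can block at most $\log d$ phases with geometric weights. The per-node overhead is $O(|U_m|)$ per phase, which is dominated by $|U_m|v_m$ and is absorbed into Step 2. Adding the bounds from Steps 2 and 3 proves the theorem.
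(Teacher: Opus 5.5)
Your Steps 1 and 2 are essentially the paper's argument: the clean event, survival of the optimal node, $|U_m|\lesssim c\,2^{md_z}$, and the phase decomposition, with your $o_m(u)$ playing the role of the paper's $a_u$. Step 3 carries the entire delay term, and it has a genuine gap.

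The only lower bound on delay cost you actually state is $D_m\ge\frac12\sum_u o_m(u)^2$. It discards the round-robin spacing, and from it you only get $N_m\le\sqrt{2|U_m|D_m}$. The factor $\sqrt{|U_m|}\lesssim\sqrt{c}\,2^{md_z/2}$ is multiplicative, not an additive per-node overhead. With it, $\sum_{m\le M}2^{-m}\sqrt{|U_m|D}$ grows polynomially in $T$ once $d_z>2$. The bound you then assert, $N_m\lesssim\sqrt{D_m}+O(|U_m|)$, is not derived from anything, and it is in fact false (see below).

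Your explanation of the $\log D$ factor is also wrong. A pull whose feedback is still outstanding when its phase ends is simply discarded, because its node is no longer in any later $U^+_{m'}$. So delays cannot ``straddle'' phases and cause extra pulls there; the logarithm arises inside a single phase.

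What is missing is the paper's exact computation of the cheapest delay pattern within one phase.
\begin{itemize}
\item For fixed extra counts $a_u$, it is optimal to hold back the $v_m$-th through $(v_m+a_u-1)$-th pulls of $u$ until its $(v_m+a_u)$-th pull.
\item An exchange argument shows that, for a given multiset of counts, the total cost is smallest when the counts are non-decreasing along the loop order.
\item In that order, the $(s-1)$-th and $s$-th extra pulls of any node are exactly $w_s=\#\{u:a_u\ge s\}$ rounds apart, and each of the $s$ held-back pulls pays that gap.
\item Hence any realization needs $D\ge\sum_{s\ge1}s\,w_s^2$.
\item Cauchy--Schwarz with weights $\sqrt{s}$ then gives $N_m=\sum_s w_s\le\sqrt{\sum_s s\,w_s^2}\,\sqrt{\sum_{s\le D}1/s}=O(\sqrt{D\log D})$.
\end{itemize}
Summing against $2^{-m}$ finishes the proof; your Cauchy--Schwarz over phases would also work once the per-phase bound is $\sqrt{D_m\log D_m}$.

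The same formula refutes your per-phase claim. Put counts $K,K/2,K/3,\dots,1$ on $K=|U_m|$ nodes, so $w_s\approx K/s$. Then $N_m\approx K\ln K$, while the delay needed is only $D_m\approx K^2\ln K$. This gives $N_m/(\sqrt{D_m}+K)\approx\sqrt{\ln K}\to\infty$. This harmonic profile is exactly why the $\log D$ factor appears.
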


Theorem \ref{thm:oblivious} provides a high-probability bound for the achieved regret. If we take $\delta=1/\mathrm{poly}(T)$, the $\frac{1}{\delta}$ can be omitted in the representation. The regret bound in Theorem \ref{thm:oblivious} consists of two components. The first term is of order $\tilde{O}(T^{\frac{d_z+1}{d_z+2}})$, which matches the optimal regret rate for Lipschitz bandits without delays established by \citet{kleinberg2019bandits}. The second term, $O(\sqrt{D\log D})$, captures the additional regret caused by arbitrary feedback delays.

Although Algorithm \ref{alg:prune-oblivious} is identical to the one proposed by \citet{liu2026lipschitz}, our analysis establishes robustness against arbitrary delay sequences, whereas previous analyses only considered stochastic delays. Moreover, for comparison, in finite-armed bandits, the additional regret caused by delays is typically bounded by $O(\sqrt{D\log K})$, where $K$ denotes the number of arms (see, e.g., \citealt{thune2019nonstochastic,zimmert20a}). Our result recovers this dependence in the Lipschitz setting, since the number of active arms generated by the discretization procedure can be as large as $T$, yielding $\log K=O(\log T)$, while the maximum possible total delay satisfies $D=O(T^2)$.

\section{Results for Adversarial Rewards}
\label{sec:result-adversarial}

This section presents our algorithm and the corresponding regret bound for adversarial Lipschitz bandits with delayed feedback. In this setting, both the reward functions $\{\mu_t\}_{t=1}^T$ and the delays $\{d_t\}_{t=1}^T$ are fixed arbitrarily before the first round starts. For analytical convenience, we consider deterministic rewards, i.e., $r_t=\mu_t(x_t)$. As in Section \ref{sec:result-stochastic}, we specialize the metric space to $([0,1]^d,\ell_\infty)$ and adopt the same definition of the zooming tree.

Our algorithm is built upon the EXP3-based algorithm proposed by \citet{podimata2021adaptive}, which extends the classical EXP3 framework \citep{auer2002nonstochastic} to Lipschitz bandits through adaptive discretization. Throughout the interaction, the algorithm maintains a set of nodes $A_t$ in the zooming tree, referred to as the \textit{active nodes} at round $t$. Following the notation of \citet{podimata2021adaptive}, for any node $v$, we define $\mathrm{act}_t(v)=u$ if $u$ is the active ancestor of $v$ at round $t$.

We modify the EXP3-based zooming procedure to accommodate delayed feedback by carefully scheduling the weight updates and zooming operations, where the pseudo-code is provided in Algorithm \ref{alg:adversarial}. The algorithm takes $\eta$, $\beta$, and $\gamma_t$ as parameters, whose values will be specified later. Each round consists of three major steps: action selection (lines 4--7), weight update (lines 8--13), and zooming (lines 14--24).

\begin{algorithm}[htb]
\caption{Delayed Lipschitz EXP3}
\label{alg:adversarial}
\begin{algorithmic}[1]

\Require $(X=[0,1]^d,\delta=\ell_\infty)$, $T$, and parameter $\eta,\beta,(\gamma_t)_{t=1}^T$.

\State Initialize counter $\tau=0$ and active nodes $A_1=\{u_{\text{root}}\}$.
\State Set $w_{0}(u_{\text{root}})\gets1$, $\textnormal{conf}_0(u_{\textbf{root}})\gets\frac{1}{\beta}$.

\For{$t=1,2,\dots,T$}
    \State $p_t(u)\gets\frac{w_\tau(u)}{\sum_{v\in A_t}w_\tau(v)}$ for all $u\in A_t$ \qquad  {\color{red}// Compute distribution $p_t$ on $A_t$.}
    \State $\pi_t(u)\gets(1-\gamma_t)p_{\tau}(u)+\gamma_t/|A_t|$ for all $u\in A_t$.\qquad {\color{red}// Add uniform exploration on nodes.}
    \State Sample $u_t\in A_t$ according to probability $\pi_t$.
    \State Sample $x_t\in u_t$ uniformly at random, and pull arm $x_t$.
    \For{any incoming reward $(u_s,\mu_s(x_s))$}
        \For{$v\in A_t$}
            \State $w_{\tau+1}(v)\gets w_{\tau}(v)\cdot\exp\left(\eta\left(\frac{\mu_s(x_s)1\{\textbf{act}_s(v)=u_s\}+(1+4\ln T)\beta}{\pi_{s}(\textbf{act}_s(v))}\right)\right)$. \quad {\color{red}// Update weights using EXP3 rule.}
        \EndFor
        \State $\tau\gets\tau+1$.
    \EndFor
    \State $A_{t+1}\gets A_t$.
    \For{$v\in A_t$}
        \State $\textnormal{conf}_{t}(v)\gets\textnormal{conf}_{t-1}(v)+\frac{\beta}{\pi_t(v)}$. \qquad {\color{red}// Calculate the total "confidence".}
        \If{$\textnormal{conf}_{t}(v)<t\cdot L(v)$}
            \State $A_{t+1}\gets (A_{t+1}\setminus\{v\})\cup C(v)$. \qquad {\color{red}// Zoom-in the node.}
            \For{$v'\in C(v)$}
                \State $w_\tau(v')\gets w_\tau(v)/|C(v)|$. \qquad {\color{red}// Distribute the weight to all children.}
                \State $\textnormal{conf}_{t}(v')\gets\textnormal{conf}_{t}(v)$.
            \EndFor
        \EndIf
    \EndFor
\EndFor
\end{algorithmic}
\end{algorithm}

In the sampling step, the algorithm selects an active node $u_t$ according to a distribution proportional to its weight and then samples an arbitrary arm $x_t\in u_t$ to play. Following \citet{podimata2021adaptive}, for every active node $u\in A_t$, we construct the estimated reward $$\hat{\mu}_t(u):=\frac{\mu(x_t)1\{u=u_t\}+(1+4\ln T)\beta}{\pi_t(u)}.$$
This estimator consists of an inverse propensity score (IPS) estimator together with an additional confidence correction term. In the absence of delays, $\hat{\mu}_t(u)$ can be immediately used to update the weight of node $u$. However, under delayed feedback, the reward might arrives after $u$ being zoomed-in. If node $u$ remains active at that time, we directly update its weight. Otherwise, the update is transferred to each active descendant $v$ of $u$ at round $t+d_t$.

The remaining component of the algorithm is the zooming step. For each active node $u\in A_t$, we define its cumulative confidence as $\text{conf}_t(u)=\frac{1}{\beta}+\sum_{s=1}^t\frac{\beta}{\pi_s(\textbf{act}_s(u))}$. Intuitively, $\mathrm{conf}_t(u)$ quantifies the accumulated estimation uncertainty associated with node $u$, and the algorithm refines the discretization whenever this uncertainty becomes smaller than the scale of the node. Furthermore, when $\beta<1/2$, the sampling rule guarantees that $\frac{\beta}{\pi_t(u)}<L(u)$ for any node $u$ that is zoomed-in at round $t$. Unlike the weight update step, the zooming decision does not require waiting for delayed feedback and can therefore be performed immediately.

\textbf{Remark:} Define $\tilde{O}_t:=\{s:s+d_s\le t\}$. Then, by the construction of the zooming tree and the delayed update rule, after round $t$, we have the following useful representation of the weights:
\begin{equation}
    w_\tau(u)=L(u)^d\exp\left(\eta\sum_{s\in\tilde{O}_t}\hat{\mu}_s(\textbf{act}_s(u))\right).
    \label{eq:weight}
\end{equation}

\paragraph{Regret Bound.} We first define the adversarial gap of an arm $x\in X$ at round $t$ as $$\Delta_t(x):=\frac{1}{t}\sup_{x^*\in X}\sum_{s\in\tilde{O}_t}[\mu_t(x^*)-\mu_t(x)].$$
That is, we consider only the rounds whose feedback has arrived by round $t$, select the best fixed comparator arm $x^*$ over these observable rounds, and measure the average cumulative optimality gap of arm $x$. In the absence of delays, this definition reduces to the adversarial gap used in \citet{podimata2021adaptive}.

Using this notion of gap, we define the $\epsilon$-optimal region as $$X_\epsilon:=\{x\in X:\exists t\ge\epsilon^{-2}/9,\Delta_t(x)\le O(\epsilon\ln T)\},$$
where the constant hidden in the $O(\cdot)$ notation is specified in Lemma \ref{lem:advgap-single}. In other words, an arm $x$ is considered $\epsilon$-optimal if its adversarial gap is sufficiently small at some sufficiently large time $t$. Based on this definition, we introduce the adversarial $c$-zooming dimension with delayed feedback:
$$d_z:=\inf\{n\ge0:\forall\epsilon\in(0,1],\mathrm{cov}(X_\epsilon,\epsilon)\le c\epsilon^{-n}\},$$
where $\mathrm{cov}(X_\epsilon,\epsilon)$ denotes the minimum number of zooming-tree nodes with diameter at most $\epsilon$ required to cover $X_\epsilon$. As before, restricting the covering sets to nodes in the zooming tree is only for simplicity of presentation and only affects the constant $c$. The worst-case adversarial zooming dimension is $d_z\le d$.

We now present the regret guarantee for Algorithm \ref{alg:adversarial}.

\begin{theorem}
    Assume that the agent has prior knowledge of the total delay budget $D=\sum_{t=1}^{T}d_t$ and the adversarial $c$-zooming dimension $d_z$. If we choose $$\eta=\beta=\Theta\left(\frac{\sqrt{d\ln T}}{\sqrt{2^d}T^{(d_z+1)/(d_z+2)}c^{1/(d_z+2)}\sqrt{\ln T}+\sqrt{D}}\right),\quad\gamma_t=(2+4\ln T)\beta|A_t|,$$
    then Algorithm \ref{alg:adversarial} achieves
    $$\mathbb{E}\left[R(T)\right]\lesssim\sqrt{d2^d}T^{\frac{d_z+1}{d_z+2}}c^{\frac{1}{d_z+2}}\log T+\sqrt{dD\log T}.$$
    \label{thm:adversarial}
\end{theorem}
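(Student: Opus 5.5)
We follow the potential-function analysis of EXP3 with delayed feedback \citep{thune2019nonstochastic,bistritz2019onlineexp3}, grafted onto the adaptive-zooming analysis of \citet{podimata2021adaptive}. The argument tracks the total weight $W_\tau=\sum_{v\in A_t}w_\tau(v)$ through the representation (\ref{eq:weight}). It also uses the fact, noted before the Remark, that zooming only redistributes weight ($w_\tau(v')=w_\tau(v)/|C(v)|$), so $W_\tau$ changes only through delayed reward updates.

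\textbf{Step 1: lower bound on $\ln W_{\mathrm{final}}$.} Fix a comparator $x^*$ and let $u^*$ be the active node containing $x^*$ at the end. By (\ref{eq:weight}),
\[
\ln W\ge d\ln L(u^*)+\eta\sum_{s}\hat\mu_s(\mathrm{act}_s(u^*)).
\]
The zooming rule $\mathrm{conf}_t(v)<tL(v)$ controls the depth of $u^*$, so $d\ln L(u^*)\gtrsim -d\ln T$. This step produces the $\ln T\cdot d/\eta$ term.

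\textbf{Step 2: upper bound on each update.} For a feedback from round $s$ arriving at update index $\tau$, use $e^x\le1+x+x^2$, valid since $\eta\hat\mu\le1$ by the choice $\gamma_t=(2+4\ln T)\beta|A_t|$, which gives $\pi\ge(2+4\ln T)\beta$. This yields
\[
\ln\frac{W_{\tau+1}}{W_\tau}\le \eta\sum_v p_\tau(v)\hat\mu_s(\mathrm{act}_s v)+\eta^2\sum_v p_\tau(v)\hat\mu_s^2 .
\]
The key delay term compares $p_\tau$ (the distribution at arrival time) with $\pi_s$ (the distribution actually used at time $s$). Following \citet{thune2019nonstochastic}, write $p_\tau=p_s+(p_\tau-p_s)$ and bound the drift by the $\eta$-multiplicative stability of exponential weights. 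Each of the $d_s$ (plus the number of outstanding feedbacks) intermediate updates changes $p$ by a factor $e^{O(\eta\hat\mu)}$. In expectation this gives a drift cost of order $\eta\sum_s d_s=\eta D$. Zooming between $s$ and $\tau$ needs care; here the transfer of updates to active descendants ensures that $\sum_{v'\in C(v)}p(v')$ equals the parent's mass. As a result, the comparison is performed at the level of $\mathrm{act}_s(\cdot)$ and reduces to the finite-armed argument.

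\textbf{Step 3: combining and taking expectations.} Take expectations of the IPS estimator, where the bias term $(1+4\ln T)\beta/\pi$ is an upper-confidence correction. This gives
\[
\mathbb{E}R(T)\lesssim \frac{d\ln T}{\eta}+\eta\,\mathbb{E}\sum_t\sum_v\frac{1}{\pi_t(v)}+\eta D+\sum_t L(u_t)+\beta\,\mathbb{E}\sum_t\sum_v \frac{\ln T}{\pi_t(v)}+T\gamma\text{-cost}.
\]
The second, fifth and sixth terms, together with the discretization error $\sum_tL(u_t)$ from Lipschitzness, are bounded exactly as in \citet{podimata2021adaptive}. The active nodes at scale $\epsilon$ lie in $X_\epsilon$ by Lemma \ref{lem:advgap-single}, where the delayed gap $\Delta_t$ matches the information set $\tilde O_t$ in (\ref{eq:weight}). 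The number of such nodes is then at most $c\epsilon^{-d_z}$, times $2^d$ children. Summing over dyadic scales, with the cutoff scale chosen as $\epsilon\approx T^{-1/(d_z+2)}$, gives $\beta\,2^dT^{\frac{2(d_z+1)}{d_z+2}}c^{\frac{2}{d_z+2}}\ln T$ plus $d\ln T/\beta$. Setting $\eta=\beta$ as in the statement, the terms $d\ln T/\eta+\eta D$ balance to $\sqrt{dD\ln T}$, and the remaining terms to $\sqrt{d2^d}T^{\frac{d_z+1}{d_z+2}}c^{\frac{1}{d_z+2}}\ln T$.

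\textbf{Main obstacle.} The hard part is Step 2's drift bound under simultaneous zooming: the set $A_t$ changes between $s$ and $s+d_s$, so $p_\tau$ and $\pi_s$ live on different partitions. We would handle this by projecting both onto the partition $A_s$ via ancestors, using the weight-splitting invariant. We would then bound $|p_\tau(\mathrm{act}_s v)-p_s(\mathrm{act}_s v)|\le p_s\,(e^{\eta\sum\hat\mu}-1)$. Controlling the sum of $\hat\mu$ over outstanding rounds in expectation (each has conditional mean at most $1+O(\ln T)\beta/\pi$) is where the $\eta D$ term, rather than something larger, must be extracted.
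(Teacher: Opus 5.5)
Your architecture matches the paper's: the same potential with weight-preserving zooming, and the same second-order step. The drift term is also the same, comparing $p_{\tau_1(s)}$ with $p_{\tau_0(s)}$ after projecting onto $A_s$ through $\mathrm{act}_s(\cdot)$ and bounded by per-update multiplicative stability with the count $\sum_s(\tau_1(s)-\tau_0(s))\le 2D$. So is the tree-size bound through the delayed gap over $\tilde O_t$.

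Working in expectation throughout is a legitimate variant of the paper's high-probability bound with expectation taken only for the drift. It works because the chain $\mathrm{act}_s(x^*)$ is fixed before round $s$, so the estimator is conditionally unbiased along it. You do still need to charge the failure event of the tree-size bound, and of $\gamma_t\le 1/2$, by $R(T)\le T$.

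In Step 2, say explicitly that the drift is multiplied by $\hat\mu_s$, which can be of order $1/\eta$. What lets you replace $\hat\mu_s$ by its conditional mean $\le 2$ (Lemma \ref{lem:delay-inner-product-expectation}) is that $u_s$ is conditionally independent of $p_{\tau_1(s)}$ given the history before round $s$. Without that, the drift costs order $D$ rather than $\eta D$.

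The real problems are in Step 3.

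(i) As written, the terms $\eta\sum_t\sum_v 1/\pi_t(v)$ and $\beta\sum_t\sum_v \ln T/\pi_t(v)$ cannot be bounded as you claim. Since $\sum_v 1/\pi_t(v)\ge |A_t|^2$, the first is already at least $\eta\sum_t|A_t|^2$, a factor $|A_t|$ too large. When most nodes sit at the exploration floor $(2+4\ln T)\beta$, both terms are of order $\sum_t|A_t|$ up to a logarithm, i.e.\ linear in $T$. You need to keep the weights you had in Step 2. The bias then costs $\sum_u\pi_s(u)\,(1+4\ln T)\beta/\pi_s(u)=(1+4\ln T)\beta|A_s|\le\gamma_s$. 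The second-order term contributes at most $2\eta(1+(1+4\ln T)\beta)\sum_u\hat\mu_s(u)$ per round, and since the floor gives $\mathbb{E}[\hat\mu_s(u)\mid\pi_s]\le 2$, this is $O(\eta|A_s|)$ in expectation. Together these give $O(\eta\ln T\sum_s|A_s|)$.

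(ii) The discretization error is not $\sum_t L(u_t)$; the potential bound already contains the realized reward $\mu_s(x_s)$. The error that actually arises is $\sum_s L(\mathrm{act}_s(u^*))$ along the comparator's chain, and the optimistic bias must absorb it. Because $u^*$ is still active, $\mathrm{conf}_T(u^*)\ge TL(u^*)$. Lemma \ref{lem:total-diameter} (via $t_d(u)\ge 2t_d(v)-1$ for the parent $v$) gives $\sum_s L(\mathrm{act}_s(u^*))\le 2TL(u^*)\log_2 T$. This comparison is why the bias carries the factor $4\ln T$.

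(iii) The cutoff $\epsilon\approx T^{-1/(d_z+2)}$ is not a free choice in an EXP3 analysis, because the variance term needs the actual $|A_T|$. The zooming dimension together with $t\ge\epsilon^{-2}/9$ only forces $\epsilon\gtrsim T^{-1/2}$, which allows up to $cT^{d_z/2}$ nodes. The missing ingredient is the probability-mass budget: each zoomed-in node of diameter $l$ has absorbed at least $1/(9l^2)$ of the total mass $T$ (Lemma \ref{lem:total-mass}). Filling the scales greedily under this budget gives $|A_T|\lesssim 2^dT^{d_z/(d_z+2)}c^{2/(d_z+2)}$.
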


Theorem \ref{thm:adversarial} shows that Algorithm \ref{alg:adversarial} achieves a regret bound of order $\tilde{O}\left(T^{\frac{d_z+1}{d_z+2}}+\sqrt{D}\right)$ for adversarial Lipschitz bandits with delayed feedback. This rate has the same structure as the stochastic-delay result in Theorem \ref{thm:oblivious}; the main difference lies in the definition of the zooming dimension. In both cases, the zooming dimension is upper bounded by the ambient dimension $d$ in the worst case.

When there are no delays, the first term $\tilde{O}\left(T^{\frac{d_z+1}{d_z+2}}\right)$ recovers the regret guarantee of \citet{podimata2021adaptive}. Moreover, the additional delay penalty $\tilde{O}(\sqrt{D})$ matches the dependence on the total delay budget established for traditional finite-armed bandits with delayed feedback. Unlike the stochastic setting, however, the combination of adversarial rewards and delayed feedback introduces additional difficulty, and achieving the optimal regret guarantee in Algorithm \ref{alg:adversarial} requires the agent to know both the zooming dimension $d_z$ and the total delay budget $D$.
\section{Analysis (Outline)}

\subsection{Stochastic Rewards: A Greedy-Based Analysis}

In this section, we present the key steps in the proof of Theorem \ref{thm:oblivious}. Consider a fixed phase $m=1,2,\ldots$. By applying the concentration inequalities established in \citet{liu2026lipschitz}, we obtain that, with high probability, throughout phase $m$, every arm contained in any node $u\in U_m$ has optimality gap bounded by $O(2^{-m})$. Consequently, the cumulative regret can be decomposed according to the phases as $$R(T)\le\sum_{m\ge 1}O(2^{-m})\times(\text{the total number of pulls in phase }m).$$

Therefore, the remaining challenge is to bound the number of pulls performed in each phase. In the absence of feedback delays, every node $u\in U_m$ is sampled exactly $v_m$ times, and hence the total number of pulls in phase $m$ is $|U_m|v_m$. Furthermore, the size of the active set $|U_m|$ can be controlled through the zooming dimension. Following the analysis of \citet{liu2026lipschitz}, this immediately yields the delay-free regret bound $$R(T)\lesssim T^{\frac{d_z+1}{d_z+2}}\left(c\log\frac{T}{\delta}\right)^{\frac{1}{d_z+2}}.$$

However, under arbitrary feedback delays, the algorithm may sample a node for more than $v_m$ times, since the feedback may not be observed. Let $v_m+a(u)$ denote the total number of pulls for node $u$ during phase $m$. Then, the additional regret incurred during phase $m$ can be bounded by $$\sum_{u\in U_m}a(u)\times O(2^{-m}).$$
Therefore, the remaining task is to control the total number of additional pulls $A:=\sum_{u\in U_m}a(u)$ in terms of the total delay budget $D$. To establish such a bound, we consider the following equivalent inverse problem: given that the algorithm suffers $A$ additional pulls, what is the minimum total delay required to generate such an amount of over-sampling? By lower bounding this minimum delay as a function of $A$, we can then derive an upper bound on $A$ in terms of the actual delay budget $D$.

For convenience, let $u_1,u_2,\ldots,u_{|U_m|}$ denote the order in which nodes are processed in the for-loop, and define $a_i:=a(u_i)$. Our analysis is based on the following two observations.

\textbf{Observation I.} For a fixed sequence $\{a_i\}$, the minimum total delay required to generate these additional pulls can be achieved by greedily postponing the feedback of the latest outstanding rewards. More specifically, suppose that node $u_i$ is pulled $v_m+a_i$ times in total. To create exactly $a_i$ additional pulls, the feedback corresponding to the $v_m$-th through $(v_m+a_i-1)$-th pulls should be delayed until the round of the $(v_m+a_i)$-th pull.

Using Observation I and carefully summing the induced delays, the minimum total delay associated with a fixed sequence $\{a_i\}$ can be written as
$$D=\sum_{i=1}^{|U_m|}\sum_{s=1}^{a_{u_i}}s\cdot\left[\sum_{j=i}^{|U_m|}1_{a_{u_j}\ge s-1}+\sum_{j=1}^{i-1}1_{a_{u_j}\ge s}\right].$$

\textbf{Observation II.} The minimum delay is achieved when the values of $a_i$ are sorted in non-decreasing order. (See Appendix \ref{appendix:proof-stochastic} for the proof.)

Using Observation II, without loss of generality, we assume that $a_1\le a_2\le\cdots\le a_{|U_m|}$. Define $w_s=\sum_{i=1}^{|U_m|}1\{a_i\ge s\}$, which represents the number of nodes with at least $s$ additional pulls. Under this ordering, the above expression simplifies to
$D=\sum_{s\ge 1}s\cdot w_s^2$.
Clearly $A\le D$, so we conclude with a Cauchy-Schwarz argument: $$A=\sum_{s=1}^{D}w_s=\sum_{s=1}^{D}\sqrt{s}w_s\cdot\frac{1}{\sqrt{s}}\le\sqrt{\sum_{s=1}^{D}sw_s^2\cdot\sum_{s=1}^D\frac{1}{s}}= O\left(\sqrt{D\log D}\right).$$

\subsection{Adversarial Rewards: Potential Analysis with Delay}

In this section, we outline the main steps in the proof of Theorem \ref{thm:adversarial}. We first introduce several notations that will be used throughout the analysis. Let $\tau$ be the same as in Algorithm \ref{alg:adversarial}. Let $A_\tau$ denote the corresponding active set after processing $\tau$ feedback signals, and define $p_\tau(u)=\frac{w_\tau(u)}{\sum_{v\in A_\tau}w_\tau(u)}$ for $u\in A_\tau$. For each original round $s$, let $\tau_0(s)$ denote the value of $\tau$ immediately before selecting the action $x_s$, and let $\tau_1(s)$ denote the value of $\tau$ immediately before incorporating the feedback $\mu_s(x_s)$.

Our analysis follows the potential-based proof of EXP3 and the approach of \citet{podimata2021adaptive}. We define the potential function $\Phi(\tau)=\ln\left(\sum_{u\in A_\tau}w_\tau(u)\right)$. Since the zooming step preserves the total weight, it does not change the potential function. Therefore, the potential changes only when the algorithm incorporates a received feedback signal.

Suppose that $\mu_s(x_s)$ is the $\tau$-th received feedback signal. By the weight update rule,
$$\begin{aligned}
    \Phi(\tau)-\Phi(\tau-1)&=\ln\left(\sum_{u\in A_\tau}p_{\tau-1}(u)\exp\left(\eta\hat{\mu}_s(\textbf{act}_s(u))\right)\right)\\
    &\le\ln\left(1+\eta\sum_{u\in A_\tau}p_{\tau-1}(u)\hat{\mu}_s(\textbf{act}_s(u))+\eta^2\sum_{u\in A_\tau}p_{\tau-1}(u)\hat{\mu}_s^2(\textbf{act}_s(u))\right)\\
    &\le\eta\sum_{u\in A_\tau}p_{\tau-1}(u)\hat{\mu}_s(\textbf{act}_s(u))+\eta^2\sum_{u\in A_\tau}p_{\tau-1}(u)\hat{\mu}_s^2(\textbf{act}_s(u)).
\end{aligned}$$
Here, the choices of $\eta$, $\beta$, and $\gamma_t$ guarantee that $\eta\hat{\mu}_s(\cdot)\le 1$. Therefore, the second-order expansion follows from the inequality $e^x\leq1+x+x^2$ for $x\leq1$, and the last step uses $\ln(1+x)\leq x$.

Recall that the estimated reward is defined as $\hat{\mu}_s(v)=\frac{\mu_s(x_s)1\{u_s=v\}+(1+4\ln T)\beta}{\pi_s(v)}$. To extract the actual cumulative reward term $\sum_{s=1}^T\mu_s(x_s)$ from the estimated rewards, it is convenient to replace the distribution $p_{\tau_1(s)}$ appearing in the potential analysis by the distribution $p_{\tau_0(s)}$ used when selecting the action. These two distributions differ only because of delayed feedback. Moreover, when $\gamma_s\le 1/2$, the sampling rule gives $p_{\tau_0(s)}(u)\le\pi_s(u)/(1-\gamma_s)\le(1+2\gamma_s)\pi_s(u)$. Together with the condition $\eta\hat{\mu}_s(\cdot)\le1$, this implies
$$\begin{aligned}
    \Phi(\tau)-\Phi(\tau-1)&\le2\eta\langle\max\{0,p_{\tau_1(s)}-p_{\tau_0(s)}\},\hat{\mu}_s\rangle+\sum_{u\in A_s}p_{\tau_0(s)}(u)\left[\eta\hat{\mu}_s(u)+\eta^2\hat{\mu}_s^2(u)\right]\\
    &\le2\eta\langle\max\{0,p_{\tau_1(s)}-p_{\tau_0(s)}\},\hat{\mu}_s\rangle\\
    &\quad+\eta\mu(x_s)+4\gamma_s\eta+O(\eta\beta\log T)\sum_{u\in A_s}\hat{\mu}_s(u).
\end{aligned}$$
The formal definition of the inner product term is given in Lemma \ref{lem:regret-bound}; we omit the intermediate algebraic steps for brevity. On the other hand, a direct lower bound on the final potential implies $$\Phi(T)-\Phi(0)=\ln\left(\sum_{u\in A_T}w_T(u)\right)\ge\ln w_T(u^*)=d\ln L(u^*)+\eta\sum_{s=1}^T\hat{\mu}_s(\textbf{act}_s(u^*))$$
for any node $u^*$. Combining the upper and lower bounds on the potential yields
$$\begin{aligned}
    \sum_{s=1}^T[\hat{\mu}_s(\textbf{act}_s(u^*))-\mu_s(x_s)]&\le\underbrace{-\frac{d\ln L(u^*)}{\eta}+4\sum_{s=1}^T\gamma_s+O(\beta\log T)\sum_{s=1}^T\sum_{u\in A_s}\hat{\mu}_s(u)}_{\text{Regret without delay}}\\
    &\quad+\underbrace{2\eta\sum_{s=1}^T\langle\max\{0,p_{\tau_1(s)}-p_{\tau_0(s)}\},\hat{\mu}_s\rangle}_{\text{Regret caused by delay}}.
\end{aligned}$$
This inequality provides an upper bound on the "estimated regret" of Algorithm \ref{alg:adversarial}. It naturally decomposes the regret into two components: the standard EXP3-type regret term that is independent of delays, and an additional term that captures the effect of delayed feedback. To obtain the final regret bound, we control these two terms separately through the following four steps.

\begin{enumerate}[i)]
    \item We first show that, for any fixed arm $x^*\in u^*$, the difference $\sum_{s=1}^T[\mu_s(x^*)-\hat{\mu}_s(\textbf{act}_s(u^*))]$ is sufficiently small with high probability. This allows us to convert the surrogate regret on the left-hand side of the previous inequality into the actual regret. (See Lemma \ref{lem:concentration-optimal}.)
    \item We then bound the delay-independent terms on the right-hand side using the final number of active nodes $|A_T|$. (See Lemma \ref{lem:regret-bound-highprob}.)
    \item Next, we control the growth of the zooming tree by showing that with high probability the number of activated nodes satisfies $O(2^dT^{\frac{d_z}{d_z+2}}c^\frac{2}{d_z+2}])$. This result follows from the zooming rule, the definition of the adversarial zooming dimension, and a greedy counting argument. (See Lemma \ref{lem:tree-size}.)
    \item Finally, we analyze the delay-dependent term. Taking expectation, we show that the total variation between the sampling distributions before and after receiving delayed feedback satisfies $\Vert\max\{0,p_{\tau_1(s)}-p_{\tau_0(s)}\}\Vert_1\lesssim\eta[\tau_1(s)-\tau_0(s)]$. By summing over all rounds and carefully accounting for the contribution of each delay, we show that the regret caused by delays is bounded by $O(\eta D)$. (See Lemmas \ref{lem:delay-expectation} and \ref{lem:delay-inner-product-expectation}.)
\end{enumerate}

Combining these results, we obtain $$R(T)\le\frac{O(d\log T)}{\eta}+\eta \cdot O(T|A_T|\ln T+D)\le\frac{O(d\log T)}{\eta}+\eta\cdot O\left(T^{\frac{2(d_z+1)}{d_z+2}}\ln T+D\right).$$
Balancing the learning rate $\eta$ then gives the desired regret bound. Note that selecting the optimal value of $\eta$ requires prior knowledge of both the total delay $D$ and the zooming dimension $d_z$.

\section{Conclusion}

This work studies Lipschitz bandits with arbitrary feedback delays, where the delay sequence is fixed arbitrarily before the interaction begins. We provide an elimination-based algorithm and an EXP3-based algorithm for stochastic and adversarial rewards, respectively. Both algorithms recover the state-of-the-art regret rates of their corresponding delay-free settings. Moreover, feedback delays introduce an additional regret penalty of $O(\sqrt{D\log D})$ in the stochastic setting and $O(\sqrt{dD\log T})$ in the adversarial setting, which have nearly the same order with respect to the total delay budget. These results provide further understanding of the impact of delayed feedback on Lipschitz bandit problems.

In our current analysis for adversarial rewards, a fixed learning rate $\eta$ is required to control the effect of delayed feedback, which further requires prior knowledge of total delay $D$ and zooming dimension $d_z$. A possible future direction is to develop adversarial bandit algorithms that automatically achieve the optimal regret rate without prior knowledge of problem parameters. 

% \section*{Impact Statement}
% Describe the broader impacts, limitations, and responsible-use considerations
% that are specific to this work.

% \section*{Acknowledgements}
% Add acknowledgements and funding information here. Remove this section for an
% anonymous draft when appropriate.

\bibliography{iclr2026_conference}

@article{continuum95,
author = {Agrawal, Rajeev},
title = {The Continuum-Armed Bandit Problem},
journal = {SIAM Journal on Control and Optimization},
volume = {33},
number = {6},
pages = {1926-1951},
year = {1995},
}

@inproceedings{kleinberg2008multi,
  title={Multi-armed bandits in metric spaces},
  author={Kleinberg, Robert and Slivkins, Aleksandrs and Upfal, Eli},
  booktitle={Proceedings of the fortieth annual ACM symposium on Theory of computing},
  pages={681--690},
  year={2008}
}

@article{kleinberg2019bandits,
  title={Bandits and experts in metric spaces},
  author={Kleinberg, Robert and Slivkins, Aleksandrs and Upfal, Eli},
  journal={Journal of the ACM (JACM)},
  volume={66},
  number={4},
  pages={1--77},
  year={2019},
  publisher={ACM New York, NY, USA}
}

@article{feng2022lipschitz,
  title={Lipschitz bandits with batched feedback},
  author={Feng, Yasong and Wang, Tianyu and others},
  journal={Advances in Neural Information Processing Systems},
  year={2022}
}

@inproceedings{
liu2026lipschitz,
title={Lipschitz Bandits with Stochastic Delayed Feedback},
author={Zhongxuan Liu and Yue Kang and Thomas Lee},
booktitle={The Fourteenth International Conference on Learning Representations},
year={2026},
}

@inproceedings{podimata2021adaptive,
  title={Adaptive discretization for adversarial lipschitz bandits},
  author={Podimata, Chara and Slivkins, Alex},
  booktitle={Conference on Learning Theory},
  pages={3788--3805},
  year={2021},
  organization={PMLR}
}

@inproceedings{
kang2023robust,
title={Robust Lipschitz Bandits to Adversarial Corruptions},
author={Yue Kang and Cho-Jui Hsieh and Thomas Chun Man Lee},
booktitle={Thirty-seventh Conference on Neural Information Processing Systems},
year={2023},
}

@inproceedings{
nguyen2025nonstationary,
title={Non-Stationary Lipschitz Bandits},
author={Nicolas Nguyen and Solenne Gaucher and Claire Vernade},
booktitle={The Thirty-ninth Annual Conference on Neural Information Processing Systems},
year={2025},
}

@inproceedings{lu2019optimal,
  title={Optimal Algorithms for Lipschitz Bandits with Heavy-tailed Rewards},
  author={Lu, Shiyin and Wang, Guanghui and Hu, Yao and Zhang, Lijun},
  booktitle={International Conference on Machine Learning},
  pages={4154--4163},
  year={2019},
  organization={PMLR}
}

@misc{chakraborty2026multiagent,
      title={Multi-Agent Lipschitz Bandits}, 
      author={Sourav Chakraborty and Amit Kiran Rege and Claire Monteleoni and Lijun Chen},
      year={2026},
      eprint={2602.16965},
      archivePrefix={arXiv},
      primaryClass={cs.LG}
}

@inproceedings{yi2026quantum,
  title={Quantum lipschitz bandits},
  author={Yi, Bongsoo and Kang, Yue and Li, Yao},
  booktitle={Proceedings of the AAAI Conference on Artificial Intelligence},
  volume={40},
  number={33},
  pages={27844--27851},
  year={2026}
}

@inproceedings{joulani2013online,
  title={Online learning under delayed feedback},
  author={Joulani, Pooria and Gyorgy, Andras and Szepesv{\'a}ri, Csaba},
  booktitle={International conference on machine learning},
  year={2013},
}

@inproceedings{bobw2022,
 author = {Masoudian, Saeed and Zimmert, Julian and Seldin, Yevgeny},
 booktitle = {Advances in Neural Information Processing Systems},
 title = {A Best-of-Both-Worlds Algorithm for Bandits with Delayed Feedback},
 year = {2022}
}

@misc{vernade2017stochastic,
      title={Stochastic Bandit Models for Delayed Conversions}, 
      author={Claire Vernade and Olivier Cappé and Vianney Perchet},
      year={2017},
      eprint={1706.09186},
      archivePrefix={arXiv},
      primaryClass={cs.LG},
}

@inproceedings{vernade20a,
  title = 	 {Linear bandits with Stochastic Delayed Feedback},
  author =       {Vernade, Claire and Carpentier, Alexandra and Lattimore, Tor and Zappella, Giovanni and Ermis, Beyza and Br{\"u}ckner, Michael},
  booktitle = 	 {Proceedings of the 37th International Conference on Machine Learning},
  year = 	 {2020},
}

@InProceedings{zimmert20a,
  title = 	 {An Optimal Algorithm for Adversarial Bandits with Arbitrary Delays},
  author =       {Zimmert, Julian and Seldin, Yevgeny},
  booktitle = 	 {Proceedings of the Twenty Third International Conference on Artificial Intelligence and Statistics},
  year = 	 {2020},
}

@inproceedings{thune2019nonstochastic,
 author = {Thune, Tobias Sommer and Cesa-Bianchi, Nicol\`{o} and Seldin, Yevgeny},
 booktitle = {Advances in Neural Information Processing Systems},
 title = {Nonstochastic Multiarmed Bandits with Unrestricted Delays},
 year = {2019}
}

@inproceedings{bistritz2019onlineexp3,
 author = {Bistritz, Ilai and Zhou, Zhengyuan and Chen, Xi and Bambos, Nicholas and Blanchet, Jose},
 booktitle = {Advances in Neural Information Processing Systems},
 editor = {H. Wallach and H. Larochelle and A. Beygelzimer and F. d\textquotesingle Alch\'{e}-Buc and E. Fox and R. Garnett},
 pages = {},
 publisher = {Curran Associates, Inc.},
 title = {Online EXP3 Learning in Adversarial Bandits with Delayed Feedback},
 volume = {32},
 year = {2019}
}

@article{auer2002nonstochastic,
  title={The nonstochastic multiarmed bandit problem},
  author={Auer, Peter and Cesa-Bianchi, Nicolo and Freund, Yoav and Schapire, Robert E},
  journal={SIAM journal on computing},
  volume={32},
  number={1},
  pages={48--77},
  year={2002},
  publisher={SIAM}
}
\bibliographystyle{dilab_ref}

% Remove this block when the paper has no appendix.
\makeappendixtoc
\appendix
\section{Proof of Theorem \ref{thm:oblivious}}
\label{appendix:proof-stochastic}

In this section, we provide proofs of Theorem \ref{thm:oblivious}. 

\subsection{Regret Bound under Clean Events}

This part follows the analysis by \citet{liu2026lipschitz}. First, we define the following clean event.
\begin{equation}
    \label{eq:clean}
    \mathcal{E}:=\left\{|\hat{\mu}(u)-\mu(x)|\le 4\cdot 2^{-m},\forall x\in u\in U_m,\forall 1\le m\le m^*,\right\}.
\end{equation}
Here, $m^*$ denotes the number of completed phases (i.e., entered pruning step), and $\hat{\mu}(u)$ is the average of received rewards in the corresponding phase. Note that when $u$ is removed $U_m^+$, this average is not updated anymore.

Then, if the event $\mathcal{E}$ holds, we claim that the optimal arm $x^*\in X$ is never pruned. We can prove by induction. Suppose it has not been pruned when phase $m$ begins. Let $x^*\in u\in U_m$, $\hat{\mu}(u^*)=\max_{u\in U_m}\hat{\mu}(u)$, and pick any $x\in u^*$. Then $$\hat{\mu}(u^*)-\hat{\mu}(u)\le 4\cdot 2^{-m}+\mu(x)-\mu(x^*)+4\cdot 2^{-m}\le 8\cdot 2^{-m},$$
and $u$ is not pruned according to the pruning rule.

Moreover, under event $\mathcal{E}$, we also claim that the optimality gap $\Delta(x)$ for $x\in u\in U_m$ is at most $32\cdot 2^{-m}$. This claim is trivially holds for $m=1$. For $m>1$, since the optimal arm $x^*$ is not pruned during phase $m-1$, let $x^*\in u^*\in U_{m-1}$. Moreover, if $x\in u\in\mathcal{B}_{m-1}$ is not pruned, Hence, $$\Delta(x)=\mu(x^*)-\mu(x)\le\hat{\mu}(u^*)-\hat{\mu}(u)+8\cdot 2^{1-m}\le 32\cdot 2^{-m}.$$

With these claims, we can directly control the regret as follows:
$$\begin{aligned}
    R(T)&=\sum_{m=1}^{m^*+1}\sum_{u\in U_m}\sum_{i=1}^{n(B)}\Delta(x_{B,i})\\
    &\le16\cdot 2^{-M}\cdot T+32\sum_{m=1}^{M}\sum_{u\in U_m}n(u)\cdot 2^{-m}\\
    &\lesssim 2^{-M}\cdot T+\sum_{m=1}^{M}\left[|U_m|\cdot v_m\cdot 2^{-m}+2^{-m}\sum_{u\in U_m}a_u\right]\\
    &\lesssim 2^{-M}\cdot T+\sum_{m=1}^{M}\left[|U_m|\cdot(\log T+\log(1/\delta))2^m+2^{-m}\sum_{u\in U_m}a_u\right],
\end{aligned}$$
where $1\le M\le m^*$ is determined later, and $a_u$ is the number of "additional plays" of the node $u$, that is, $a_u:=n(u)-v_m$.

Note that for all $x\in u\in U_m$, its optimality gap is bounded by $32\cdot 2^{-m}=16L(u)$. By the definition of zooming dimension, we have $$|U_m|\le c\cdot 2^{d_z(m-1)}.$$
So,
$$\begin{aligned}
    R(T)&\lesssim 2^{-M}\cdot T+\sum_{m=1}^M[\log (T/\delta)]c\cdot 2^{(d_z+1)m}+\sum_{m=1}^M2^{-m}\sum_{u\in U_m}a_B\\
    &\lesssim 2^{-M}\cdot T+c\log(T/\delta)2^{(d_z+1)M}+\sum_{m=1}^M2^{-m}\sum_{u\in U_m}a_B.
\end{aligned}$$
Following \citet{liu2026lipschitz}, by taking $M=\frac{\log\frac{T}{2^dc\log T}}{d_z+2}$, we have
$$R(T)\lesssim T^{\frac{d_z+1}{d_z+2}}\left(c\log\frac{T}{\delta}\right)^{\frac{1}{d_z+2}}+\sum_{m=1}^M2^{-m}\sum_{B\in\mathcal{B}_m}a_B.$$

\subsection{Proof of Theorem \ref{thm:oblivious}}

According to Lemma \ref{lem:clean-prob}, the clean event $\mathcal{E}$ happens with probability at least $1-\delta$. So we only need to control $\sum_{u\in U_m}a_u$, given that $\sum_{t=1}^Td_t\le D$.

Consider an adversary that tries to maximize $\sum_{u\in U_m}a_u$. During each phase $m$, it is equivalent to consider the problem: fix $A=\sum_{u\in U_m}a_m$, and then minimize the total amount of delays needed to achieve it.

Let the nodes in $U_m$ be ordered as $u_1,u_2,\dots,u_{|U_m|}$, which is the order that are played during the $m$-th phase. First, if we fix $a_{u_i}$, the optimal strategy for the adversary is to delay all of the $v_m$-th to $(v_{m}+a_{u_i}-1)$-th observation to the step when we play $u_i$ for the $(v_m+a_{u_i})$-th time. Then, with this strategy, given all $a_{u_i}$, the total delay used can be written as follows: $$D=\sum_{i=1}^{|U_m|}\sum_{s=1}^{a_{u_i}}s\cdot\left[\sum_{j=i}^{|U_m|}1_{a_{u_j}\ge s-1}+\sum_{j=1}^{i-1}1_{a_{u_j}\ge s}\right].$$
To explain this formula, let $t_{i,s}$ be the step we play ball $u_i$ for the $(v_m+s)$-th time, then $\sum_{j=i}^{|U_m|}1_{a_{u_j}\ge s-1}+\sum_{j=1}^{i-1}1_{a_{u_j}\ge s}$ is exactly $t_{i_s}-t_{i,s-1}$. It is then added to the delay for the $s$ former plays in the ball $u_i$, increasing the total delay by $s\cdot(t_{i,s}-t_{i,s-1})$.

Now, we prove a claim: the optimal choice of $\{a_{u_i}\}$ satisfies $a_{u_i}\le a_{u_{i+1}}$ for $i=1,2,\dots,|U_m|-1$. To prove the claim, consider an arbitrary choice of $\{a_{u_i}\}$, if $p=a_{u_i}>a_{u_{i+1}}=q$, by interchanging $a_{u_i}$ and $a_{u_{i+1}}$, the only changed part in the formulation of minimum total delay $D$ is $$\sum_{s=q+1}^ps\cdot1_{a_{u_{i+1}}\ge s-1},$$
which is changed to $$\sum_{s=q+1}^ps\cdot1_{a_{u_{i}}\ge s}.$$
By comparing the indicators, we can observe that $D$ is decreased by $(q+1)$. Therefore the claim holds.

With the above claim, we assume that the sequence $\{a_{u_i}\}$ is non-decreasing. Let $w_s:=\sum_{i=1}^{|U_m|}1_{a_{u_i}\ge s}$, that is, the "width" of the $s$-th layer, then $\sum_{s\ge 1}w_s=\sum_{u\in U_m}a_B=A$, and we can again rewrite $D$ as $$D=\sum_{s\ge 1}s\cdot w_s^2.$$
Clearly $A\le D$, so $w_s=0$ for $s>D$. Using Cauchy-Schwarz inequality, 
$$A=\sum_{s=1}^D\sqrt{s}w_s\frac{1}{\sqrt{s}}\le\sqrt{\sum_{s\ge1}sw_s^2}\cdot\sqrt{\sum_{s=1}^D\frac{1}{s}}\le O(\sqrt{D\log D}).$$

Finally, plug it into the regret formula with $\sum_{m=1}^Mr_m=O(1)$, we obtain $$R(T)\lesssim T^{\frac{d_z+1}{d_z+2}}\left(c\log\frac{T}{\delta}\right)^{\frac{1}{d_z+2}}+\sqrt{D\log D}.$$

\subsection{Auxiliary Lemmas}

\label{appendix:lemmas}

We adapt the following lemma from \citet{liu2026lipschitz}.

\begin{lemma}(Lemma 11 by \citet{liu2026lipschitz})
    Let $m^*$ be the last round that it is complete in Algorithm \ref{alg:prune-oblivious} (i.e., entered the pruning step). Define the following clean event $$\mathcal{E}':=\left\{|\hat{\mu}_m(u)-\mu(x)|\le 2\cdot 2^{-m}+\sqrt{\frac{4\ln T+2\ln(2/\delta)}{v_m}},\forall x\in u\in U_m,\forall1\le m\le m^*\right\},$$
    where $\hat{\mu}_m(u)$ is calculated when $u$ is removed from $U_m^+$. Then, it holds that $\Pr[\mathcal{E}']\ge1-\delta$.
    \label{lem:clean-prob}
\end{lemma}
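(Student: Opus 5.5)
The plan is the standard sub-Gaussian concentration plus union bound; delays play no role in the argument. The key point is that the estimate $\hat{\mu}_m(u)$ is formed from exactly $v_m$ received rewards. These are the first $v_m$ rewards of node $u$ to arrive, and arrival order is fixed by the oblivious delay sequence $\{d_t\}$, which is independent of the noise. Conditionally on the algorithm's choices, each such reward is $r_t=\mu(x_t)+\epsilon_t$ with $x_t\in u$.

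For a fixed phase $m$ and node $u\in U_m$, I decompose
\[
\hat{\mu}_m(u)-\mu(x)=\frac{1}{v_m}\sum_{i}\bigl(\mu(x_{t_i})-\mu(x)\bigr)+\frac{1}{v_m}\sum_i\epsilon_{t_i}.
\]
The first (bias) term is handled by Lipschitz continuity. Since $x,x_{t_i}\in u$ and $L(u)=2^{-(m-1)}=2\cdot 2^{-m}$, each summand is at most $2\cdot 2^{-m}$ in absolute value.

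The second (noise) term requires some care, because the indices $t_i$ are chosen adaptively.
\begin{itemize}
\item I use the standard ``reward tape'' device: for each node $u$ and each $k\le T$, pre-draw an independent noise variable $\epsilon_{u,k}$ representing the $k$-th observed reward of $u$.
\item Because delays are oblivious and noise does not affect which rounds are observed when, this reproduces the law of the process.
\item The average of the first $v_m$ tape entries of a fixed node is sub-Gaussian with variance proxy $1/v_m$.
\item The tail bound $\Pr\{|\epsilon|>\xi\}\le 2e^{-\xi^2/2}$ then gives deviation at most $\sqrt{2\ln(2/\delta')/v_m}$ with probability at least $1-\delta'$. Up to constants, this holds by Hoeffding-type sub-Gaussian averaging.
\end{itemize}

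I then take a union bound. Nodes ever active lie in the zooming tree up to depth $O(\log T)$, and the number of phases completed with $v_m\le T$ is $O(\log T)$. The number of (node, phase) pairs can be bounded by $T^2$ by restricting to nodes that are actually played, of which there are at most $T$ per phase and $\log T$ phases. Choosing $\delta'=\delta/T^2$ gives deviation $\sqrt{(4\ln T+2\ln(2/\delta))/v_m}$, which matches the stated event $\mathcal{E}'$ with total failure probability at most $\delta$.

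The main obstacle is making the union bound legitimate over a random, adaptively generated collection of nodes. The tape construction resolves this: union-bound over all tree nodes played at most $T$ times in total, or equivalently index the tape by (phase, slot) pairs so that only $\le T\log T$ tapes are ever consulted. Once $v_m=(2\ln T+\ln(2/\delta))2^{2m-1}$ is plugged in, the square-root term equals $2^{-m}\cdot 2$, and $\mathcal{E}'$ implies $\mathcal{E}$.
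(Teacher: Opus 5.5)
The paper gives no proof of this lemma. It imports it from Lemma 11 of \citet{liu2026lipschitz}, which was proved for stochastic delays, and tacitly assumes it carries over to arbitrary oblivious delays. Your proposal is the standard argument behind that lemma, and it reproduces the constant exactly, since $2\ln(2/\delta')=4\ln T+2\ln(2/\delta)$ for $\delta'=\delta/T^2$. It rests on three ingredients:
\begin{enumerate}
\item a Lipschitz bias of at most $L(u)=2\cdot 2^{-m}$;
\item sub-Gaussian concentration of an average of exactly $\lceil v_m\rceil$ observed noises;
\item a union bound with $\delta'=\delta/T^2$.
\end{enumerate}
What your version adds is that it makes the transfer to arbitrary delays explicit. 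Inside a phase, the pull schedule depends only on arrival counts, hence only on $U_m$, its processing order, and the fixed delays. So the rounds averaged into $\hat{\mu}_m(u)$ are determined by the history before phase $m$. Conditionally on that history, the averaged noises are independent and sub-Gaussian.

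Two points need tightening.

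First, the exact constant requires the noise to be $1$-sub-Gaussian in the moment-generating-function sense, $\mathbb{E}e^{\lambda\epsilon_t}\le e^{\lambda^2/2}$. The tail condition $\Pr\{|\epsilon_t|>\xi\}\le 2e^{-\xi^2/2}$ stated in the paper only gives the averaged bound up to a constant factor. For example, a symmetric $\pm\sqrt{2\ln 2}$ variable satisfies the tail condition but has variance $2\ln 2>1$. Your hedge ``up to constants'' would therefore not yield $\mathcal{E}'$ as stated; you should adopt the MGF reading explicitly, as the cited lemma implicitly does.

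Second, of your two ways of legitimizing the union bound, only one gives the count $T^2$. That is the (phase, slot) indexing, or equivalently a union bound over $u\in U_m$ taken conditionally on the history at the start of phase $m$. It works because a completed phase has $|U_m|\,v_m\le T$, and only $O(\log T)$ phases can complete (completion needs $v_m\le T$). Hence at most $T\cdot O(\log T)\le T^2$ deterministic indices are needed.

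A union bound over all tree nodes up to the maximal depth does not work. There are roughly $(T/\log T)^{d/2}$ such nodes, which exceeds $T^2$ once $d>4$, and it would inflate the $4\ln T$ term to order $d\ln T$. Drop that option and keep the (phase, slot) argument.
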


\section{Proof of Theorem \ref{thm:adversarial}}
\label{appendix:proof-adversarial}

For reader's convenience, we recall several main steps in the algorithm. First, the counter $\tau$ refers to how many reward signals are received. During each round, the agent selects a random node $u$ according to the distribution $$\pi_t(u)=(1-\gamma_t)p_t+\gamma_t/|A_t|,$$
where $p_t$ is the distribution with probability mass proportional to the weight $w_\tau$. When receive a feedback from round $s$, for all $u\in A_\tau$, update $$\begin{aligned}
    w_{\tau+1}(u)&\gets w_{\tau}(u)\cdot\exp\left(\eta\hat{\mu}_{s}(\textbf{act}_{s}(u))\right)\\
    &=w_\tau(u)\cdot\exp\left(\eta\left(\frac{\mu(x_s)1\{u_s=\textbf{act}_s(u)\}+(1+4\ln T)\beta}{\pi_s(\textbf{act}_s(u))}\right)\right).
\end{aligned}$$
In addition, after each round, we calculate $$\textnormal{conf}_t(u)=\frac{1}{\beta}+\sum_{s=1}^t\frac{\beta}{\pi_s(\textbf{act}_s(u))},$$
and zoom in a node $u$ if and only if $\textnormal{conf}_t(u)<t\cdot L(u)$.

Moreover, throughout this section, we assume that $t+d_t$ is at most $T$. Clearly, this assumption does not affect the performance of the algorithm.

\subsection{Main Proof}

First, we use the following lemma to control the "estimated" regret.

\begin{lemma}
    Assume the parameter satisfies $\eta\le\beta\le 1$, $\gamma_t=(2+4\ln T)\beta|A_t|\le\frac{1}{2}$, then for any fixed active node $u^*$ in the end,
    $$\begin{aligned}
        \sum_{s=1}^T\hat{\mu}_s(\textnormal{\textbf{act}}_s(u^*))-\sum_{s=1}^T\mu(x_s)&\le-\frac{d\ln L(u^*)}{\eta}+4\sum_{s=1}^T\gamma_s+4(1+2\ln T)\beta\sum_{s=1}^T\sum_{u\in A_s}\hat{\mu}_s(u)\\
        &\quad+2\sum_{s=1}^T\langle\max\{0,p_{\tau_1(s)}-p_{\tau_0(s)}\},\hat{\mu}_s\rangle.
    \end{aligned}$$
    Here, the "inner-product" term is defined by $$\begin{aligned}
        &\langle\max\{0,p_{\tau_1(s)}-p_{\tau_0(s)}\},\hat{\mu}_s\rangle\\
        :=&\sum_{u\in A_{\tau_1(s)}}\max\left\{p_{\tau_1(s)}(u)-p_{\tau_0(s)}(\textbf{act}_s(u))\frac{L(u)^d}{L(\textbf{act}_s(u))^d},0\right\}\hat{\mu}_s(\textbf{act}_s(u)).
    \end{aligned}$$
    \label{lem:regret-bound}
\end{lemma}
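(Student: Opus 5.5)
The proof is a potential-function argument on $\Phi(\tau)=\ln\bigl(\sum_{u\in A_\tau}w_\tau(u)\bigr)$, indexed by the feedback counter $\tau$, with the weights represented through equation~\ref{eq:weight}. The starting observation is that zooming preserves total weight, since a parent's weight is split equally among its $2^d$ children. So $\Phi$ changes only when a feedback signal is incorporated, and $\Phi(0)=0$. For the lower bound, we keep only the term of the fixed final active node $u^*$. By equation~\ref{eq:weight}, $w_T(u^*)=L(u^*)^d\exp(\eta\sum_s\hat{\mu}_s(\textbf{act}_s(u^*)))$, which gives $\Phi(\text{final})\ge d\ln L(u^*)+\eta\sum_{s=1}^T\hat{\mu}_s(\textbf{act}_s(u^*))$. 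Here we use the standing assumption that every feedback arrives by round $T$.

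For the upper bound, suppose the $\tau$-th signal comes from round $s$. Then
\[
\Phi(\tau)-\Phi(\tau-1)=\ln\sum_{u\in A_{\tau}}p_{\tau-1}(u)e^{\eta\hat{\mu}_s(\textbf{act}_s(u))}.
\]
First we check $\eta\hat{\mu}_s\le 1$. Since $\pi_s(v)\ge\gamma_s/|A_s|=(2+4\ln T)\beta$, we get $\hat{\mu}_s(v)\le(1+(1+4\ln T)\beta)/((2+4\ln T)\beta)\le 1/\beta$, and $\eta\le\beta$ finishes it. Then $e^x\le1+x+x^2$ and $\ln(1+x)\le x$ give
\[
\Phi(\tau)-\Phi(\tau-1)\le\sum_u p_{\tau_1(s)}(u)\bigl[\eta\hat{\mu}_s+\eta^2\hat{\mu}_s^2\bigr](\textbf{act}_s(u)).
\]
Next we compare with the sampling distribution. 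We group descendants $u$ of each round-$s$ node $a=\textbf{act}_s(u)$ and split
\[
p_{\tau_1(s)}(u)\le p_{\tau_0(s)}(a)\frac{L(u)^d}{L(a)^d}+\max\Bigl\{0,\,p_{\tau_1(s)}(u)-p_{\tau_0(s)}(a)\frac{L(u)^d}{L(a)^d}\Bigr\}.
\]
Summed over the descendants of $a$, the first part gives exactly $p_{\tau_0(s)}(a)$, because the volume fractions sum to one. On the excess part we bound $\eta\hat{\mu}+\eta^2\hat{\mu}^2\le2\eta\hat{\mu}$ using $\eta\hat{\mu}\le1$. This produces the term $2\eta\langle\max\{0,p_{\tau_1(s)}-p_{\tau_0(s)}\},\hat{\mu}_s\rangle$ as defined in the statement. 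One subtlety needs care: $p_{\tau_0(s)}$ is defined on $A_{\tau_0(s)}$, while $\hat{\mu}_s$ lives on $A_s$. Nodes zoomed within round $s$ before the draw are handled by the same weight-splitting identity, so the decomposition stays consistent.

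It remains to bound the main part $\sum_{a\in A_s}p_{\tau_0(s)}(a)[\eta\hat{\mu}_s(a)+\eta^2\hat{\mu}_s(a)^2]$. Using $p_{\tau_0(s)}(a)\le\pi_s(a)/(1-\gamma_s)\le(1+2\gamma_s)\pi_s(a)$, the linear part gives
\[
\sum_a\pi_s(a)\hat{\mu}_s(a)=\mu_s(x_s)+(1+4\ln T)\beta|A_s|.
\]
We multiply by $(1+2\gamma_s)$, use $\mu_s\le1$, and absorb $(1+4\ln T)\beta|A_s|\le\gamma_s$, so this part contributes at most $\eta\mu_s(x_s)+4\eta\gamma_s$ after adjusting constants. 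For the quadratic part, $\pi_s(a)\hat{\mu}_s(a)\le 1+(1+4\ln T)\beta\cdot\frac{1}{\pi_s(a)}\cdot\pi_s(a)$. More directly, $\pi_s(a)\hat{\mu}_s(a)^2=\hat{\mu}_s(a)\cdot(\mu_s1\{\cdot\}+(1+4\ln T)\beta)$. The non-indicator piece gives $(1+4\ln T)\beta\hat{\mu}_s(a)$. The indicator piece is $\hat{\mu}_s(u_s)$ for the sampled node only, which is bounded by $\sum_a\hat{\mu}_s(a)$ times a factor, since $\eta\le\beta$ and this is a single term. So $\eta^2\cdot$(quadratic part) $\le\eta\cdot 4(1+2\ln T)\beta\sum_{a\in A_s}\hat{\mu}_s(a)$, up to the factor $(1+2\gamma_s)\le2$.

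Finally we sum over all $T$ signals. Each round $s$ is incorporated exactly once, so the upper bound telescopes to
\[
\eta\sum_s\mu_s(x_s)+4\eta\sum_s\gamma_s+4(1+2\ln T)\eta\beta\sum_s\sum_{a}\hat{\mu}_s(a)+2\eta\sum_s\langle\cdot,\cdot\rangle.
\]
Comparing with the lower bound and dividing by $\eta$ gives the claim; the constant in front of the delay term comes out as $2$ after dividing by $\eta$. The main obstacle is the bookkeeping in the distribution-comparison step. The active sets at times $\tau_0(s)$, $\tau_1(s)$ and round $s$ differ because of zooming, so the volume-fraction splitting must be justified carefully through equation~\ref{eq:weight}.
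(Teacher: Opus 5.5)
Your proposal is correct and follows the paper's proof essentially step for step. It uses the same potential $\Phi(\tau)$ with the single-node lower bound $d\ln L(u^*)+\eta\sum_s\hat{\mu}_s(\textbf{act}_s(u^*))$, and the expansion $e^x\le 1+x+x^2$ justified by $\eta\hat{\mu}_s\le 1$. It makes the same split of $p_{\tau_1(s)}$ into a volume-weighted $p_{\tau_0(s)}$ part, controlled via $p_{\tau_0(s)}\le(1+2\gamma_s)\pi_s$, and a nonnegative delay excess, bounded via $\eta\hat{\mu}_s+\eta^2\hat{\mu}_s^2\le 2\eta\hat{\mu}_s$.

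Your explicit splitting $p_{\tau_1(s)}(u)\le p_{\tau_0(s)}(a)L(u)^d/L(a)^d+\max\{0,\cdot\}$ is in fact a cleaner justification of the paper's term (a), which the paper first writes without the $\max$. For the quadratic term, you should write out $\eta^2\mu_s(x_s)\hat{\mu}_s(u_s)\le\eta\beta\sum_{a\in A_s}\hat{\mu}_s(a)$ rather than "a factor". Combined with the $(1+4\ln T)\beta$ piece and $(1+2\gamma_s)\le 2$, this gives exactly $4(1+2\ln T)\eta\beta\sum_a\hat{\mu}_s(a)$.
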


\begin{proof}
    Consider the potential function $$\Phi(\tau)=\ln\left(\sum_{u\in A_\tau}w_\tau(u)\right)=\ln\left(\sum_{u\in A_\tau}L(u)^d\cdot\exp\left(\eta\sum_{\iota=1}^\tau\hat{\mu}_\tau(\textbf{act}_\iota(u))\right)\right),$$
    where for notational simplicity, we use $A_\tau=A_t$ if the $\tau$-th observation is observed after round $t$.
    
    First, for any node $u^*$ that is active after the $\tau$-th used observation
    $$\begin{aligned}
        \Phi(\tau)-\Phi(0)&=\Phi(\tau)-0\\
        &=\ln\left(\sum_{u\in A_\tau}w_{\tau}(u)\right)\\
        &\ge\ln\left(w_{\tau}(u^*)\right)\\
        &=d\ln L(u^*)+\eta\sum_{s=1}^T\hat{\mu}_{s}(\textbf{act}_s(u^*)).
    \end{aligned}$$

    On the other hand, we want to upper-bound the potential function. We have two steps to do during the process: updating weight using EXP3 rule, and zoom-in some of the nodes using the zoom-in rule. However, zooming-in the nodes does not change the potential function. So, 
    $$\begin{aligned}
        \Phi(\tau)-\Phi(\tau-1)
        &=\ln\left(\frac{\sum_{u\in A_{\tau-1}}w_{\tau-1}(u)\exp\left(\eta\hat{\mu}_{s}(\textbf{act}_s(u))\right)}{\sum_{u\in A_{\tau-1}}w_{\tau-1}(u)}\right)\\
        &=\ln\left(\sum_{u\in A_{\tau-1}}p_{\tau-1}(u)\exp\left(\eta\hat{\mu}_{s}(\textbf{act}_s(u))\right)\right).
    \end{aligned}$$
    Recall that $$\eta\hat{\mu}_{s}(\textbf{act}_s(u))\le\frac{1+(1+4\ln T)\beta}{\pi_s(\textbf{act}_s(u))}\le\frac{|A_s|(1+(1+4\ln T)\beta)\eta}{\gamma_s}\le 1.$$
    So by the inequality $e^x\le 1+x+x^2$ for $x\le 1$ and the inequality $\ln(1+x)\le x$,
    $$\begin{aligned}
        \Phi(\tau)-\Phi(\tau-1)&\le\ln\left(\sum_{u\in A_{\tau-1}}p_{\tau-1}(u)\left(1+\eta\hat{\mu}_{s}(\textbf{act}_s(u))+(\eta\hat{\mu}_{s}(\textbf{act}_s(u)))^2\right)\right)\\
        &=\ln\left(1+\eta\sum_{u\in A_{\tau-1}}p_{\tau-1}(u)\hat{\mu}_{s}(\textbf{act}_s(u))+\eta^2\sum_{u\in A_{\tau-1}}p_{\tau-1}(u)\hat{\mu}_{s}^2(\textbf{act}_s(u))\right)\\
        &\le\eta\sum_{u\in A_{\tau-1}}p_{\tau-1}(u)\hat{\mu}_{s}(\textbf{act}_s(u))+\eta^2\sum_{u\in A_{\tau-1}}p_{\tau-1}(u)\hat{\mu}_{s}^2(\textbf{act}_s(u))\\
        &\le \Bigg[\sum_{u\in A_{\tau-1}}p_{\tau-1}(u)(\eta\hat{\mu}_{s}(\textbf{act}_s(u))+\eta^2\hat{\mu}^2_{s}(\textbf{act}_s(u)))\\
        &\qquad-\sum_{u\in A_{s}}p_{\tau_0(s)}(u)(\eta\hat{\mu}_{s}(u)+\eta^2\hat{\mu}^2_{s}(u))\Bigg]\\
        &\quad+\sum_{u\in A_{s}}p_{\tau_0(s)}(u)(\eta\hat{\mu}_{s}(u)+\eta^2\hat{\mu}^2_{s}(u))\\
        &\le\underbrace{2\eta\sum_{u\in A_{\tau-1}}\left(p_{\tau-1}(u)-p_{\tau_0(s)}(\textbf{act}_s(u))\frac{L(u)^d}{L(\textbf{act}_s(u))^d}\right)\hat{\mu}_s(\textbf{act}_s(u))}_{\text{(a)}}\\
        &\quad+\underbrace{\sum_{u\in A_{s}}p_{\tau_0(s)}(u)(\eta\hat{\mu}_{s}(u)+\eta^2\hat{\mu}^2_{s}(u))}_{\text{(b)}}.
    \end{aligned}$$
    
    We now analyze (a) and (b). For part (a), we view it as an inner-product. Specifically, the first part compares the probability at timestep $\tau-1$ and the probability at timestep $\tau(s)$, when we zoom-in both the active set into $A_{\tau-1}$ for comparison. Note that the zoom-in step does not change (a). We write $\tau_0(s)$ be the value of $\tau$ before computing $\pi_s$, and $\tau_1(s)$ be the value of $\tau$ immediately before using the feedback $g_s(x_s)$. Then, we formally write $$\text{(a)}\le2\eta\langle\max\{0,p_{\tau_1(s)}-p_{\tau_0(s)}\},\hat{\mu}_s\rangle.$$
    
    Part (b) represents what would the potential change if there is no delay. We now analyze part (b). By that $\pi_s(u)=(1-\gamma_s)p_{\tau_0(s)}(u)+\gamma_s/|A_s|$ and the assumption $\gamma_s\le\frac{1}{2}$ for all $s$, $p_{\tau_0(s)}(u)\le\frac{1}{1-\gamma_s}\pi_s(u)\le(1+2\gamma_s)\pi_s(u)$, so
    $$\begin{aligned}
        \text{(b)}&\le(1+2\gamma_s)\sum_{u\in A_{s}}\pi_{s}(u)(\eta\hat{\mu}_{s}(u)+\eta^2\hat{\mu}_{s}(u))\\
        &\le(1+2\gamma_s)\eta\sum_{u\in A_s}\pi_s(u)\frac{\mu(x_s)1\{u_s=u\}+(1+4\ln T)\beta}{\pi_s(u)}\\
        &\quad+(1+2\gamma_s)\eta^2\sum_{u\in A_s}\left(\pi_s(u)\frac{\mu(x_s)1\{u_s=u\}+(1+4\ln T)\beta}{\pi_s(u)}\right)\hat{\mu}_s(u)\\
        &\le (1+2\gamma_s)\eta\left(\mu(x_s)+(1+4\ln T)\beta|A_s|\right)\\
        &\quad+2\eta^2(1+(1+4\ln T)\beta)\sum_{u\in A_s}\hat{\mu}_s(u)\\
        &\le\eta\mu(x_s)+2\gamma_s\eta\mu(x_s)+(1+2\gamma_s)\eta\gamma_s+\eta\beta\cdot4(1+2\ln T)\sum_{u\in A_s}\hat{\mu}_s(u).
    \end{aligned}$$
    Finally, summing over $\tau$, we have $$\begin{aligned}
        \Phi(T)-\Phi(0)&\le2\eta\langle\max\{0,p_{\tau_1(s)}-p_{\tau_0(s)}\},\hat{\mu}_s\rangle\\
        &+\eta\sum_{s=1}^T\mu(x_s)+4\eta\sum_{s=1}^T\gamma_s+4(1+2\ln T)\eta\beta\sum_{s=1}^T\sum_{u\in A_s}\hat{\mu}_s(u).
    \end{aligned}$$
    Applying both upper and lower bounds of the potential, and dividing both sides by $\eta$, we get
    $$\begin{aligned}
        \sum_{s=1}^T\hat{\mu}_s(\textbf{act}_s(u^*))-\sum_{s=1}^T\mu(x_s)&\le-\frac{d\ln L(u^*)}{\eta}+4\sum_{s=1}^T\gamma_s+4(1+2\ln T)\beta\sum_{s=1}^T\sum_{u\in A_s}\hat{\mu}_s(u)\\
        &\quad+2\sum_{s=1}^T\langle\max\{0,p_{\tau_1(s)}-p_{\tau_0(s)}\},\hat{\mu}_s\rangle.
    \end{aligned}$$
\end{proof}

Using the above lemma, we can partition the "estimated" regret into several terms. The next lemma then apply concentration results to provide the true regret bound.

\begin{lemma}
    Choose parameters so that $\eta=\beta\ge\frac{1}{T}$, $\gamma_t=(2+4\ln T)\beta|A_t|$. Then, with probability $1-O(T^{-2})$, the regret is bounded by $$R(T)\le \frac{2dh(u^*)}{\eta}+48\eta\ln T\sum_{s=1}^T|A_s|+24\ln^2 T+2\sum_{s=1}^T\langle\max\{0,p_{\tau_1(s)}-p_{\tau_0(s)}\},\hat{\mu}_s\rangle.$$
    \label{lem:regret-bound-highprob}
\end{lemma}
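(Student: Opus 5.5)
We start from Lemma \ref{lem:regret-bound} and turn each term into the form stated in Lemma \ref{lem:regret-bound-highprob}. The regret against a fixed comparator $x^*\in u^*$ is
\[
\sum_s\mu_s(x^*)-\sum_s\mu_s(x_s)=\Big[\sum_s\mu_s(x^*)-\sum_s\hat{\mu}_s(\textbf{act}_s(u^*))\Big]+\Big[\sum_s\hat{\mu}_s(\textbf{act}_s(u^*))-\sum_s\mu_s(x_s)\Big].
\]
The second bracket is exactly the left-hand side of Lemma \ref{lem:regret-bound}. We bound the first bracket and the right-hand side of Lemma \ref{lem:regret-bound} separately, each with probability $1-O(T^{-2})$, and finish with a union bound.

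\textbf{Deterministic terms.} Since $L(u^*)=2^{-h(u^*)}$, we have $-d\ln L(u^*)/\eta=dh(u^*)\ln 2/\eta\le 2dh(u^*)/\eta$. With $\gamma_s=(2+4\ln T)\beta|A_s|$ and $\beta=\eta$, the sum is $4\sum_s\gamma_s\le(8+16\ln T)\eta\sum_s|A_s|$. This contributes part of the $48\eta\ln T\sum_s|A_s|$ term.

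\textbf{The variance term} $4(1+2\ln T)\beta\sum_s\sum_{u\in A_s}\hat\mu_s(u)$. Conditionally on the past, $\mathbb{E}[\hat{\mu}_s(u)]=\big(\pi_s(u)\mu_s(\cdot)+(1+4\ln T)\beta\big)/\pi_s(u)$. Taking this conditional mean termwise is not good enough, because the $\beta/\pi_s$ part sums to $\sum_u 1/\pi_s(u)\lesssim |A_s|^2/\gamma_s$, which is too big. Instead we use the algebraic identity
\[
\sum_{u\in A_s}\pi_s(u)\hat{\mu}_s(u)=\mu_s(x_s)+(1+4\ln T)\beta|A_s|.
\]
To exploit it, we rewrite $\beta\sum_u\hat\mu_s(u)$ by noting that $\beta/\pi_s(u)\le\beta|A_s|/\gamma_s\le 1/(2+4\ln T)$. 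This gives $\beta\hat\mu_s(u)\le 1\{u=u_s\}\mu_s(x_s)/(2+4\ln T)+\beta(1+4\ln T)/\pi_s(u)\cdot\beta$. The first piece sums to at most $O(1)$ per round and is absorbed after we move an $O(1/\ln T)$ fraction of $\sum_s\mu_s(x_s)$ to the left. Actually, it is cleaner to mirror Podimata--Slivkins: bound $\beta\sum_u\hat\mu_s(u)\le \sum_u\beta/\pi_s(u)\cdot(1+(1+4\ln T)\beta)$ and control $\sum_u\beta/\pi_s(u)\le\beta|A_s|^2/\gamma_s=|A_s|/(2+4\ln T)$. Multiplied by $4(1+2\ln T)$, this is $O(|A_s|)$ per round, which is too large. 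I therefore expect this term to be the main obstacle. The intended resolution is the confidence correction: the $(1+4\ln T)\beta/\pi_s$ bias in $\hat\mu_s(\textbf{act}_s(u^*))$ makes the comparator estimate dominate, so these quantities partially cancel against the first bracket. I would follow Podimata--Slivkins's Lemma on ``$\hat\mu$ is optimistic'' and charge the variance to $\eta\ln T\sum_s|A_s|$ via $\mathbb{E}_s[\sum_u\hat\mu_s(u)]\le 1/\pi_{\min}+\ldots$.

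\textbf{First bracket (optimism).} For fixed $x^*$, $Y_s=\mu_s(x^*)-\mu_s(x_s)1\{u_s=\textbf{act}_s(u^*)\}/\pi_s(\textbf{act}_s(u^*))$ is, up to the Lipschitz discretization error, a martingale difference sequence, with conditional variance at most $1/\pi_s(\textbf{act}_s(u^*))$. The discretization error is $\le\sum_s L(\textbf{act}_s(u^*))$ because $x_s$ is uniform in the node. This error is controlled by the zoom rule $L(\textbf{act}_s(u^*))\le\mathrm{conf}_s/s$ while the node is active. Freedman's inequality then gives, with probability $1-T^{-2}$,
\[
\sum_sY_s\le\sqrt{4\ln T\sum_s 1/\pi_s}+O(\ln T)\le(1+4\ln T)\beta\sum_s 1/\pi_s+\frac{\ln T}{\beta}+O(\ln T),
\]
using AM-GM. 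The first term is exactly cancelled by the correction term inside $\hat\mu_s(\textbf{act}_s(u^*))$. The leftover $\ln T/\beta$ piece is absorbed into $2dh(u^*)/\eta$ once $h(u^*)\gtrsim\ln T/d$, or else counted separately. The additive $O(\ln^2T)$ comes from the range bound $1/\pi_s\le T$, using $\beta\ge 1/T$, in Freedman's inequality. Collecting all constants produces the stated $48\eta\ln T\sum_s|A_s|+24\ln^2T$. The delay inner-product term is carried over unchanged.
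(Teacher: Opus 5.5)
There is a genuine gap. You never bound the term $4(1+2\ln T)\beta\sum_s\sum_{u\in A_s}\hat\mu_s(u)$ from Lemma~\ref{lem:regret-bound}, and the fix you propose cannot work. The first bracket involves only the comparator's estimates $\hat\mu_s(\textbf{act}_s(u^*))$. Its optimistic bias is already what makes that bracket $O(\ln T/\beta)$, and it says nothing about a sum over all $|A_s|$ active nodes, so there is nothing to cancel against. No cancellation is needed anyway, since the statement budgets $48\eta\ln T\sum_s|A_s|$ for this term.

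The direct route you discarded is the right one; your objections to it are arithmetic slips. Split $\beta\hat\mu_s(u)$ into its bias part and its IPS part.

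The bias part, after the multiplier, is $4(1+2\ln T)(1+4\ln T)\beta^2\sum_{u\in A_s}1/\pi_s(u)$; the coefficient is $\beta^2$, not $\beta$. Using $\pi_s(u)\ge\gamma_s/|A_s|$ with $\gamma_s=(2+4\ln T)\beta|A_s|$, it is at most $4(1+2\ln T)\beta|A_s|$ per round, which is exactly the allowed size.

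The IPS part has a single nonzero summand, and $4(1+2\ln T)\beta\mu_s(x_s)/\pi_s(u_s)\le 2\mu_s(x_s)$. So it is not an $O(1/\ln T)$ fraction of $\mu_s(x_s)$ that could be moved to the left. The $O(|A_s|)$ in your Podimata--Slivkins-style bound comes only from replacing $1\{u=u_s\}$ by $1$ for every $u$. Because this piece can be of constant size in every round, it must be controlled by concentration around its conditional mean $4(1+2\ln T)\beta\sum_{u\in A_s}\mu_s(u)$. An expectation bound like $\mathbb{E}_s[\sum_u\hat\mu_s(u)]\le 1/\pi_{\min}+\cdots$ is neither sharp enough (it is order one per round after the multiplier) nor a high-probability statement.

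The missing step is the paper's Lemma~\ref{lem:concentration-total}. Applying Lemma~\ref{lem:c9} with $A_s'=A_s$ gives, with probability $1-2\delta$,
\[
\beta\sum_{s}\sum_{u\in A_s}\hat\mu_s(u)\le\beta\sum_{s}\sum_{u\in A_s}\mu_s(u)+\ln(1/\delta)+\beta\sum_{s}\sum_{u\in A_s}\frac{(2+4\ln T)\beta}{\pi_s(u)}\le 2\beta\sum_s|A_s|+\ln(1/\delta).
\]
With $\delta=T^{-2}$ and the factor $4(1+2\ln T)$, this contributes $8(1+2\ln T)\eta\sum_s|A_s|+8(1+2\ln T)\ln T$. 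Adding $4\sum_s\gamma_s=8(1+2\ln T)\eta\sum_s|A_s|$ gives $48\eta\ln T\sum_s|A_s|+24\ln^2T$ for $T\ge e$. So the $24\ln^2T$ comes from this step. It does not come from Freedman's range term in the first bracket; that term is of order $1/\beta$, because $1/\pi_s$ can be as large as $1/((2+4\ln T)\beta)$.

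Two smaller points on the first bracket. First, you spend the whole $(1+4\ln T)\beta/\pi_s$ bias on the variance, although AM--GM only needs $\beta\sum_s 1/\pi_s$ of it. The paper (Lemma~\ref{lem:concentration-optimal}) reserves the $4\ln T\,\beta/\pi_s$ part to pay for the discretization error $\sum_s L(\textbf{act}_s(u^*))\le 2tL(u^*)\log_2 T$ (Lemma~\ref{lem:total-diameter}), via $\mathrm{conf}_t(u^*)\ge tL(u^*)$. Your allocation leaves that error uncovered.

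Second, the residual $O(\ln T/\beta)$ you flag is real. The paper's own chain produces $6\ln T/\beta$ from Lemma~\ref{lem:concentration-optimal} and silently drops it, so the bound as stated is missing an $O(\ln T/\eta)$ term. This is harmless for Theorem~\ref{thm:adversarial}, where $h(u^*)\le\log_2(1/\beta)$ already makes the leading term $O(d\ln T/\eta)$.
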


\begin{proof}
    By Lemma \ref{lem:regret-bound}, \ref{lem:concentration-optimal} and \ref{lem:concentration-total}, with probability $1-O(T^{-2})$,
    $$\begin{aligned}
        \sum_{s=1}^T\mu_s(x^*)-\sum_{s=1}^T\mu_s(x_s)&\le\sum_{s=1}^T\mu_s(\textbf{act}_s(u))-\sum_{s=1}^T\mu_s(x_s)+\frac{6\ln T}{\beta}\\
        &\le-\frac{d\ln L(u^*)}{\eta}+4\sum_{s=1}^T\gamma_s+4(1+2\ln T)\eta\sum_{s=1}^T\sum_{u\in A_s}\hat{\mu}_s(u)\\
        &\qquad+2\sum_{s=1}^T\langle\max\{0,p_{\tau_1(s)}-p_{\tau_0(s)}\},\hat{\mu}_s\rangle.\\
        (\text{Use Lemma \ref{lem:concentration-total}})&\le\frac{2dh(u^*)}{\eta}+4\sum_{s=1}^T\gamma_s\\
        &\quad+4(1+2\ln T)\left[\beta\sum_{s=1}^T\sum_{u\in A_s}\mu_s(u)+2\ln T+\beta\sum_{s=1}^T|A_s|\right]\\
        &\quad+2\sum_{s=1}^T\langle\max\{0,p_{\tau_1(s)}-p_{\tau_0(s)}\},\hat{\mu}_s\rangle.\\
        (\text{Use }\gamma_s=(2+4\ln T)\eta)&=\frac{2dh(u^*)}{\eta}+16(1+2\ln T)\eta\sum_{s=1}^T|A_s|+8(1+2\ln T)\ln T\\
        &\quad+2\sum_{s=1}^T\langle\max\{0,p_{\tau_1(s)}-p_{\tau_0(s)}\},\hat{\mu}_s\rangle.\\
    \end{aligned}$$
    Then we obtain the result.
\end{proof}

\begin{lemma}
    $$\mathbb{E}\left[\sum_{s=1}^T\Vert\max\left\{p_{\tau_1(s)}-p_{\tau_0(s)},0\right\}\Vert_1\right]\le8\eta\sum_{s=1}^Td_s.$$
    \label{lem:delay-expectation}
\end{lemma}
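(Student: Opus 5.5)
The plan is to bound, for each round $s$, the positive part of the change in the sampling distribution between the time $\tau_0(s)$ at which $x_s$ is chosen and the time $\tau_1(s)$ at which its feedback is incorporated. We write it as a telescoping sum over the intermediate feedback signals:
\[
\Vert\max\{p_{\tau_1(s)}-p_{\tau_0(s)},0\}\Vert_1\le\sum_{\tau=\tau_0(s)+1}^{\tau_1(s)}\Vert\max\{p_{\tau}-p_{\tau-1},0\}\Vert_1 .
\]
Here both distributions are pushed down to the common (finer) active set, with mass split proportionally to $L(u)^d/L(\textbf{act}(u))^d$, as in the inner product of Lemma \ref{lem:regret-bound}. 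Since zooming preserves weights (children inherit $w/|C|$), zooming steps do not change this pushed-forward distribution. Hence only the multiplicative EXP3 updates contribute, which makes the subadditivity of the positive part legitimate.

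For a single update triggered by feedback from round $r$, every active $u$ has its weight multiplied by $\exp(\eta\hat{\mu}_r(\textbf{act}_r(u)))$, and $\eta\hat\mu_r\le 1$ by the parameter choice. Then
\[
p_\tau(u)-p_{\tau-1}(u)\le p_{\tau-1}(u)\bigl(e^{\eta\hat\mu_r(\textbf{act}_r(u))}-1\bigr)\le 2\eta\, p_{\tau-1}(u)\hat\mu_r(\textbf{act}_r(u)),
\]
because the normalizer only increases. Summing over $u$ gives $\Vert\max\{p_\tau-p_{\tau-1},0\}\Vert_1\le 2\eta\sum_u p_{\tau-1}(u)\hat\mu_r(\textbf{act}_r(u))$. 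Taking conditional expectation over the draw $u_r\sim\pi_r$, the IPS part contributes $\mathbb{E}[\mu_r(x_r)\sum_u p(u)1\{\textbf{act}_r(u)=u_r\}/\pi_r(u_r)]\le 1$, and the bias part contributes $(1+4\ln T)\beta\sum_u p(u)/\pi_r(\textbf{act}_r(u))\le(1+4\ln T)\beta|A_r|/\gamma_r\le 1/2$ by $\gamma_r=(2+4\ln T)\beta|A_r|$. So each intermediate update costs at most about $4\eta$ in expectation, and we aim for a constant of $4$ so that the factor $2$ from the telescoping/pushing gives $8\eta$.

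\textbf{Main obstacle: the conditional expectation.} The weights $p_{\tau-1}$ depend on past draws, and the update at step $\tau$ uses feedback from some round $r$ whose draw $u_r$ may have occurred \emph{after} the state defining $p_{\tau-1}$ was formed, or before. What we need is that $u_r$ is drawn from $\pi_r$ independently of everything not yet observed at round $r$. I would condition on $\mathcal{F}_{r-1}$ together with the information used in $p_{\tau-1}$. The obstruction is that $p_{\tau-1}$ may itself contain feedback from rounds after $r$, which depend on $u_r$ only through the algorithm's actions. I will try to restore independence by replacing $p_{\tau-1}(u)$ with the crude bound $p_{\tau-1}(u)\le 1$ distributed over nodes: $\sum_u p(u)1\{\textbf{act}_r(u)=u_r\}\le 1$ regardless. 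Then the IPS term is $\le \mu_r/\pi_r(u_r)$ times a mass at most one, and its expectation is bounded using only $\mathbb{E}[1\{u_r=v\}/\pi_r(v)\mid\mathcal{F}_{r-1}]=1$ after summing over $v$ weighted by $p$. If that fails, I fall back on a deterministic bound $\eta\hat\mu\le1$ combined with the tower property at the earlier of the two times.

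The counting step is then to sum over $s$: $\sum_s(\tau_1(s)-\tau_0(s))$ counts pairs $(s,r)$ with $r$'s feedback arriving while $s$ is outstanding. Since arrivals are ordered and $\tau$ advances at most once per arriving feedback, this equals the number of feedback signals received in rounds $(s,s+d_s]$. That is at most $d_s+$ (those arriving at round $s$ itself, handled by the ordering convention), so the sum is bounded by $\sum_s d_s$. Combining the pieces gives $\mathbb{E}\sum_s\Vert\cdot\Vert_1\le 8\eta\sum_s d_s$.
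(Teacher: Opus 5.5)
Your skeleton matches the paper's: telescope the positive part over the intermediate updates, bound each update by $p_\tau(u)-p_{\tau-1}(u)\le 2\eta\,p_{\tau-1}(u)\hat{\mu}_r(\textbf{act}_r(u))$, take expectations, then count intermediate updates. However, two steps do not go through as written.

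\textbf{The conditional expectation.} You flag the dependence between $p_{\tau-1}$ and $u_r$ as the main obstacle, and neither workaround resolves it. If you drop $p_{\tau-1}$ via $\sum_u p_{\tau-1}(u)1\{\textbf{act}_r(u)=u_r\}\le 1$, you are left with $1/\pi_r(u_r)$. But $\mathbb{E}[1/\pi_r(u_r)\mid\mathcal{F}_{r-1}]=\sum_{v\in A_r}\pi_r(v)/\pi_r(v)=|A_r|$, not $1$, so the delay term is inflated by a factor up to $|A_r|$. Your ``summing over $v$ weighted by $p$'' silently reuses the very independence you were trying to avoid. The deterministic fallback $\eta\hat{\mu}_r\le 1$ gives only $O(1)$ per update and loses the factor $\eta$ entirely.

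The missing idea is that the obstacle does not exist. The delays are oblivious, so the set of rounds whose feedback has been processed by update $\tau-1\le\tau_1(r)$ is deterministic and excludes $r$. Everything the algorithm computes up to then depends only on processed feedback: the weights, the $\mathrm{conf}$ values and hence the active sets, $\gamma_{r'}$, and $\pi_{r'}$ for rounds $r'>r$. So the later draws that enter $p_{\tau-1}$ are not influenced by $u_r$ at all, and $u_r\sim\pi_r$ is conditionally independent of $p_{\tau-1}$. This gives $\mathbb{E}\bigl[\sum_u p_{\tau-1}(u)1\{u_r=\textbf{act}_r(u)\}/\pi_r(\textbf{act}_r(u))\bigr]=\sum_u p_{\tau-1}(u)=1$, which is exactly what the paper uses.

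\textbf{The counting step.} This is wrong as argued. $\tau_1(s)-\tau_0(s)$ counts feedback signals, not rounds, and many signals can arrive in a single round, so it is not bounded by $d_s$ for each $s$. For example, with $d_i=k-i$ for $i\le k$, all $k$ signals arrive after round $k$, so $\tau_1(k)-\tau_0(k)=k-1$ although $d_k=0$.

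You need to charge across rounds, as the paper does following Bistritz et al. An intermediate signal from some $r>s$ has $r\in(s,s+d_s)$ and is charged to $d_s$. One from $r<s$ has $s\in(r,r+d_r]$ and is charged to $d_r$. Together these give $\sum_s(\tau_1(s)-\tau_0(s))\le 2\sum_s d_s$. That factor $2$, times $4\eta$ per update, is where the constant $8$ comes from; no factor $2$ arises from the telescoping or from pushing to the finer active set.

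If you also note that each unordered pair $r<s$ is counted at most once, and only when $s\in(r,r+d_r]$, you can even recover the bound $\sum_s d_s$ you state. It does not follow from the per-round argument you give.
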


\begin{proof}
    $$\Vert\max\left\{p_{\tau_1(s)}-p_{\tau_0(s)},0\right\}\Vert_1\le\sum_{\tau=\tau_0(s)+1}^{\tau_1(s)}\Vert\max\{p_{\tau}-p_{\tau-1},0\}\Vert_1.$$
    Now we focus on two adjacent steps, and note that zooming-in does not change the 1-norm. Assume that the $\tau$-th observed reward comes from the $t$-th pull. Then, for each $u\in A_{\tau-1}$, $$\begin{aligned}
        p_\tau(u)-p_{\tau-1}(u)&=\frac{w_{\tau-1}(u)\cdot\exp\left(\eta\hat{\mu}_t(\textbf{act}_t(u))\right)}{\sum_{v\in A_{\tau-1}}w_{\tau-1}(v)\cdot\exp\left(\eta\hat{\mu}_t(\textbf{act}_t(v))\right)}-\frac{w_{\tau-1}(u)}{\sum_{v\in A_{\tau-1}}w_{\tau-1}(v)}\\
        &\le\frac{w_{\tau-1}(u)\cdot\exp\left(\eta\hat{\mu}_t(\textbf{act}_t(u))\right)}{\sum_{v\in A_{\tau-1}}w_{\tau-1}(v)}-\frac{w_{\tau-1}(u)}{\sum_{v\in A_{\tau-1}}w_{\tau-1}(v)}\\
        &\le p_{\tau-1}(u)\cdot\left[\exp\left(\eta\hat{\mu}_t(\textbf{act}_t(u))\right)-1\right].
    \end{aligned}$$
    Note that the definition of the parameters ensures that $\eta\hat{\mu}_t(\textbf{act}_t(u))\le 1$, so using $e^x\le 1+2x$ for $x\le 1$,
    $$\begin{aligned}
        p_\tau(u)-p_{\tau-1}(u)&\le 2p_{\tau-1}(u)\cdot \eta\left(\frac{g(x_t)1\{u_t=\textbf{act}_t(u)\}}{\pi_t(\textbf{act}_t(u))}+\frac{(1+4\ln T)\beta}{\pi_t(\textbf{act}_t(u))}\right)\\
        &\le 2p_{\tau-1}(u)\cdot\eta\left(\frac{1\{u_t=\textbf{act}_t(u)\}}{\pi_t(\textbf{act}_t(u))}+\frac{(1+4\ln T)\beta|A_t|}{\gamma_t}\right)\\
        &\le 2p_{\tau-1}(u)\cdot\eta\left(\frac{1\{u_t=\textbf{act}_t(u)\}}{\pi_t(\textbf{act}_t(u))}+1\right).
    \end{aligned}$$
    Note that $u_t$ is independent of $\{p_0,p_1,\dots,p_{\tau-1}\}$, since its feedback is never used until the $\tau$-th update. Therefore, let $\mathcal{F}_\tau$ be the information filtration used for computing $\{p_0,\dots,p_{\tau-1}\}$, we have $$\begin{aligned}
        \mathbb{E}[\max\{p_\tau(u)-p_{\tau-1}(u),0\}|\mathcal{F}_{\tau-1}]&\le2p_{\tau-1}\eta\left(\frac{1}{\pi_t(\textbf{act}_t(u))}\mathbb{E}\left[1\{u_t=\textbf{act}_t(u)|\mathcal{F}_{\tau-1}\}\right]+1\right)\\
        &=2p_{\tau-1}\eta\left(\frac{1}{\pi_t(\textbf{act}_t(u))}\mathbb{E}\left[1\{u_t=\textbf{act}_t(u)|\pi_t\}\right]+1\right)\\
        &=2p_{\tau-1}\eta(1+1)=4p_{\tau-1}\eta.
    \end{aligned}$$
    Summing over $u$, we get $$\mathbb{E}\left[\Vert\max\{p_{\tau}-p_{\tau-1},0\}\Vert_1|\mathcal{F}_{\tau-1}\right]\le4\eta\sum_{u\in A_{\tau-1}}p_{\tau-1}(u)\le 4\eta.$$
    This implies that $$\mathbb{E}\left[\Vert\max\left\{p_{\tau_1(s)}-p_{\tau_0(s)},0\right\}\Vert_1\right]\le 4\eta(\tau_1(s)-\tau_0(s)).$$
    
    We now analyze $$\sum_{t=1}^T[\tau_1(t)-\tau_0(t)],$$
    which is determined by the delays $d_t$. We apply similar approach as Lemma 4 of \citet{bistritz2019onlineexp3}. Recall that $O_t=\{s:s+d_s=t\}$ is the set of feedback received after $t$-th pull. Without loss of generality, we assume that if multiple feedback arrives at the same time, we perform the update in the order of their source timesteps (i.e., $s$). It is clear that changing the order does not affect the sum we are considering. 

    For $\tau_1(t)-\tau_0(t)$, there are two kinds of observations contributes to the difference. $s>t$ with $s+d_s<t+d_t$, i.e., $s>t$ but $s$ is observed before observing $t$; and $s<t$ with $t\le s+d_s\le t+d_t$, i.e., $s<t$, not observed before $t$-th pull, but observed before observing $t$.
    
    For the first part, it is clear that $s\in(t,t+d_t)$, so it is directly bounded by $d_t$. For the second part, we fix such $s$ and consider how many $t$ would satisfy such condition. It is then clear that $t\in(s,s+d_s]$, so for each $s$, it contributes to at most $d_s$ different $t$'s. Therefore, $$\sum_{t=1}^T[\tau_1(t)-\tau_0(t)]\le2\sum_{t=1}^Td_t.$$
    The lemma is then obtained by combining the results.
\end{proof}

\begin{lemma}
    $$\mathbb{E}\left[\sum_{s=1}^T\langle\max\{0,p_{\tau_1(s)}-p_{\tau_0(s)}\},\hat{\mu}_s\rangle\right]\le 16\eta\sum_{s=1}^Td_s.$$
    \label{lem:delay-inner-product-expectation}
\end{lemma}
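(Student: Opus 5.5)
The plan is to reduce this bound to the per-step analysis already carried out in the proof of Lemma \ref{lem:delay-expectation}, with one extra observation. The inner product weights the positive part of $p_{\tau_1(s)}-p_{\tau_0(s)}$ by $\hat{\mu}_s$, which can be as large as $1/\eta$. We therefore cannot simply bound it by $\Vert\cdot\Vert_1\max\hat{\mu}_s$. Instead we exploit that $\hat{\mu}_s$ is built from the randomness of $u_s$, which is drawn from $\pi_s$.

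\textbf{Step 1 (telescoping).} As in Lemma \ref{lem:delay-expectation}, bound the positive part coordinatewise by a telescoping sum:
$$\max\{0,p_{\tau_1(s)}(u)-p_{\tau_0(s)}(\textbf{act}_s(u))\tfrac{L(u)^d}{L(\textbf{act}_s(u))^d}\}\le\sum_{\tau=\tau_0(s)+1}^{\tau_1(s)}\max\{0,p_\tau(u)-p_{\tau-1}(u)\}.$$
Here the zoom-in steps split mass proportionally to $L^d$, so they create no positive increments. Each increment coming from feedback of round $t$ satisfies
$$p_\tau(u)-p_{\tau-1}(u)\le2\eta p_{\tau-1}(u)\Bigl(\frac{1\{u_t=\textbf{act}_t(u)\}}{\pi_t(\textbf{act}_t(u))}+1\Bigr),$$
exactly as derived there.

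\textbf{Step 2 (decoupling $\hat{\mu}_s$).} Write $\hat{\mu}_s(v)=\frac{\mu_s(x_s)1\{u_s=v\}}{\pi_s(v)}+\frac{(1+4\ln T)\beta}{\pi_s(v)}$.

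\emph{Confidence part.} The second term is at most $(1+4\ln T)\beta|A_s|/\gamma_s\le1$ by the choice of $\gamma_s$. Its contribution is therefore at most the $\ell_1$ quantity of Lemma \ref{lem:delay-expectation}, which in expectation gives $8\eta\sum_s d_s$.

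\emph{IPS part.} Summed over $u\in A_{\tau_1(s)}$ with $\textbf{act}_s(u)=u_s$, the IPS term becomes $\frac{1}{\pi_s(u_s)}\sum_{u:\textbf{act}_s(u)=u_s}\max\{0,\cdot\}$. Here I condition on everything except $u_s$. The increments $p_\tau-p_{\tau-1}$ for $\tau\le\tau_1(s)$ do not depend on $u_s$, since feedback of round $s$ is used only at step $\tau_1(s)+1$. The choice of $u_s$ also does not affect later sampling distributions before its feedback arrives, because $\pi_t$ depends only on incorporated feedback. So
$$\mathbb{E}_{u_s}\Bigl[\tfrac{1\{u_s=v\}}{\pi_s(v)}\Bigr]=1$$
for each fixed $v$. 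Summing over $v$, the IPS part in conditional expectation equals the total positive mass $\sum_u\max\{0,\cdot\}$, i.e. again the $\ell_1$ term.

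Combining, the expected inner product is at most $2\,\mathbb{E}\Vert\max\{0,p_{\tau_1(s)}-p_{\tau_0(s)}\}\Vert_1$. Summing over $s$ and invoking Lemma \ref{lem:delay-expectation} gives $16\eta\sum_s d_s$.

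\textbf{Main obstacle.} The delicate point is the independence claim in Step 2. The intermediate $p_\tau$ for $\tau_0(s)<\tau\le\tau_1(s)$ depend on $\pi_t$ and $u_t$ for rounds $t$ between $s$ and $s+d_s$. We must check that none of these depend on $u_s$; this holds since the algorithm's state (weights, $A_t$, $\textnormal{conf}_t$) depends on $u_s$ only through its feedback. For $A_t$ and $\textnormal{conf}_t$ this is true because $\textnormal{conf}_t$ uses $\pi_t$, not $u_t$.

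The ordering of these conditionings must be handled carefully, because the Step 1 bound itself contains indicators $1\{u_t=\cdot\}$ for other rounds $t$. I would first take the expectation over $u_s$ with all other $u_t$ fixed, and only then apply the tower-property argument of Lemma \ref{lem:delay-expectation} to the remaining indicators.
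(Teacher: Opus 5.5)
Your proposal is correct and is essentially the paper's argument. Round $s$'s feedback is only incorporated after step $\tau_1(s)$, so $u_s$ is conditionally independent of $p_{\tau_1(s)}$; each coordinate of $\hat{\mu}_s$ then contributes at most $1+1=2$ in conditional expectation (IPS part plus confidence part), which gives $2\,\mathbb{E}\Vert\max\{0,p_{\tau_1(s)}-p_{\tau_0(s)}\}\Vert_1$ and then $16\eta\sum_s d_s$ via Lemma~\ref{lem:delay-expectation}. Conditioning on all randomness other than $u_s$ is just a more careful version of the paper's conditioning on $\pi_s$, and your Step 1 is unnecessary because you use Lemma~\ref{lem:delay-expectation} as a black box anyway.
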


\begin{proof}
    Note that given $\pi_s$, $p_{\tau_1(s)}$ is independent with $x_s$ and $\hat{\mu}_s(x_s)$, since the value of $x_s$ and $\hat{\mu}(x_s)$ is never used before updating the weight $w_{\tau_1(s)+1}$. Therefore, $$\begin{aligned}
        \mathbb{E}\left[\langle\max\{0,p_{\tau_1(s)}-p_{\tau_0(s)}\},\hat{\mu}_s\rangle|\pi_s\right]&=\langle\mathbb{E}\left[\max\{0,p_{\tau_1(s)}-p_{\tau_0(s)}\}|\pi_s\right],\mathbb{E}\left[\hat{\mu}_s|\pi_s\right]\rangle\\
        &\le\Vert\mathbb{E}\left[\max\{0,p_{\tau_1(s)}-p_{\tau_0(s)}\}|\pi_s\right]\Vert_1\cdot\Vert\mathbb{E}\left[\hat{\mu}_s|\pi_s\right]\Vert_\infty.
    \end{aligned}$$
    However, since for any $u\in A_s$, $\hat{\mu}_s(u)\ge0$, and $$\begin{aligned}
        \mathbb{E}[\hat{\mu}_s(u)|\pi_s]&=\frac{1}{\pi_s(u)}\mathbb{E}[\mu(x_s)1\{u=u_s\}]+\frac{(1+4\ln T)\beta}{\pi_s(u)}\\
        &\le\frac{\Pr\{u=u_s\}}{\pi_s(u)}+\frac{|A_s|(1+4\ln T)\beta}{\gamma_s}\\
        &\le1+1=2,
    \end{aligned}$$
    we have $\Vert\mathbb{E}\left[\hat{\mu}_s|\pi_s\right]\Vert_\infty\le 2$. Therefore, taking sum over $s$ and apply Lemma \ref{lem:delay-expectation}, we have
    $$\mathbb{E}\left[\sum_{s=1}^T\langle\max\{0,p_{\tau_1(s)}-p_{\tau_0(s)}\},\hat{\mu}_s\rangle\right]\le2\mathbb{E}\left[\sum_{s=1}^T\Vert\max\left\{p_{\tau_1(s)}-p_{\tau_0(s)},0\right\}\Vert_1\right]\le 16\eta\sum_{s=1}^Td_s.$$
\end{proof}

We now provide the proof of theorem.

\begin{proof}[Proof of Theorem \ref{thm:adversarial}]
    Choose $$\eta=\beta=\Theta\left(\frac{\sqrt{d\ln T}}{\sqrt{2^d}T^{(d_z+1)/(d_z+2)}c^{\frac{1}{d_z+2}}\sqrt{\ln T}+\sqrt{D}}\right),$$
    and
    $$\gamma_t=(2+4\ln T)\beta|A_t|.$$
    By Lemma \ref{lem:tree-size}, with probability $1-O(T^{-2})$, $|A_T|=O\left(2^dT^{\frac{d_z}{d_z+2}}c^{\frac{2}{d_z+2}}\right)$, so $\gamma_t\le\frac{1}{2}$ can be satisfied. by Lemmas \ref{lem:regret-bound-highprob} and \ref{lem:tree-height}, with probability $1-O(T^{-2})$,
    $$\begin{aligned}
        R(T)&\le\frac{2d\ln T}{\eta}+48\eta\ln T\cdot T|A_T|+24\ln^2 T+2\sum_{s=1}^T\langle\max\{0,p_{\tau_1(s)}-p_{\tau_0(s)}\},\hat{\mu}_s\rangle.
    \end{aligned}$$
    Apply the bound by \ref{lem:tree-size}, 
    $$\begin{aligned}
        R(T)&\le O\left(\sqrt{d2^d}T^{\frac{d_z+1}{d_z+2}}c^{\frac{1}{d_z+2}}\ln T\right)+2\sum_{s=1}^T\langle\max\{0,p_{\tau_1(s)}-p_{\tau_0(s)}\},\hat{\mu}_s\rangle.
    \end{aligned}$$
    Taking expectation and apply Lemma \ref{lem:delay-inner-product-expectation}, since the total regret is always upper-bounded by $T$,
    $$\begin{aligned}
        \mathbb{E}[R(T)]&\le O\left(\sqrt{d2^d}T^{\frac{d_z+1}{d_z+2}}c^{\frac{1}{d_z+2}}\ln T+\sqrt{dD\ln T}\right)+2\sum_{s=1}^T\langle\max\{0,p_{\tau_1(s)}-p_{\tau_0(s)}\},\hat{\mu}_s\rangle\\
        &\le O\left(\sqrt{d2^d}T^{\frac{d_z+1}{d_z+2}}c^{\frac{1}{d_z+2}}\ln T+\sqrt{dD\ln T}\right)+O\left(\eta D\right)\\
        &\le O\left(\sqrt{d2^d}T^{\frac{d_z+1}{d_z+2}}c^{\frac{1}{d_z+2}}\ln T+\sqrt{dD\ln T}\right).
    \end{aligned}$$
\end{proof}

\subsection{Lemmas for Zooming Processes}

The following lemmas are similar to those by \citet{podimata2021adaptive}, where they are rewritten to fit our algorithm.

\begin{lemma}
    Suppose a non-root node $u$ is deactivated at round $t_{d}(u)$. Let $v$ be its parent, then $t_d(u)\ge2t_d(v)-1$.
    \label{lem:deactivate-scale}
\end{lemma}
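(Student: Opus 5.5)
The plan is to compare the confidence $\textnormal{conf}$ at the two deactivation times. Two facts about the zooming rule of Algorithm \ref{alg:adversarial} drive this. First, $\textnormal{conf}_t(\cdot)$ is non-decreasing in $t$, since each round adds $\beta/\pi_t(\cdot)>0$. Second, when $v$ is zoomed in at round $t_d(v)$, each child $u\in C(v)$ inherits $\textnormal{conf}_{t_d(v)}(u)=\textnormal{conf}_{t_d(v)}(v)$ and becomes active from round $t_d(v)+1$ on. Consequently $t_d(u)\ge t_d(v)+1$, and for every $t\ge t_d(v)$ at which $u$ is still active,
$$\textnormal{conf}_t(u)\ge\textnormal{conf}_{t_d(v)}(v)\ge\textnormal{conf}_{t_d(v)-1}(v),$$
where the last inequality uses monotonicity whenever $v$ was already active at round $t_d(v)-1$.

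Next I extract a lower bound from the fact that $v$ was \emph{not} zoomed at round $t_d(v)-1$ (assuming $t_d(v)\ge2$ and $v\in A_{t_d(v)-1}$). By the zooming rule this gives
$$\textnormal{conf}_{t_d(v)-1}(v)\ge (t_d(v)-1)L(v).$$
At the deactivation round $t=t_d(u)$ of $u$, the rule gives an upper bound:
$$\textnormal{conf}_{t}(u)<t\,L(u)=t\,L(v)/2.$$
Chaining the three displays yields $t\,L(v)/2>(t_d(v)-1)L(v)$, i.e.\ $t>2t_d(v)-2$. Since all quantities are integers, $t_d(u)\ge 2t_d(v)-1$.

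The degenerate case $t_d(v)=1$ is trivial, since $t_d(u)\ge1=2t_d(v)-1$. The step I expect to be the real obstacle is the remaining case: $v$ is activated and deactivated in the same round, i.e.\ $v$ is not in $A_{t_d(v)-1}$. Then there is no ``not zoomed at $t_d(v)-1$'' inequality for $v$ itself, because its confidence at that time was inherited from its parent $w$ with $\textnormal{conf}<(t_d(v)-1)L(w)$, which points the wrong way.

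To handle it I would argue that this case cannot occur. Such a $v$ was created at the end of round $t_d(v)-1$ and then zoomed at round $t_d(v)$, so
$$\textnormal{conf}_{t_d(v)}(v)=\textnormal{conf}_{t_d(v)-1}(w)+\beta/\pi_{t_d(v)}(v)<t_d(v)L(v).$$
The plan is to contradict this using the remark in the text that $\beta/\pi_t(v)<L(v)$ for $\beta<1/2$, together with an inductive lower bound $\textnormal{conf}_{t_d(w)}(w)\gtrsim t_d(w)L(v)$ propagated from the root, where $\textnormal{conf}_0=1/\beta$ is large.

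If that contradiction does not close cleanly, the fallback is to reinterpret $t_d(v)$ via the last round at which the chain of ancestors was non-trivially active. The same chaining argument would then still give the factor-$2$ growth, possibly up to an additive constant, which is all that Lemma \ref{lem:tree-size}-type counting needs.
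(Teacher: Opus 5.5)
Your main chain is exactly the paper's proof. Non-zooming of $v$ at round $t_d(v)-1$ gives $\textnormal{conf}_{t_d(v)-1}(v)\ge (t_d(v)-1)L(v)=(2t_d(v)-2)L(u)$. Inheritance and monotonicity carry this bound to $u$, and the zoom rule at $t_d(u)$ finishes.

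The case you flag, $v\notin A_{t_d(v)-1}$, is a genuine hole that the paper's proof silently skips. Your proposal does not close it either.
\begin{itemize}
\item The remark $\beta/\pi_t(v)<L(v)$ is circular. In the proof of Lemma \ref{lem:tree-height} it is obtained by subtracting $\textnormal{conf}_{t-1}(v)\ge (t-1)L(v)$, which is exactly the inequality unavailable in this case.
\item It also points the wrong way. It upper-bounds an increment, whereas contradicting $\textnormal{conf}_{t_d(v)}(v)<t_d(v)L(v)$ needs a \emph{lower} bound on $\textnormal{conf}$.
\item The fallback of reinterpreting $t_d$ up to an additive constant would change the statement rather than prove it.
\end{itemize}

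The inductive idea you mention is the right one, and it closes cleanly with no loss. Since $\textnormal{conf}\ge 1/\beta$ always, a node $w$ zoomed at round $t$ needs $tL(w)>1/\beta$. Hence $t_d(w)>1/\beta\ge 2$, i.e.\ $t_d(w)\ge 3$, whenever $\beta\le 1/2$. This is implied by the paper's requirement $\gamma_t=(2+4\ln T)\beta|A_t|\le 1/2$.

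Now prove by induction on depth that every zoomed node was active, and not zoomed, in the round before its zoom round.
\begin{itemize}
\item Base case: $u_{\mathrm{root}}$ is active from round $1$ and $t_d(u_{\mathrm{root}})\ge 3$, so the claim holds.
\item Inductive step: suppose the claim holds for $w$, and its child $v$ were zoomed in its first active round $t_d(w)+1$. Then
\[
2(t_d(w)-1)L(v)=(t_d(w)-1)L(w)\le\textnormal{conf}_{t_d(w)-1}(w)\le\textnormal{conf}_{t_d(w)+1}(v)<(t_d(w)+1)L(v).
\]
This forces $t_d(w)<3$, a contradiction. So $t_d(v)\ge t_d(w)+2$, and $v$ was active and not zoomed at round $t_d(v)-1$.
\end{itemize}
Hence the problematic case, and also your case $t_d(v)=1$, never arises. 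With this lemma in hand, your argument (and the paper's) is complete.
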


\begin{proof}
    The node $v$ does not satisfy the zoom-in rule at round $t_d(v)-1$, so $$\sum_{t=1}^{t_d(v)-1}\frac{\beta}{\pi_t(\textbf{act}_t(v))}+\frac{1}{\beta}\ge(t_d(v)-1)L(v)=(2t_d(v)-2)L(u).$$
    Note that $\textnormal{conf}_t(u)$ is non-decreasing in $t$, we know that $\textnormal{conf}_t(u)\ge(2t_d(v)-2)L(u)$,
    meaning that $t_d(u)\ge 2t_d(v)-1$.
\end{proof}

\begin{lemma}
    The height of any active node is at most $\log_2\frac{1}{\beta}$. Specifically, as long as $\frac{1}{\beta}=O(T)$, the height is bounded by $O(\log T)$.
    \label{lem:tree-height}
\end{lemma}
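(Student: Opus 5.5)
The plan is to show that confidence grows fast enough that a node of depth $h$ cannot be zoomed into once $L$ is below $\beta$, arguing directly from the zoom-in rule of Algorithm \ref{alg:adversarial}. Since $\pi_t(v)\le 1$, every round adds at least $\beta$ to the confidence of each active node, and the confidence is inherited by the children when a node is zoomed in. Hence for any node $v$ active at round $t$ (through itself or its active ancestor), I will use the lower bound
$$\textnormal{conf}_t(v)\ge \frac{1}{\beta}+\beta t.$$

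A node $v$ is split at round $t$ only if $\textnormal{conf}_t(v)<t\cdot L(v)$. Combining this with the lower bound gives $\frac1\beta+\beta t<tL(v)$. Since this inequality implies $\beta t< tL(v)$, i.e.\ $L(v)>\beta$, no node with $L(v)\le\beta$ is ever split. Children of a split node $v$ therefore satisfy $L=L(v)/2>\beta/2$. As a result, every active node $u$ has $L(u)=2^{-h(u)}>\beta/2$, i.e.\ $h(u)<\log_2\frac1\beta+1$. The bound $L(v)>\beta$ is for the split node itself; the active children are one level deeper, so the clean statement is $h\le \lceil\log_2(1/\beta)\rceil$ up to an additive one.

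To obtain the exact constant I will also use the term $\frac1\beta$ rather than discarding it. The condition gives $tL(v)>\frac1\beta$, and since $t\le T$ this means $L(v)>\frac{1}{\beta T}$. That is a weaker scale, so the $\beta t$ term is the useful one, and the additive one is absorbed in the stated bound. If one wants precisely $\log_2(1/\beta)$, I would note that the initial confidence $1/\beta$ combined with $\pi_t\le 1$ and $t\ge1$ gives $\textnormal{conf}_t\ge 1/\beta+\beta t\ge 2\sqrt{t}$. This yields splitting only when $L(v)>2/\sqrt t$, which is another route, but I would present the $\beta$-based argument.

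For the second claim, if $1/\beta=O(T)$ then $\log_2(1/\beta)=O(\log T)$ immediately. The only subtlety, and the main thing to check, is that confidence is monotone and inherited across zooming, so the bound $\textnormal{conf}_t(v)\ge 1/\beta+\beta t$ holds for descendants too. This follows from line 21 of the algorithm and from the definition of $\textnormal{conf}_t$ summing over $\textbf{act}_s(v)$.
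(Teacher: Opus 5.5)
Your argument is correct, but it reaches the key inequality $L(v)>\beta$ for a split node by a different route from the paper. The paper argues locally at the zoom round. It subtracts $\textnormal{conf}_{t-1}(u)\ge(t-1)L(u)$, which holds because the node was not zoomable at round $t-1$, from $\textnormal{conf}_t(u)<tL(u)$. This isolates the last increment, $\beta/\pi_t(u)<L(u)$, hence $2^{h(u)}<\pi_t(u)/\beta\le 1/\beta$. You argue globally instead: $\pi_s\le 1$ makes every round contribute at least $\beta$, and inheritance via line 21 gives $\textnormal{conf}_t(v)\ge 1/\beta+\beta t$, so the rule can fire only when $L(v)>\beta$.

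Your version is more self-contained. For a child created at round $t-1$, the paper's inequality $\textnormal{conf}_{t-1}(u)\ge(t-1)L(u)$ is not the outcome of a test, since children are first tested in the round after creation. It has to be inherited from the parent's previous-round bound, which takes a short induction that the paper leaves implicit; your averaged bound avoids this entirely. What the paper's route buys is the pointwise fact $\pi_t(u)>\beta/L(u)$ at the moment of zooming. Lemma \ref{lem:estimated-advgap} explicitly reuses this fact to lower-bound $p_t(u)$, so your proof establishes the present lemma but does not supply that downstream ingredient.

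Two minor remarks. First, your bound already gives exactly $h\le\lceil\log_2(1/\beta)\rceil$ for every active node. That is also all the paper's proof gives, since it bounds the node being split and not its children, so no extra ``additive one'' is lost. Second, your claim that $1/(\beta T)$ is the weaker scale holds only when $\beta\ge T^{-1/2}$. Under the parameters of Theorem \ref{thm:adversarial} one has $\beta\lesssim T^{-1/2}$, and then it is the stronger scale. This is irrelevant for the lemma. In fact, your AM--GM observation $L(v)>2/\sqrt{t}$ shows the height is $O(\log T)$ with no condition on $\beta$ at all.
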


\begin{proof}
    When a node $u$ zooms-in at round $t$, it does not satisfy the zoom-in rule at round $t-1$, so $$\textnormal{conf}_t(u)<tL(u),\quad\textnormal{conf}_{t-1}(u)\ge(t-1)L(u).$$
    Taking the difference, $$\frac{\beta}{\pi_t(u)}<L(u)=2^{-h(u)},$$
    which means that $$h(u)\le\log_2\left(\frac{\pi_t(u)}{\beta}\right)\le\log_2\frac{1}{\beta}.$$
\end{proof}

\begin{lemma}
    Suppose $\beta\ge1/T$. For a node $u$ active at round $t$, $$\sum_{s=1}^tL(\textnormal{\textbf{act}}_s(u))\le 2tL(u)\log_2 T.$$
    \label{lem:total-diameter}
\end{lemma}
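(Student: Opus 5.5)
The plan is to decompose the sum according to the depth of the ancestors of $u$. Write $u=u_h,u_{h-1},\dots,u_0=u_{\mathrm{root}}$ for the chain of ancestors of $u$, where $h=h(u)$ and $L(u_k)=2^{-k}$. For each round $s\le t$, the active ancestor $\textbf{act}_s(u)$ is one of these $u_k$. It equals $u_k$ exactly when $s\in[t_d(u_{k-1}),t_d(u_k))$, with the conventions $t_d(u_{-1})=1$ and, since $u$ is still active at round $t$, $t_d(u_h)>t$. Hence
$$\sum_{s=1}^tL(\textbf{act}_s(u))=\sum_{k=0}^{h}2^{-k}\cdot\bigl|\{s\le t:\textbf{act}_s(u)=u_k\}\bigr|\le\sum_{k=0}^{h}2^{-k}\min\{t,\,t_d(u_k)\}.$$

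The key step is to bound each term $2^{-k}\min\{t,t_d(u_k)\}$ by $O(t\,2^{-h})=O(tL(u))$. This follows from Lemma \ref{lem:deactivate-scale}, which gives $t_d(u_{k+1})\ge 2t_d(u_k)-1$. Iterating it along the chain yields $t_d(u_k)-1\le 2^{-(j-k)}(t_d(u_j)-1)$ for $k<j$. Taking $j=h$ would require $t_d(u_h)$, which exceeds $t$ and is not controlled, so I will instead use the last ancestor, $u_{h-1}$. Its deactivation time satisfies $t_d(u_{h-1})\le t$, because $u$ is active at round $t$ and therefore its parent was zoomed in by then. For $k\le h-1$ this gives
$$t_d(u_k)\le 1+2^{-(h-1-k)}(t-1),\qquad\text{so}\qquad 2^{-k}\,t_d(u_k)\le 2^{-k}+2\cdot 2^{-h}t.$$
For $k=h$, the term is trivially at most $2^{-h}t$.

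Summing over $k$ gives $\sum_{k=0}^{h}2^{-k}\min\{t,t_d(u_k)\}\le 2+(2h+1)\,tL(u)$. Lemma \ref{lem:tree-height}, together with $\beta\ge 1/T$, yields $h\le\log_2(1/\beta)\le\log_2 T$. This produces $O(tL(u)\log_2T)$ plus an additive constant $2$.

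The main obstacle is getting the exact constant $2$ and absorbing the additive $2^{-k}$ terms. I would absorb the additive $\sum_k 2^{-k}\le 2$ using $tL(u)\ge 1$: since $u$ is active at round $t$, its parent zoomed in at some round $t'\le t$ with $\beta/\pi_{t'}\le L(u_{h-1})$, and therefore $t\,L(u)\gtrsim\beta\cdot t\gtrsim 1$ for the relevant range. Alternatively, I would refine the counting to show that the rounds assigned to $u_k$ number at most $t_d(u_k)-t_d(u_{k-1})$, rather than $t_d(u_k)$, which tightens the geometric sum. If this bookkeeping leaves a slightly worse constant, such as $(2\log_2T+3)$, it is harmless for the downstream use in Lemma \ref{lem:tree-size}.
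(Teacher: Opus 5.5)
Your strategy is the paper's: split the sum along the ancestor chain $u_0,\dots,u_h=u$, iterate Lemma~\ref{lem:deactivate-scale} backwards from $t_d(u_{h-1})\le t$, and use Lemma~\ref{lem:tree-height} to bound $h$. As written, however, it does not prove the stated inequality. You charge $u_k$ for all $\min\{t,t_d(u_k)\}$ rounds up to its deactivation, rather than only for the rounds in which it is the active ancestor, and this costs a factor of $2$. You arrive at $2+(2h+1)\,tL(u)$. The only depth bound you invoke, $h\le\log_2T$, allows $h=\log_2T$, and then this exceeds $2tL(u)\log_2T$ however the additive $2$ is handled.

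The refinement you list as an option is exactly the paper's proof and should be the main line. Node $u_k$ is the active ancestor precisely for $s\in(t_d(u_{k-1}),t_d(u_k)]$; the root covers $[1,t_d(u_0)]$ and $u$ itself covers $(t_d(u_{h-1}),t]$. Summation by parts then gives
\[
\sum_{s=1}^tL(\textbf{act}_s(u))=\sum_{k=0}^{h-1}t_d(u_k)\bigl[L(u_k)-L(u_{k+1})\bigr]+tL(u).
\]
Since $L(u_k)-L(u_{k+1})=L(u_k)/2$, your iterated bound $t_d(u_k)\le 1+2^{-(h-1-k)}(t-1)$ yields at most $1+(h+1)\,tL(u)$.

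Your justification for absorbing the additive constant is also not valid. The condition $\beta/\pi_{t'}\le L(u_{h-1})$ only gives $tL(u)\ge\beta t/2$, and $\beta\ge 1/T$ says nothing about $\beta t$ for $t<T$. The right tool is the floor $\textnormal{conf}\ge 1/\beta$ in the zoom rule, as in the proof of Lemma~\ref{lem:total-mass}. The parent was zoomed in only if
\[
t_d(u_{h-1})L(u_{h-1})>\textnormal{conf}_{t_d(u_{h-1})}(u_{h-1})\ge 1/\beta,
\]
hence $tL(u)>1/(2\beta)\ge\tfrac12$ for $h\ge1$, using $\beta\le1$. The case $h=0$ is trivial, since the sum is just $t$. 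Then $1+(h+1)tL(u)\le (h+3)\,tL(u)\le 2tL(u)\log_2T$ for $T$ not too small.

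The paper's own display bounds the additive term $\tfrac12\sum_{k<h}L(u_k)=1-L(u)$ by $L(u)$, which is a slip. So this absorption step is needed in the paper's proof too.

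The same floor gives a second repair. It yields $2^{h-1}<\beta T$, and combined with $2^{h-1}<1/\beta$ from the proof of Lemma~\ref{lem:tree-height} this gives $h<\tfrac12\log_2T+1$. With that sharper depth bound, even your cruder $2+(2h+1)tL(u)$ fits under $2tL(u)\log_2T$ for large $T$.
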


\begin{proof}
    Let $v_0,v_1,\dots,v_{h(u)}$ be the path from root to $u$, where $v_{h(u)}=u$. Then,
    $$\begin{aligned}
        \sum_{s=1}^tL(\textbf{act}_s(u))&=t_d(v_0)L(v_0)+\sum_{s=1}^{h(u)-1}[t_d(v_s)-t_d(v_{s-1})]L(v_s)+(t-t_d(v_{h(u)-1}))L(u)\\
        &=\sum_{s=0}^{h(u)-1}t_d(v_s)[L(v_s)-L(v_{s+1})]+tL(u).
    \end{aligned}$$
    By Lemma \ref{lem:deactivate-scale}, $$t_d(v_s)\le\frac{t_d(v_{s+1})+1}{2}\le\cdots\le 2^{s-h(s)+1}t_d(v_{h(s)-1})+\sum_{r=1}^{s-h(s)+1}2^{-r}\le2^{s-h(s)+1}t_d(v_{h(s)-1})+1.$$
    So $$\begin{aligned}
        \sum_{s=1}^tL(\textbf{act}_s(u))&\le\frac{1}{2}\sum_{s=0}^{h(u)-1}L(v_s)(2^{s-h(s)+1}t_d(v_{h(s)-1})+1)+tL(u)\\
        &=\frac{1}{2}\sum_{s=0}^{h(u)-1}2^{h(s)-s}L(u)(2^{s-h(s)+1}t_d(v_{h(s)-1})+1)+tL(u)\\
        &\le L(u)+tL(u)+\sum_{s=0}^{h(u)-1}L(u)t_d(v_{h(s)-1})\\
        &\le L(u)+tL(u)+tL(u)h(u)\le2tL(u)\log_2 T
    \end{aligned}$$
\end{proof}

\begin{lemma} (Lemma C.4 by \citet{podimata2021adaptive})
    Let $[t_a(u),t_d(u)]$ be the interval where $u$ is active, and assume that $t_d(u)<T$, i.e., $u$ satisfies the zoom-in rule in the end. Then,
    $$\sum_{t=t_a(u)}^{t_d(u)}\pi_t(u)\ge\frac{1}{9L(u)^2}.$$
    \label{lem:total-mass}
\end{lemma}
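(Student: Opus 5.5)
The bound follows from the zoom-in rule applied at the two ends of the activity interval of $u$. I will show that $\textnormal{conf}(u)$ must grow by roughly $tL(u)$ over that interval, and that each unit of probability mass $\pi_t(u)$ can add at most a bounded amount of confidence.

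\textbf{Confidence at activation.} When $u$ becomes active at round $t_a(u)$, it inherits $\textnormal{conf}_{t_a(u)-1}(u)=\textnormal{conf}(\text{parent})$. Since the parent was zoomed in at round $t_a(u)-1$,
\[
\textnormal{conf}_{t_a(u)-1}(u)<(t_a(u)-1)L(\text{parent})=2(t_a(u)-1)L(u).
\]
For the root this step is trivial: $\textnormal{conf}_0=1/\beta$ and $L=1$.

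\textbf{Confidence at deactivation.} The node $u$ is not zoomed in at round $t_d(u)-1$ but is zoomed in at $t_d(u)$, so
\[
\textnormal{conf}_{t_d(u)-1}(u)\ge (t_d(u)-1)L(u).
\]
Also $t_d(u)\ge 2t_a(u)-1$ by Lemma \ref{lem:deactivate-scale}, since $t_a(u)$ equals the parent's deactivation time plus one. Hence $t_d(u)-1$ is at least about $2(t_a(u)-1)$, and the confidence must have grown substantially while $u$ was active.

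\textbf{Main obstacle.} The comparison above gives only
\[
\sum_{t=t_a}^{t_d-1}\frac{\beta}{\pi_t(u)}\ge (t_d-1)L(u)-2(t_a-1)L(u),
\]
and the right-hand side may be zero, so this route by itself is not enough. I would instead follow Podimata--Slivkins Lemma C.4 directly. The key fact is that
\[
\textnormal{conf}_t(u)\ge \frac{1}{\beta}+\sum_{s\le t}\beta\ge \frac1\beta .
\]
Before zoom-in at round $t_d$ we need $\textnormal{conf}_{t_d}(u)<t_dL(u)$, which gives $t_d\ge 1/(\beta L(u))$.

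\textbf{Lower bound on the mass.} I then use AM--HM on the increments over the activity interval $I=[t_a,t_d]$. The increments satisfy $\sum_{t\in I}\beta/\pi_t(u)<t_dL(u)$, while the interval has length $|I|\gtrsim t_d/2$. By Cauchy--Schwarz,
\[
\Big(\sum_{t\in I}\pi_t(u)\Big)\Big(\sum_{t\in I}\frac{1}{\pi_t(u)}\Big)\ge |I|^2 ,
\]
so
\[
\sum_{t\in I}\pi_t(u)\ge \frac{\beta|I|^2}{t_dL(u)}\gtrsim \frac{\beta t_d}{4L(u)}\ge \frac{1}{4L(u)^2},
\]
where the last step uses $t_d\ge 1/(\beta L(u))$.

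Two bookkeeping points remain, and they are where I expect the difficulty:
\begin{itemize}
\item Deriving $|I|\ge t_d/3$ from Lemma \ref{lem:deactivate-scale}. This holds with the correct rounding, and the factor $1/3$ squared is what yields the constant $9$.
\item Handling the root separately. Its interval starts at $1$, so there the length bound is immediate.
\end{itemize}
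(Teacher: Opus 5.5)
Your final argument is correct and is essentially the paper's proof. The ingredients match step for step. The zoom-in rule at $t_d(u)$ bounds $\sum_{t\in I}\beta/\pi_t(u)$ by $t_d(u)L(u)$, Lemma \ref{lem:deactivate-scale} gives $|I|=t_d(u)-t_d(v)\ge t_d(u)/3$, and AM--HM (the paper calls it Jensen) then yields $\sum_{t\in I}\pi_t(u)\ge\beta t_d(u)/(9L(u))$. Finally $\textnormal{conf}\ge 1/\beta$ gives $t_d(u)L(u)>1/\beta$, and the ``confidence at activation'' detour is unnecessary.

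One small slip: since $t_a(u)=t_d(v)+1$, the lemma gives $t_d(u)\ge 2t_a(u)-3$, not $2t_a(u)-1$. That is, it gives $|I|\ge (t_d(u)-1)/2$, which together with $|I|\ge 1$ still yields $|I|\ge t_d(u)/3$. The paper's own inequality $(t_d(u)-(t_d(u)+1)/2)^2\ge t_d(u)^2/9$ silently assumes $t_d(u)\ge 3$ here, so your use of $|I|\ge 1$ is slightly more careful.
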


\begin{proof}
    Let $v$ be the parent of $u$. $$\begin{aligned}
        t_d(u)L(u)&>\sum_{t=1}^{t_d(u)}\frac{\beta}{\pi_t(\textnormal{\textbf{act}}_t(u))}+\frac{1}{\beta}\\
        &\ge\sum_{t=t_a(u)}^{t_d(u)}\frac{\beta}{\pi_t(u)}.
    \end{aligned}$$
    Since the function $x\mapsto x^{-1}$ is convex, Jensen's inequality shows that $\frac{1}{n}\sum_{i=1}^nx_i^{-1}\ge\frac{n}{\sum_{i=1}^nx_i}$, so $$\sum_{t=t_a(u)}^{t_d(u)}\frac{\beta}{\pi_t(u)}\ge\frac{\beta(t_d(u)-t_d(v))^2}{\sum_{t=t_a(u)}^{t_d(u)}\pi_t(u)}\ge\frac{\beta(t_d(u)-(t_d(u)+1)/2)^2}{\sum_{t=t_a(u)}^{t_d(u)}\pi_t(u)}\ge\frac{\beta t_d(u)^2}{9\sum_{t=t_a(u)}^{t_d(u)}\pi_t(u)},$$
    where the second last inequality comes from Lemma \ref{lem:deactivate-scale}. Combining the inequalities,
    $$\sum_{t=t_a(u)}^{t_d(u)}\pi_t(u)\ge\frac{\beta t_d(u)^2}{9t_d(u)L(u)}\ge\frac{\beta t_d(u)}{9L(u)}.$$
    Since $\textnormal{conf}_t(u)\ge\frac{1}{\beta}$, the zoom-in rule ensures that $t_d(u)L(u)>\frac{1}{\beta}$, so
    $$\sum_{t=t_a(u)}^{t_d(u)}\pi_t(u)\ge\frac{1}{9L(u)^2}.$$
\end{proof}

\subsection{Lemmas for Concentration}

The first lemma comes from the results by \citet{podimata2021adaptive}, which only depends on the probability $\pi_t$ used when we are sampling arms. Denote the average return for node $u$ as $\mu_t(u)$.

\begin{lemma} (Lemma C.9 of \citet{podimata2021adaptive})
    Fix a round $t$ and a sequence of sets $A_s'\subseteq A_s$. With probability at least $1-2\delta$,
    $$\left|\beta\sum_{s=1}^t\sum_{u\in A_s'}\mu_s(u)-\beta\sum_{s=1}^t\sum_{u\in A_s'}\frac{1\{u=u_s\}\mu_s(x_s)}{\pi_s(u)}\right|\le\beta^2\sum_{s=1}^t\sum_{u\in A_s'}\frac{1}{\pi_s(u)}+\ln(1/\delta).$$
    \label{lem:c9}
\end{lemma}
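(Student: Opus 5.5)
We would prove Lemma \ref{lem:c9} with a Freedman-type exponential martingale argument, in the style of the EXP3.P / EXP3-IX analyses. Set
\[
Z_s:=\beta\sum_{u\in A_s'}\left(\mu_s(u)-\frac{1\{u=u_s\}\mu_s(x_s)}{\pi_s(u)}\right),
\]
and let $\mathcal{G}_{s-1}$ be the sigma-field generated by everything that determines $\pi_s$ and $A_s'$. Here $\mu_s(u)$ is the average reward of node $u$ at round $s$, i.e.\ $\mathbb{E}_{x\sim\mathrm{Unif}(u)}[\mu_s(x)]$.

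First we check that $\{Z_s\}$ is a martingale difference sequence with respect to $\{\mathcal{G}_s\}$. Since $u_s\sim\pi_s$ and $x_s$ is uniform in $u_s$, we have
\[
\mathbb{E}[1\{u=u_s\}\mu_s(x_s)\mid\mathcal{G}_{s-1}]=\pi_s(u)\mu_s(u),
\]
so $\mathbb{E}[Z_s\mid\mathcal{G}_{s-1}]=0$. Delays do not affect this. The lemma only involves the sampling probabilities $\pi_s$, and $\pi_s$ is measurable with respect to the past before $u_s$ is drawn, whichever feedback has arrived by then.

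Next we bound the conditional variance. Let $Y_s:=\beta\sum_{u\in A_s'}\frac{1\{u=u_s\}\mu_s(x_s)}{\pi_s(u)}$. At most one indicator is nonzero, so
\[
Y_s^2\le\beta^2\sum_{u\in A_s'}\frac{1\{u=u_s\}}{\pi_s(u)^2}\quad\text{and}\quad \mathbb{E}[Y_s^2\mid\mathcal{G}_{s-1}]\le\beta^2\sum_{u\in A_s'}\frac{1}{\pi_s(u)}.
\]
This quantity is exactly the right-hand-side term $V_t:=\beta^2\sum_{s\le t}\sum_{u\in A_s'}1/\pi_s(u)$.

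Then we use the exponential supermartingale directly, rather than Freedman with a variance union bound, so that no extra factor appears. For the lower tail, $Y_s\ge0$ and $e^{-y}\le1-y+y^2$ for $y\ge0$. Hence
\[
\mathbb{E}\left[e^{-Y_s+\mathbb{E}[Y_s\mid\mathcal{G}_{s-1}]-\mathbb{E}[Y_s^2\mid\mathcal{G}_{s-1}]}\,\middle|\,\mathcal{G}_{s-1}\right]\le1,
\]
so $\exp\left(\sum_s(Z_s-\mathbb{E}[Y_s^2\mid\mathcal{G}_{s-1}])\right)$ is a supermartingale. Markov's inequality then gives $\sum_s Z_s\le V_t+\ln(1/\delta)$ with probability $1-\delta$.

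For the upper tail, $\sum_s(Y_s-\mathbb{E}Y_s)\le V_t+\ln(1/\delta)$, the inequality $e^{y}\le1+y+y^2$ requires $Y_s\le1$. By the parameter choice $\gamma_s\ge\beta|A_s|$ we have $\pi_s(u)\ge\gamma_s/|A_s|\ge\beta$, hence $Y_s\le\beta/\pi_s(u_s)\le1$. This direction is the main obstacle, because it relies on the lower bound $\pi_s\ge\beta$ supplied by the uniform exploration. If that bound were unavailable, we would instead fall back on the one-sided implicit-exploration trick.

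A union bound over the two tails gives total failure probability $2\delta$. Throughout we must be careful that the sets $A_s'$ are predictable, which holds for subsets of $A_s$ because $A_s$ is determined before round $s$.
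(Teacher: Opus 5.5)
Your argument is correct, and it is essentially the standard Freedman-type exponential-supermartingale argument behind Lemma C.9 of \citet{podimata2021adaptive}, which the paper cites rather than reproves: the variance proxy is $\beta^2\sum_{u\in A_s'}1/\pi_s(u)$, the upper-tail range condition $Y_s\le 1$ comes from $\pi_s(u)\ge\gamma_s/|A_s|\ge\beta$ under uniform exploration, and your use of $Y_s\ge 0$ for the lower tail is a harmless simplification. The only slip is your last sentence, since predictability of $A_s'$ does not follow from $A_s'\subseteq A_s$ (e.g.\ $A_s'=\{u_s\}$ breaks $\mathbb{E}[Z_s\mid\mathcal{G}_{s-1}]=0$), so it must be stated as a hypothesis; it does hold for every instantiation in the paper, namely $A_s$, $\{\textbf{act}_s(u)\}$ for a fixed node $u$, and that set restricted to the deterministic $\tilde{O}_t$.
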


\begin{lemma} (Corollary C.1 of \citet{podimata2021adaptive})
    Fix a round $t$ and a node $u\in A_t$. With probability at least $1-2\delta$,
    $$\left|\sum_{s=1}^t\mu_s(\textnormal{\textbf{act}}_s(u))-\sum_{s=1}^t\frac{\mu_s(x_s)1\{u_s=\textnormal{\textbf{act}}_s(u)\}}{\pi_s(\textnormal{\textbf{act}}_s(u))}\right|\le\beta\sum_{s=1}^t\frac{1}{\pi_s(\textnormal{\textbf{act}}_s(u))}+\frac{\ln(1/\delta)}{\beta}.$$
    \label{lem:c9-single}
\end{lemma}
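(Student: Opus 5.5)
This is Corollary C.1 of \citet{podimata2021adaptive}. I would derive it from Lemma \ref{lem:c9} by a singleton specialization and a rescaling. Fix the round $t$ and the node $u\in A_t$. For each $s\le t$, take the sequence of sets $A_s':=\{\textbf{act}_s(u)\}\subseteq A_s$. This is a valid choice because the active ancestor of $u$ at round $s$ is always an active node. Lemma \ref{lem:c9} then gives, with probability at least $1-2\delta$,
$$\left|\beta\sum_{s=1}^t\mu_s(\textbf{act}_s(u))-\beta\sum_{s=1}^t\frac{1\{u_s=\textbf{act}_s(u)\}\mu_s(x_s)}{\pi_s(\textbf{act}_s(u))}\right|\le\beta^2\sum_{s=1}^t\frac{1}{\pi_s(\textbf{act}_s(u))}+\ln(1/\delta).$$
Dividing by $\beta>0$ yields exactly the claimed inequality.

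The one point needing care is that $A_s'$ must be admissible in Lemma \ref{lem:c9}. The sets $A_s'$ depend on the zooming history, so they are random. The intended reading of Lemma \ref{lem:c9} (as in \citet{podimata2021adaptive}) is that $A_s'$ may be chosen predictably, i.e.\ determined before $u_s$ is sampled. Here $\textbf{act}_s(u)$ is determined by $A_s$, which depends only on information available before round $s$, namely the $\pi$'s and the zoom decisions. The zoom rule uses only $\pi_t$ and not rewards, so delays do not change this measurability.

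If one prefers not to rely on that reading, I would instead re-derive the bound directly. Define the martingale difference
$$Y_s=\beta\left(\mu_s(\textbf{act}_s(u))-\frac{1\{u_s=\textbf{act}_s(u)\}\mu_s(x_s)}{\pi_s(\textbf{act}_s(u))}\right).$$
Its conditional mean is zero, since $x_s$ is uniform in $u_s$ and $\mu_s(\cdot)$ of a node denotes the average over the node. It also satisfies $\beta Y_s\le\beta^2\le 1$ and $\mathbb{E}[Y_s^2\mid\mathcal{F}_{s-1}]\le\beta^2/\pi_s(\textbf{act}_s(u))$. The argument then goes as follows:
\begin{enumerate}[i)]
    \item Show that $\exp\bigl(\sum_s (Y_s-\mathbb{E}[Y_s^2\mid\mathcal{F}_{s-1}])\bigr)$ is a supermartingale, using $e^x\le 1+x+x^2$ for $x\le 1$.
    \item Apply Markov's inequality to get a one-sided deviation bound with failure probability $\delta$.
    \item Repeat for $-Y_s$ and take a union bound to get the two-sided bound with probability $1-2\delta$.
\end{enumerate}
The main obstacle is the bound $-Y_s\le 1$ needed for the lower tail. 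The importance-weighted term can be as large as $\beta/\pi_s\le\beta|A_s|/\gamma_s$, and this is $\le 1$ by the choice $\gamma_s=(2+4\ln T)\beta|A_s|$.
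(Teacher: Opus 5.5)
Your proposal is correct and matches the paper's proof, which is exactly the one-line specialization of Lemma~\ref{lem:c9} to the singleton sets $A_s'=\{\textbf{act}_s(u)\}$, followed by dividing through by $\beta$. Your remarks on the predictability of $\textbf{act}_s(u)$, and your optional direct supermartingale derivation (where $-Y_s\le 1$ follows from $\pi_s\ge\gamma_s/|A_s|$), go beyond what the paper writes but are consistent with it.
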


\begin{proof}
    Apply Lemma \ref{lem:c9} to singleton $A_s'=\{\textbf{act}_s(u)\}$
\end{proof}

\begin{lemma}
    Fix a round $t$ and a node $u\in A_{t+1}$. Let $\tilde{O}_t=\{s:s+d_s\le t\}$. Then with probability at least $1-2\delta$,
    $$\left|\sum_{s\in\hat{O}_t}\mu_s(\textnormal{\textbf{act}}_s(u))-\sum_{s\in\hat{O}_t}\frac{\mu_s(x_s)1\{u_s=\textnormal{\textbf{act}}_s(u)\}}{\pi_s(\textnormal{\textbf{act}}_s(u))}\right|\le\beta\sum_{s\in\hat{O}_t}\frac{1}{\pi_s(\textnormal{\textbf{act}}_s(u))}+\frac{\ln(1/\delta)}{\beta}.$$
    \label{lem:c9-single2}
\end{lemma}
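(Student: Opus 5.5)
The plan is to reduce this to Lemma \ref{lem:c9}, the same way Lemma \ref{lem:c9-single} follows from it, but with a round-dependent choice of index sets. The key observation is that the index set $\tilde{O}_t=\{s:s+d_s\le t\}$ is deterministic, because the delays are oblivious and fixed before the interaction. This set is what the statement calls $\hat{O}_t$, and I read the two as the same.

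Since $\tilde{O}_t$ is deterministic, we may take $A_s'=\{\mathbf{act}_s(u)\}$ for $s\in\tilde{O}_t$ and $A_s'=\emptyset$ for $s\notin\tilde{O}_t$, with $s\le t$. Applying Lemma \ref{lem:c9} over rounds $1,\dots,t$ with this choice gives, with probability $1-2\delta$,
\[
\Bigl|\beta\sum_{s\in\tilde{O}_t}\mu_s(\mathbf{act}_s(u))-\beta\sum_{s\in\tilde{O}_t}\frac{\mu_s(x_s)1\{u_s=\mathbf{act}_s(u)\}}{\pi_s(\mathbf{act}_s(u))}\Bigr|\le\beta^2\sum_{s\in\tilde{O}_t}\frac{1}{\pi_s(\mathbf{act}_s(u))}+\ln(1/\delta).
\]
Dividing by $\beta$ then gives exactly the claimed inequality.

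The one point that needs checking is that Lemma \ref{lem:c9} allows this choice of sets. It is a Freedman/Bernstein-type bound for the martingale with increments
\[
Z_s=\beta\sum_{u\in A_s'}\Bigl(\mu_s(u)-\frac{1\{u=u_s\}\mu_s(x_s)}{\pi_s(u)}\Bigr).
\]
This requires $A_s'$ and $\pi_s$ to be $\mathcal{F}_{s-1}$-measurable, so that $\mathbb{E}[Z_s\mid\mathcal{F}_{s-1}]=0$; here $\mu_s(u)$ is the average reward over $u$, which matches the uniform sampling of $x_s\in u_s$.

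In our setting $\pi_s$ is determined before round $s$, using only feedback that has already arrived. The node $\mathbf{act}_s(u)$ is also determined at round $s$: $u\in A_{t+1}$ is a fixed node, and its active ancestor at round $s$ depends only on the zooming history up to $s$, which is $\mathcal{F}_{s-1}$-measurable. Membership $s\in\tilde{O}_t$ is deterministic. So the increments keep the same conditional mean zero, the bound $|Z_s|\le 1$ (from $\beta\le\pi_s$, via $\gamma_s$), and the same conditional variance bound $\beta^2/\pi_s(\mathbf{act}_s(u))$ as in the undelayed proof. The exponential supermartingale argument behind Lemma \ref{lem:c9} therefore goes through unchanged.

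The main obstacle is exactly this measurability check. Delays change when feedback enters the weights, but they do not change the sampling probabilities $\pi_s$ at the time of sampling, and the conclusion relies on that. If the cited lemma only allows sets chosen by a fixed rule, I would instead repeat its proof directly: apply Freedman's inequality to $\sum_{s\le t}1\{s\in\tilde{O}_t\}Z_s$, using $\beta$ as the exponential tilt parameter.
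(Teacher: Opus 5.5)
Your proposal is correct and matches the paper's proof: the paper also takes $A_s'=\{\mathbf{act}_s(u)\}$ for $s\in\tilde{O}_t$ and $A_s'=\varnothing$ otherwise, applies Lemma \ref{lem:c9}, and divides by $\beta$. Your measurability check supplies the justification the paper leaves implicit: $\tilde{O}_t$ is deterministic because delays are oblivious, and $\pi_s$ and $\mathbf{act}_s(u)$ are fixed before $u_s$ is drawn.
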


\begin{proof}
    For $s\in\tilde{O}_t$, take $A_s'=\{\textbf{act}_s(u)\}$. Otherwise take $A_s'=\varnothing$. Then apply Lemma \ref{lem:c9}.
\end{proof}

The next few lemmas follows \citet{podimata2021adaptive}, but we write the proof explicitly to refine some constants.

\begin{lemma} 
    Fix a round $t$, then with probability at least $1-2\delta$, $$\beta\sum_{s=1}^t\sum_{u\in A_s}\hat{\mu}_s(u)-\beta\sum_{s=1}^t\sum_{u\in A_s}\mu_s(u)\le \ln(1/\delta)+\beta\sum_{s=1}^T|A_s|.$$
    \label{lem:concentration-total}
\end{lemma}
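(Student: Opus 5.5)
The plan is to split $\hat{\mu}_s(u)$ into its two parts and then apply Lemma \ref{lem:c9} with $A_s'=A_s$. By definition,
$$\hat{\mu}_s(u)=\frac{\mu_s(x_s)1\{u=u_s\}}{\pi_s(u)}+\frac{(1+4\ln T)\beta}{\pi_s(u)}.$$
Call the first piece the IPS part and the second the bonus part. Multiplying by $\beta$ and summing over $s\le t$ and $u\in A_s$ gives
$$\beta\sum_{s=1}^t\sum_{u\in A_s}\hat{\mu}_s(u)-\beta\sum_{s=1}^t\sum_{u\in A_s}\mu_s(u)=\Big[\beta\sum_{s,u}\tfrac{1\{u=u_s\}\mu_s(x_s)}{\pi_s(u)}-\beta\sum_{s,u}\mu_s(u)\Big]+(1+4\ln T)\beta^2\sum_{s,u}\tfrac{1}{\pi_s(u)}.$$

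For the bracketed IPS term, Lemma \ref{lem:c9} with $A_s'=A_s$ gives, with probability at least $1-2\delta$, a bound of $\beta^2\sum_{s,u}1/\pi_s(u)+\ln(1/\delta)$. This concentration bound uses only that $u_s$ is drawn from $\pi_s$, with $\pi_s$ measurable before round $s$. Delays change when feedback is incorporated, but not the sampling law, so the martingale structure behind Lemma \ref{lem:c9} is unaffected.

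Adding the two pieces, the remaining quantity to control is
$$(2+4\ln T)\beta^2\sum_{s\le t}\sum_{u\in A_s}\frac{1}{\pi_s(u)}.$$
Use the exploration floor $\pi_s(u)\ge\gamma_s/|A_s|$ together with the parameter choice $\gamma_s=(2+4\ln T)\beta|A_s|$. Together these give $(2+4\ln T)\beta/\pi_s(u)\le 1$ for each active $u$. Hence
$$(2+4\ln T)\beta^2\sum_{s\le t}\sum_{u\in A_s}\frac{1}{\pi_s(u)}\le\beta\sum_{s\le t}|A_s|\le\beta\sum_{s=1}^T|A_s|,$$
which is exactly the claimed bound.

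The main obstacle is bookkeeping. The bonus coefficient $(1+4\ln T)$ and the concentration coefficient $1$ must combine to at most the $(2+4\ln T)$ built into $\gamma_s$, and they do so exactly. We also need $\pi_s(u)\ge\gamma_s/|A_s|$, which holds because $\pi_s$ mixes in the uniform distribution with weight $\gamma_s$. If a different version of Lemma \ref{lem:c9} had a larger variance coefficient, I would absorb the surplus through the slack $\gamma_s\le1/2$ at the cost of a constant factor.
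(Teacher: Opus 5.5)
Your proposal is correct and follows the paper's proof step for step. You split $\hat{\mu}_s(u)$ into the IPS part and the bonus $(1+4\ln T)\beta/\pi_s(u)$, invoke Lemma~\ref{lem:c9} with $A_s'=A_s$, and use $\pi_s(u)\ge\gamma_s/|A_s|$ with $\gamma_s=(2+4\ln T)\beta|A_s|$ to get $(2+4\ln T)\beta^2\sum_{u\in A_s}1/\pi_s(u)\le\beta|A_s|$. Your remark that delays do not change the sampling law, so the martingale argument behind Lemma~\ref{lem:c9} still applies, is a point the paper leaves implicit.
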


\begin{proof}
    By Lemma \ref{lem:c9}, with probability at least $1-2\delta$,
    $$\begin{aligned}
        &\quad\beta\sum_{s=1}^t\sum_{u\in A_s}\hat{\mu}_s(u)-\beta\sum_{s=1}^t\sum_{u\in A_s}\mu_s(u)\\
        &\le\beta\sum_{s=1}^t\sum_{u\in A_s}\left[\frac{\mu_s(x_s)1\{u_t=u\}}{\pi_s(u)}-\mu_s(u)+\frac{(1+4\ln T)\beta}{\pi_s(u)}\right]\\
        &\le\ln(1/\delta)+\beta\sum_{s=1}^t\sum_{u\in A_s}\frac{\beta+(1+4\ln T)\beta}{\pi_s(u)}\\
        &\le\ln(1/\delta)+\beta\sum_{s=1}^t\sum_{u\in A_s}\frac{|A_s|(2+4\ln T)\beta}{\gamma_s}\\
        &\le\ln(1/\delta)+\beta\sum_{s=1}^t|A_s|.
    \end{aligned}$$
\end{proof}

\begin{lemma}
    Fix a round $t$ and active node $u$. Let $x^*\in u$ be an arbitrary arm. With probability at least $1-2\delta$,
    $$\sum_{s=1}^t\hat{\mu}_s(\textnormal{\textbf{act}}_s(u))\ge\sum_{s=1}^t\mu_s(x^*)-\frac{4\ln T+\ln(1/\delta)}{\beta}.$$
    \label{lem:concentration-optimal}
\end{lemma}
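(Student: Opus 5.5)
The plan is to split the estimated cumulative reward into its IPS part and its confidence-bonus part. Then apply Lemma \ref{lem:c9-single} to the IPS part, and use the bonus to absorb the resulting deviation term. Write $v_s:=\textbf{act}_s(u)$ for the active ancestor of $u$ at round $s$. By definition,
$$\sum_{s=1}^t\hat{\mu}_s(v_s)=\sum_{s=1}^t\frac{\mu_s(x_s)1\{u_s=v_s\}}{\pi_s(v_s)}+(1+4\ln T)\beta\sum_{s=1}^t\frac{1}{\pi_s(v_s)}.$$
Lemma \ref{lem:c9-single} (Corollary C.1) gives the following with probability at least $1-2\delta$:
$$\sum_{s=1}^t\frac{\mu_s(x_s)1\{u_s=v_s\}}{\pi_s(v_s)}\ge\sum_{s=1}^t\mu_s(v_s)-\beta\sum_{s=1}^t\frac{1}{\pi_s(v_s)}-\frac{\ln(1/\delta)}{\beta}.$$
Here $\mu_s(v)$ denotes the average reward over node $v$, which is the expected reward because $x_s$ is uniform in $u_s$. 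Substituting, the $-\beta\sum_s 1/\pi_s(v_s)$ deviation is cancelled by one unit of the bonus. This leaves
$$\sum_{s=1}^t\hat{\mu}_s(v_s)\ge\sum_{s=1}^t\mu_s(v_s)+4\ln T\cdot\beta\sum_{s=1}^t\frac{1}{\pi_s(v_s)}-\frac{\ln(1/\delta)}{\beta}.$$

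The second step replaces $\mu_s(v_s)$ by $\mu_s(x^*)$. Since $x^*\in u\subseteq v_s$, the Lipschitz property gives $\mu_s(v_s)\ge\mu_s(x^*)-L(v_s)$. It therefore suffices to show
$$\sum_{s=1}^tL(v_s)\le 4\ln T\cdot\beta\sum_{s=1}^t\frac{1}{\pi_s(v_s)}+\frac{4\ln T}{\beta}.$$
The right-hand side equals $4\ln T\cdot\textnormal{conf}_t(u)$, because $\textnormal{conf}_t(u)=\frac{1}{\beta}+\sum_{s\le t}\frac{\beta}{\pi_s(v_s)}$. This is where the zooming rule enters.

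Since $u$ is active at round $t$, it did not satisfy the zoom-in rule at round $t-1$, so $\textnormal{conf}_{t-1}(u)\ge(t-1)L(u)$. The case in which $u$ was just created is handled via its parent; I would use $\textnormal{conf}_t(u)\ge \tfrac12 tL(u)$ up to constants. Lemma \ref{lem:total-diameter} gives $\sum_{s\le t}L(v_s)\le 2tL(u)\log_2 T$. Combining the two bounds yields
$$\sum_{s\le t}L(v_s)\lesssim \log T\cdot\textnormal{conf}_t(u).$$
Matching the constant $4\ln T$ requires tracking $\log_2T$ against $\ln T$ and the factor lost at activation. If the constants do not close, I would prove the inequality directly from the telescoping form in the proof of Lemma \ref{lem:total-diameter}. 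That form charges each ancestor's contribution $t_d(v_s)L(v_s)$ to $\textnormal{conf}_{t_d(v_s)-1}(v_s)\ge(t_d(v_s)-1)L(v_s)$. Monotonicity of $\textnormal{conf}$ along the path then bounds every term by $\textnormal{conf}_t(u)$, and there are at most $h(u)\le\log_2(1/\beta)=O(\ln T)$ terms by Lemma \ref{lem:tree-height}.

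Putting the pieces together gives
$$\sum_{s\le t}\hat{\mu}_s(v_s)\ge\sum_{s\le t}\mu_s(x^*)-\frac{4\ln T}{\beta}-\frac{\ln(1/\delta)}{\beta}.$$
The main obstacle is the second step: the bonus must dominate the cumulative discretization bias $\sum_s L(\textbf{act}_s(u))$ with exactly the constant $4\ln T$. This requires the careful accounting across ancestors and the edge case at activation. Delays play no role here, because the estimator and the conf sums depend only on the sampling distributions $\pi_s$.
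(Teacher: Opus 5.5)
Your proposal is correct and follows the paper's proof essentially step for step. Both apply Lemma~\ref{lem:c9-single} to the IPS part, cancel its $\beta\sum_s 1/\pi_s(\textbf{act}_s(u))$ deviation with one unit of the bonus, pay $\sum_s L(\textbf{act}_s(u))$ for the Lipschitz bias, and absorb that bias with the remaining $4\ln T$ share of $\textnormal{conf}_t(u)$, using $\textnormal{conf}_{t-1}(u)\ge(t-1)L(u)$ and Lemma~\ref{lem:total-diameter}. The constants do close. The paper simply asserts $4(t-1)L(u)\ln T\ge 2tL(u)\log_2T$, which holds only for $t\ge 4$, so your handling of the activation case, small $t$, and the telescoping fallback is, if anything, more careful than the original.
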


\begin{proof}
    By Lemma \ref{lem:c9-single}, with probability $1-2\delta$,
    $$\begin{aligned}
        \sum_{s=1}^t\hat{\mu}_s(u)&=\sum_{s=1}^t\frac{\mu_s(x_s)1\{u_s=u\}}{\pi_s(u)}+\sum_{s=1}^t\frac{(1+4\ln T)\beta}{\pi_s(u)}\\
        &\ge\sum_{s=1}^t\mu_s(\textbf{act}_s(u))+\sum_{s=1}^t\frac{4\beta\ln T}{\pi_s(u)}-\frac{\ln(1/\delta)}{\beta}\\
        &\ge\sum_{s=1}^t\mu_s(x^*)-\sum_{s=1}^tL(\textbf{act}_s(u))+\sum_{s=1}^t\frac{4\beta\ln T}{\pi_s(u)}-\frac{\ln(1/\delta)}{\beta}.
    \end{aligned}$$
    By the zoom-in rule and Lemma \ref{lem:total-diameter}, $$\sum_{s=1}^t\frac{4\beta\ln T}{\pi_s(\textbf{act}_s(u))}-\sum_{s=1}^tL(\textbf{act}_s(u))\ge4(t-1)L(u)\ln T-\frac{4\ln T}{\beta}-2tL(u)\log_2T\ge-\frac{4\ln T}{\beta}.$$
\end{proof}

\begin{lemma}
    Fix a round $t$ and active node $u\in A_{t+1}$. Let $x^*\in u$ be an arbitrary arm. With probability at least $1-2\delta$,
    $$\sum_{s\in\tilde{O}_t}\hat{\mu}_s(\textnormal{\textbf{act}}_s(u))\ge\sum_{s\in\tilde{O}_t}\mu_s(x^*)-2tL(u)\log_2 T-\frac{\ln(1/\delta)}{\beta}.$$
    \label{lem:concentration-optimal2}
\end{lemma}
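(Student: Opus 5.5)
The plan follows the proof of Lemma \ref{lem:concentration-optimal}, with every sum restricted to the observed rounds $\tilde{O}_t$. Since $\hat{\mu}_s(\textbf{act}_s(u))$ is the IPS term plus the nonnegative correction $\frac{(1+4\ln T)\beta}{\pi_s(\textbf{act}_s(u))}$, we first write
\begin{equation*}
\sum_{s\in\tilde{O}_t}\hat{\mu}_s(\textbf{act}_s(u))=\sum_{s\in\tilde{O}_t}\frac{\mu_s(x_s)1\{u_s=\textbf{act}_s(u)\}}{\pi_s(\textbf{act}_s(u))}+\sum_{s\in\tilde{O}_t}\frac{(1+4\ln T)\beta}{\pi_s(\textbf{act}_s(u))}.
\end{equation*}
Lemma \ref{lem:c9-single2} then lower-bounds the IPS part. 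With probability at least $1-2\delta$ it is at least
\begin{equation*}
\sum_{s\in\tilde{O}_t}\mu_s(\textbf{act}_s(u))-\beta\sum_{s\in\tilde{O}_t}\frac{1}{\pi_s(\textbf{act}_s(u))}-\frac{\ln(1/\delta)}{\beta}.
\end{equation*}
The subtracted variance term is dominated by the correction term, since $(1+4\ln T)\beta\ge\beta$. The two therefore combine into a nonnegative quantity $4\ln T\sum_{s\in\tilde{O}_t}\beta/\pi_s(\textbf{act}_s(u))\ge 0$, which we can simply drop.

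Next we pass from node averages to the fixed arm $x^*\in u$ by the Lipschitz property. Here $\mu_s(\textbf{act}_s(u))$ is the average of $\mu_s$ over the cube $\textbf{act}_s(u)$. Because $x^*\in u\subseteq\textbf{act}_s(u)$, we have $\mu_s(\textbf{act}_s(u))\ge\mu_s(x^*)-L(\textbf{act}_s(u))$. Summing over $s\in\tilde{O}_t\subseteq\{1,\dots,t\}$ gives
\begin{equation*}
\sum_{s\in\tilde{O}_t}\mu_s(\textbf{act}_s(u))\ge\sum_{s\in\tilde{O}_t}\mu_s(x^*)-\sum_{s=1}^tL(\textbf{act}_s(u)).
\end{equation*}
Lemma \ref{lem:total-diameter} applies because $u\in A_{t+1}$ is active at round $t$ and $\beta\ge 1/T$. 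It bounds the diameter sum by $2tL(u)\log_2T$. Combining the displays yields the claimed inequality.

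\textbf{Main obstacle.} Unlike Lemma \ref{lem:concentration-optimal}, we cannot use the zoom-in rule to absorb the diameter sum into the correction term. The sum $\sum_{s\in\tilde{O}_t}\beta/\pi_s$ runs only over observed rounds, while the rule controls the sum over all rounds, so the $2tL(u)\log_2T$ term must be kept explicitly; this is exactly why it appears in the statement. The only other care point is the application of Lemma \ref{lem:c9-single2}. The index set $\tilde{O}_t$ and the ancestors $\textbf{act}_s(u)$ are determined by the oblivious delays and the tree evolution, and the freedom to choose $A'_s\subseteq A_s$ in Lemma \ref{lem:c9} is what permits this restriction. We rely on that lemma as stated rather than re-deriving the martingale argument.
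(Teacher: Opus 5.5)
Your proposal is correct and is essentially the paper's proof. You apply Lemma~\ref{lem:c9-single2} on $\tilde{O}_t$, let the correction term $(1+4\ln T)\beta/\pi_s(\textbf{act}_s(u))$ absorb the deviation term and drop the nonnegative remainder (this is what the paper means by using $\sum_{s\in\tilde{O}_t}4\beta\ln T/\pi_s\ge 0$ instead of the zoom-in rule), pass to $x^*$ via Lipschitzness, and bound $\sum_{s=1}^t L(\textbf{act}_s(u))$ by Lemma~\ref{lem:total-diameter}.

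One small imprecision, which the paper shares: $u\in A_{t+1}$ need not be active at round $t$, since it may be a child created by a zoom at the end of round $t$. The argument of Lemma~\ref{lem:total-diameter} still goes through, because that lemma's proof only needs the parent of $u$ to be deactivated by round $t$.
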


\begin{proof}
    Similar to the proof of Lemma \ref{lem:concentration-optimal}, but we directly use $\sum_{s\in\tilde{O}_t}\frac{4\beta\ln T}{\pi_s(u)}\ge0$ instead.
\end{proof}

\subsection{Bound on the Number of Activated Nodes}

In this section, we control the number of activated nodes. We will provide a worst-case estimation through covering dimension and a high-probability through the modified adversarial zooming dimension. Recall $\tilde{O}_t:=\{s:s+d_s\le t\}$, that is, $\tilde{O}_t$ contains those rewards observable before $t$-th pull. Then, $$w_{\tau_0(t)}(u)=L(u)^{d}\exp\left(\eta\sum_{s\in\tilde{O}_{t-1}}\hat{\mu}_s(\textbf{act}_s(u))\right).$$

\begin{lemma}
    If $u$ is zoomed in after round $t$ with $L(u)\le\frac{1}{3+4\ln T}$, then for any active node $u^*$,
    $$\sum_{s\in\tilde{O}_{t-1}}[\hat{\mu}_s(\textnormal{\textbf{act}}_s(u^*))-\hat{\mu}_s(\textnormal{\textbf{act}}_s(u))]\le\frac{\ln\left(\frac{L(u)^d}{\beta L(u^*)^d}\right)}{\eta}.$$
    \label{lem:estimated-advgap}
\end{lemma}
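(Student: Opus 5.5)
The plan is to compare the weights of $u$ and $u^*$ at the moment the action of round $t$ is sampled, and then use the zoom-in condition to lower bound the sampling probability of $u$. By the weight representation just before the lemma, the relevant weights are
\[
w_{\tau_0(t)}(u)=L(u)^d\exp\Big(\eta\sum_{s\in\tilde{O}_{t-1}}\hat{\mu}_s(\textbf{act}_s(u))\Big),
\]
and similarly for $u^*$. The second of these is at most the total weight $\sum_{v\in A_t}w_{\tau_0(t)}(v)$, so
\[
p_{\tau_0(t)}(u)=\frac{w_{\tau_0(t)}(u)}{\sum_{v}w_{\tau_0(t)}(v)}\le\frac{w_{\tau_0(t)}(u)}{w_{\tau_0(t)}(u^*)}=\frac{L(u)^d}{L(u^*)^d}\exp\Big(-\eta\sum_{s\in\tilde{O}_{t-1}}[\hat{\mu}_s(\textbf{act}_s(u^*))-\hat{\mu}_s(\textbf{act}_s(u))]\Big).
\]
If $u^*$ is not in $A_t$ but is a descendant of an active node, the same identity still holds after distributing weights, so I would use the representation directly.

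Consequently, it suffices to show $p_{\tau_0(t)}(u)\ge\beta$. Rearranging then gives $\eta\sum[\cdots]\le\ln\big(L(u)^d/(\beta L(u^*)^d)\big)$, which is exactly the claim.

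To lower bound $p_{\tau_0(t)}(u)$, I use the zooming step, as in the proof of Lemma \ref{lem:tree-height}.
\begin{itemize}
\item Since $u$ is zoomed in after round $t$ but not at round $t-1$, we have $\textnormal{conf}_t(u)<tL(u)$ and $\textnormal{conf}_{t-1}(u)\ge(t-1)L(u)$. Subtracting gives $\beta/\pi_t(u)<L(u)$, that is, $\pi_t(u)>\beta/L(u)$.
\item Next, $\pi_t(u)=(1-\gamma_t)p_{\tau_0(t)}(u)+\gamma_t/|A_t|$ with $\gamma_t/|A_t|=(2+4\ln T)\beta$. Hence
\[
p_{\tau_0(t)}(u)\ge\pi_t(u)-(2+4\ln T)\beta>\beta\Big(\frac{1}{L(u)}-2-4\ln T\Big).
\]
\item The hypothesis $L(u)\le 1/(3+4\ln T)$ gives $1/L(u)-2-4\ln T\ge1$, so $p_{\tau_0(t)}(u)>\beta$.
\end{itemize}
This explains the specific threshold in the hypothesis.

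The main obstacle is the edge case in which $u$ was just activated at round $t$, so that $\textnormal{conf}_{t-1}(u)$ is inherited from its parent. Since children copy $\textnormal{conf}$, the inequality $\textnormal{conf}_{t-1}(u)\ge(t-1)L(u)$ should still hold. If the parent itself was zoomed at round $t-1$, it satisfied $\textnormal{conf}\ge(t-2)L(\text{parent})\ge(t-1)L(u)$ for $t\ge2$. I would verify this carefully, and handle the $t=1$ root case separately, where $L(u)=1$ violates the hypothesis anyway.

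A second point to check is the indexing of $\pi_t$ versus $p_{\tau_0(t)}$, namely that the distribution used at round $t$ is based on weights with exactly the observations $\tilde{O}_{t-1}$. This matches the representation stated just before the lemma.
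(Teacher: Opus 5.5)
Your proposal is correct and matches the paper's argument: the paper also derives $\beta/\pi_t(u)<L(u)$ from the two consecutive zoom-in checks, turns it into $p_{\tau_0(t)}(u)\ge\beta/L(u)-(2+4\ln T)\beta\ge\beta$, and then compares $w_{\tau_0(t)}(u)$ with $w_{\tau_0(t)}(u^*)\le\sum_{v\in A_t}w_{\tau_0(t)}(v)$ via the weight representation before taking logarithms. Your extra check for a freshly activated $u$ (with inherited $\textnormal{conf}$) is something the paper silently skips, and it is sound with one correction: $2(t-2)L(u)\ge(t-1)L(u)$ needs $t\ge3$, not $t\ge 2$, but $t=2$ cannot arise because $\textnormal{conf}_1(u_{\text{root}})=1/\beta+\beta\ge2>L(u_{\text{root}})$, and an induction over rounds handles a parent that was itself freshly activated.
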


\begin{proof}
    Recall that when proving Lemma \ref{lem:tree-height}, we show that $\frac{\beta}{\pi_t(u)}\le L(u)$ must be satisfied. Then, by $\pi_t(u)=(1-\gamma_t)p_{\tau_0(t)}(u)+\frac{\gamma_t}{|A_t|}$, $$p_t(u)=\frac{\pi_t(u)-\gamma_t/|A_t|}{1-\gamma_t}\ge\frac{\beta}{L(u)}-(2+4\ln T)\beta\ge\beta,$$
    where the last inequality is by $L(u)^{-1}\ge3+4\ln T$.

    By the definition of $p_t(u)$, this implies that
    $$\begin{aligned}
        w_{\tau_0(t)}(u)&=L(u)^{d}\exp\left(\eta\sum_{s\in\tilde{O}_{t-1}}\hat{\mu}_s(\textnormal{\textbf{act}}_s(u))\right)\\
        &\ge\beta\sum_{v\in A_t}w_{\tau_0(t)}(v)\\
        &\ge\beta L(u^*)^d\exp\left(\eta\sum_{s\in\tilde{O}_{t-1}}\hat{\mu}_s(\textbf{act}_s(u^*))\right).
    \end{aligned}$$
    Taking logarithm on both sides, $$\sum_{s\in\tilde{O}_{t-1}}[\hat{\mu}_s(\textbf{act}_s(u^*))-\hat{\mu}_s(\textbf{act}_s(u))]\le\frac{\ln\left(\frac{L(u)^d}{\beta L(u^*)^d}\right)}{\eta}.$$
\end{proof}

\begin{lemma}
    Fix a node $u$ with $L(u)\le\frac{1}{3+4\ln T}$ and suppose it is zoomed-in in round $t$. Then with probability at least $1-O(T^{-3})$,
    \label{lem:advgap-single}
    $$\frac{1}{t}\sum_{s\in\tilde{O}_{t-1}}[\mu_s(x^*)-\mu_s(x)]\le O(L(u)\ln T).$$
\end{lemma}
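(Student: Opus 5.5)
The plan is to sandwich the estimated gap from Lemma \ref{lem:estimated-advgap} between two concentration bounds, one for the comparator $x^*$ and one for the arm $x\in u$, both over the observed set $\tilde{O}_{t-1}$. Here $x^*$ is arbitrary in $X$. For the comparison node I take $u^*$ to be the active node at round $t$ containing $x^*$.

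\textbf{Comparator side.} Lemma \ref{lem:concentration-optimal2} applied to $u^*$ with $\delta=T^{-3}$ gives
$$\sum_{s\in\tilde{O}_{t-1}}\hat{\mu}_s(\textbf{act}_s(u^*))\ge\sum_{s\in\tilde{O}_{t-1}}\mu_s(x^*)-2tL(u^*)\log_2T-\frac{3\ln T}{\beta}.$$
The term $tL(u^*)$ could be large if $u^*$ is coarse. To control it, I use that $u^*$ was not zoomed before round $t$, so $tL(u^*)\lesssim \textnormal{conf}_{t}(u^*)$. I would prefer to sidestep this altogether by choosing $u^*$ as the descendant of the node containing $x^*$ at depth $h(u)$. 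Its $\textbf{act}_s$ chain is the same, and the estimated weight comparison of Lemma \ref{lem:estimated-advgap} still applies because the weight representation (\ref{eq:weight}) is defined for every tree node, with the total weight dominating it. With this choice $L(u^*)=L(u)$, so the error is $O(tL(u)\ln T)$ and the ratio $L(u)^d/L(u^*)^d$ equals $1$.

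\textbf{Arm side.} Let $x\in u$. Lemma \ref{lem:c9-single2} bounds $\sum_{s\in\tilde{O}_{t-1}}\hat{\mu}_s(\textbf{act}_s(u))$ from above by $\sum_{s}\mu_s(\textbf{act}_s(u))$ plus the correction terms, which are
$$\beta\sum_s\frac{1}{\pi_s}+(1+4\ln T)\beta\sum_s\frac{1}{\pi_s}+\frac{3\ln T}{\beta}\le(2+4\ln T)\,\textnormal{conf}_{t-1}(u)+\frac{3\ln T}{\beta}.$$
Since $u$ had not zoomed at round $t-1$, while it zooms at round $t$, we have $\textnormal{conf}_{t-1}(u)\le\textnormal{conf}_t(u)<tL(u)$, so the correction is $O(tL(u)\ln T)+O(\ln T/\beta)$. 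Then Lipschitzness together with Lemma \ref{lem:total-diameter} converts $\sum_s\mu_s(\textbf{act}_s(u))$ into $\sum_s\mu_s(x)$ plus $2tL(u)\log_2T$.

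\textbf{Combining and the main obstacle.} Putting the two sides together gives
$$\sum_{s\in\tilde{O}_{t-1}}[\mu_s(x^*)-\mu_s(x)]\le\frac{\ln(1/\beta)}{\eta}+O(tL(u)\ln T)+O\!\left(\frac{\ln T}{\beta}\right).$$
With $\eta=\beta\ge 1/T$, both $\ln(1/\beta)/\eta$ and $\ln T/\beta$ are $O(\ln T/\beta)$. The zoom rule gives $1/\beta\le\textnormal{conf}_t(u)<tL(u)$, so these terms are also $O(tL(u)\ln T)$. Dividing by $t$ yields the claim.

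Two points remain. First, the probability: the bound must hold uniformly over $x^*$, or, if $u^*$ ranges over a polynomial family of nodes, a union bound costs only a constant in $\ln T$. Second, the sub-tree nodes: the step I expect to be most delicate is justifying the weight comparison for a non-active descendant $u^*$. If that fails, I would fall back to $u^*$ active and bound $tL(u^*)$ via Lemma \ref{lem:deactivate-scale}, accepting the factor $\ln(L(u)^d/L(u^*)^d)\le d\ln T$.
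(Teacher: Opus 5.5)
Your structure matches the paper's proof: the same three-way split through $\hat{\mu}_s(\textbf{act}_s(u^*))$ and $\hat{\mu}_s(\textbf{act}_s(u))$, and the same arm-side bound via Lemma~\ref{lem:c9-single2}, Lemma~\ref{lem:total-diameter} and $\textnormal{conf}_{t-1}(u)<tL(u)$. You also absorb the $O(\ln T/\beta)$ terms the same way, via $1/\beta\le\textnormal{conf}_t(u)<tL(u)$; this is what the paper's ``$L(u)\ge 1/\beta$'' should read.

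You are right that the comparator is the weak point. Lemma~\ref{lem:concentration-optimal2} applied to $u^*$ costs $2tL(u^*)\log_2 T$, and the paper's proof simply writes $2tL(u)\log_2 T$ there, so it gives no argument for a coarse $u^*$. Your remedy does not work either. The error in Lemma~\ref{lem:concentration-optimal2} is the Lipschitz bias $\sum_{s\in\tilde{O}_{t-1}}L(\textbf{act}_s(u^*))$. Lemma~\ref{lem:total-diameter} turns this into $2tL(u^*)\log_2T$ only for a node that is itself active at round $t$.

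Take a non-active descendant $u^*$ of the active node $a\ni x^*$. As you note, $\textbf{act}_s(u^*)=\textbf{act}_s(a)$, so $\hat{\mu}_s(\textbf{act}_s(u^*))=\hat{\mu}_s(\textbf{act}_s(a))$. The bias is then $\sum_{s\in\tilde{O}_{t-1}}L(\textbf{act}_s(a))\ge|\tilde{O}_{t-1}|\,L(a)$, which can be of order $tL(a)\gg tL(u)\ln T$. Relabeling the comparator therefore changes nothing in the estimator. It only improves the weight ratio in Lemma~\ref{lem:estimated-advgap}, and that ratio was already at most $1/\beta$ whenever $L(u^*)\ge L(u)$.

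The fallback through Lemma~\ref{lem:deactivate-scale} also fails. That lemma compares deactivation times of a parent and child on one root-to-leaf path, and says nothing about a node in another branch. The zoom rule alone only gives $(t-1)L(a)\le\textnormal{conf}_{t-1}(a)\le 1/\beta+(t-1)/(2+4\ln T)$, which allows $L(a)$ of order $1/\ln T$ regardless of $L(u)$.

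What is missing is a mechanism that pays for the comparator's bias. Without delays, this is exactly the job of the $(1+4\ln T)\beta/\pi_s$ bonus in Lemma~\ref{lem:concentration-optimal}. Since $a$ is still active, $\textnormal{conf}_{t-1}(a)\ge(t-1)L(a)$. So the bonus summed over all rounds dominates $\sum_{s}L(\textbf{act}_s(a))\le 2tL(a)\log_2T$ up to $O(\ln T/\beta)$, whatever the diameter of $a$.

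With delays, the comparator's estimate collects the bonus only on $\tilde{O}_{t-1}$, and Lemma~\ref{lem:concentration-optimal2} discards it entirely. The cumulative zoom condition does not restrict to that subset. Keeping the bonus recovers the cancellation only up to an additive loss as large as the number of rounds $s\le t-1$ with $s+d_s\ge t$, which is not $O(tL(u)\ln T)$ in general.

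So when $x^*$ lies in an active node coarser than $u$, your argument does not establish the bound, and neither does the paper's step that it mirrors. You need either such a cancellation argument or an extra delay-dependent term in the statement. When the active node containing $x^*$ is at least as fine as $u$, your fallback (paying $d\ln T$ in the weight ratio) works.
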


\begin{proof}
    Let $x\in u$, and let $x^*\in u^*$ be an arbitrary arm. By Lemma \ref{lem:c9-single2}, with probability at least $1-2\delta$,
    $$\begin{aligned}
        &\qquad\sum_{s\in\tilde{O}_{t-1}}[\hat{\mu}_s(\textbf{act}_s(u))-\mu_s(x)]\\
        &\le\sum_{s\in\tilde{O}_{t-1}}[\mu_s(\textbf{act}_s(u))-\mu_s(x)]+\sum_{s\in\tilde{O}_{t-1}}\frac{\beta+(1+4\ln T)\beta}{\pi_s(\textbf{act}_s(u))}+\frac{\ln(1/\delta)}{\beta}\\
        &\le\sum_{s=1}^tL(\textbf{act}_s(u))+\sum_{s=1}^t\frac{(2+4\ln T)\beta}{\pi_s(\textbf{act}_s(u))}+\frac{\ln(1/\delta)}{\beta}.
    \end{aligned}$$
    By Lemma \ref{lem:total-diameter}, $\sum_{s=1}^tL(\textbf{act}_s(u))\le 4tL(u)\ln T$. Also by the zoom-in rule, $$\sum_{s=1}^t\frac{\beta}{\pi_s(\textbf{act}_s(u))}\le tL(u).$$
    Hence, $$\sum_{s\in\tilde{O}_{t-1}}[\hat{\mu}_s(\textbf{act}_s(u))-\mu_s(x)]\le 10tL(u)\ln T+\frac{\ln(1/\delta)}{\beta}.$$
    Then, use Lemma \ref{lem:concentration-optimal2} and Lemma \ref{lem:estimated-advgap}, 
    $$\begin{aligned}
        \sum_{s\in\tilde{O}_{t-1}}[\mu_s(x^*)-\mu_s(x)]
        &\le\sum_{s\in\tilde{O}_{t-1}}[\mu_s(x^*)-\hat{\mu}_s(\textbf{act}_s(u^*))]\\
        &\quad+\sum_{s\in\tilde{O}_{t-1}}[\hat{\mu}_s(\textbf{act}_s(u^*)-\hat{\mu}_s(\textbf{act}_s(u))]\\
        &\quad+\sum_{s\in\tilde{O}_{t-1}}[\hat{\mu}_s(\textbf{act}_s(u))-\mu_s(x)]\\
        &\le2tL(u)\log_2T+\frac{2\ln(1/\delta)}{\beta}+\frac{\ln\left(\frac{2L(u)^d}{\beta L(u^*)^d}\right)}{\eta}+10tL(u)\ln T\\
        &\le\frac{2\ln(1/\delta)+2d\ln T+\ln(2/\eta)}{\eta}+14tL(u)\ln T.
    \end{aligned}$$
    Note that $L(u)\ge\frac{1}{\beta}$ by Lemma \ref{lem:tree-height}.  If we take $\delta=T^{-3}$ and $\eta=\beta\ge\frac{1}{T}$, we have $$\frac{1}{t}\sum_{s\in\tilde{O}_{t-1}}[\mu_s(x^*)-\mu_s(x)]\le O(L(u)\ln T).$$
\end{proof}

\begin{lemma}
    With probability $1-O(T^{-2})$, the total number of activated nodes are bounded by
    $$|A_T|\lesssim 2^dT^{\frac{d_z}{d_z+2}}c^{\frac{2}{d_z+2}},$$
    where $d_z$ is the adversarial $c$-zooming dimension.
    \label{lem:tree-size}
\end{lemma}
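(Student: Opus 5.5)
Every activated node is either the root or a child of a node that was zoomed in, so I will bound the number of zoomed-in (deactivated) nodes at each depth and multiply by $2^d$. Fix a depth $h$ and scale $\epsilon=2^{-h}$. Take any node $u$ at depth $h$ with $L(u)\le\frac{1}{3+4\ln T}$ that is zoomed in at some round $t$. By Lemma~\ref{lem:advgap-single}, with a union bound over all nodes of height $O(\log T)$ (Lemma~\ref{lem:tree-height}) that are ever active (at most $O(T\cdot 2^d)$ of them), every such $u$ satisfies, with probability $1-O(T^{-2})$, the following: $\Delta_t(x)\le O(L(u)\ln T)$ for all $x\in u$. The normalisation is $\frac1t$ over $\tilde O_{t-1}\subseteq\tilde O_t$; I would absorb the one extra round into the constant.

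Next I need the time condition $t\ge\epsilon^{-2}/9$. Lemma~\ref{lem:total-mass} gives $\sum_{s=t_a(u)}^{t_d(u)}\pi_s(u)\ge\frac{1}{9L(u)^2}$. Each $\pi_s\le1$, so $t\ge t_d(u)\ge \frac{1}{9L(u)^2}=\epsilon^{-2}/9$. Hence every point of every zoomed-in node at depth $h$ lies in $X_\epsilon$. The zoomed-in nodes at a fixed depth are disjoint tree cells of diameter $\epsilon$. Each covering cell of diameter at most $\epsilon$ meets a bounded number of them, so their number is at most $\mathrm{cov}(X_\epsilon,\epsilon)\le c\,\epsilon^{-d_z}=c\,2^{hd_z}$, up to the constant absorbed into $c$. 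Depths with $L(u)>\frac{1}{3+4\ln T}$ involve only $O(\mathrm{poly}\log T)^d$ nodes in total, and I would fold these into the $2^d$ factor and lower-order terms.

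The second counting bound, and the likely main obstacle, is a greedy/mass argument. By Lemma~\ref{lem:total-mass}, each zoomed-in node at depth $h$ absorbs probability mass at least $\frac{4^h}{9}$ over its active interval. Nodes at the same depth have disjoint supports and the $\pi_s$ sum to $1$ per round, so the total mass across a depth is at most $T$. Thus the number of zoomed-in nodes at depth $h$ is at most $9T4^{-h}$. Combining the two bounds gives
\[
|A_T|\le 1+2^d\sum_{h}\min\{c\,2^{hd_z},\,9T4^{-h}\}.
\]
The first bound dominates for $2^h\lesssim (T/c)^{1/(d_z+2)}$, and both sums are geometric around the crossover $h^*$. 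This gives $|A_T|\lesssim 2^d c\,(T/c)^{d_z/(d_z+2)}=2^dT^{\frac{d_z}{d_z+2}}c^{\frac{2}{d_z+2}}$.

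The delicate points are the following:
\begin{itemize}
\item Lemma~\ref{lem:advgap-single} is stated for a fixed node, so the union bound must be taken over all potential nodes up to depth $\log_2(1/\beta)=O(\log T)$. If $2^d$ is large this costs $d\log T$ inside the $\ln(1/\delta)$ term. The term is still absorbed by $O(L(u)\ln T)$ because $\frac{d\ln T}{\eta t}$ is small when $t\ge \frac{1}{9L(u)^2}$ and $L(u)\ge\beta$, though the constant may need to grow with $d$.
\item I must check that the constant in the definition of $X_\epsilon$ matches the one produced by Lemma~\ref{lem:advgap-single}; by definition it does.
\end{itemize}
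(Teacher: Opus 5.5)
Your argument is the paper's. Lemma~\ref{lem:advgap-single} plus a union bound places each zoomed-in node of diameter $2^{-h}$ in $X_{2^{-h}}$. Lemma~\ref{lem:total-mass} supplies both the time condition and the probability-mass budget. The zooming dimension caps each depth at $c\,2^{hd_z}$, and balancing at $2^{h}\approx (T/c)^{1/(d_z+2)}$ and multiplying by $2^d$ finishes. Your per-depth $\min\{c\,2^{hd_z},9T4^{-h}\}$ is a cleaner form of the paper's greedy layer-filling against the single budget $\sum_u L(u)^{-2}\le 9T$.

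Two local justifications are wrong as written, though both are easy to repair.

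First, passing from $\tilde{O}_{t-1}$ to $\tilde{O}_t$ does not add one round. We have $\tilde{O}_t\setminus\tilde{O}_{t-1}=\{s:s+d_s=t\}$, and under arbitrary delays this set can contain $\Theta(t)$ rounds. So $\Delta_t(x)$ can exceed $\frac{1}{t}\sup_{x^*}\sum_{s\in\tilde{O}_{t-1}}[\mu_s(x^*)-\mu_s(x)]$ by a constant, which is not $O(L(u)\ln T)$. Instead, use time $t-1$ as the witness in the definition of $X_\epsilon$:
\begin{itemize}
\item $\Delta_{t-1}(x)\le\frac{t}{t-1}\,O(L(u)\ln T)$;
\item $t_d(u)-1\ge\frac{1}{9L(u)^2}$, because a non-root node has $t_a(u)\ge 2$, and Lemma~\ref{lem:total-mass} with $\pi_s\le 1$ gives $t_d(u)-t_a(u)+1\ge\frac{1}{9L(u)^2}$.
\end{itemize}

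Second, your remark about enlarging the union bound is a fair point, since Lemma~\ref{lem:advgap-single} is for a fixed node while the set of zoomed-in nodes is random. But the facts $t\ge\frac{1}{9L(u)^2}$ and $L(u)\ge\beta$ only give $\frac{1}{\eta t}\le\frac{9L(u)^2}{\beta}$. That is at least $9L(u)$, so it does not absorb $\frac{\ln(1/\delta)}{\eta t}$. What does absorb it is the zoom-in rule itself: $t_d(u)L(u)>\textnormal{conf}_{t_d(u)}(u)\ge\frac{1}{\beta}$. With $\eta=\beta$ this gives $\frac{1}{\eta t_d(u)}<L(u)$, so taking $\ln(1/\delta)=O(d\ln T)$ costs only a factor $d$ in the constant.

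Finally, your claim that ``both sums are geometric'' fails at $d_z=0$. There the lower sum is $c(h^*+1)$, an extra $\log T$. Likewise, the $O(\log^d T)$ large nodes are not lower order at $d_z=0$. The paper's proof shares both caveats.
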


\begin{proof}
    We use the method by \citet{podimata2021adaptive} to prove this lemma. First, we show that in the worst case, the total number of zoomed-in nodes are at most $O(T)$. Note that for a node $u$, if we want $u$ to zoom-in, according to Lemma \ref{lem:total-mass}, we need to spend at least $\frac{1}{9L(u)^2}$ probability mass on it. Thus, in order to maximize the number of zoomed-in nodes, one should perform greedily and spend probability mass on large nodes. For a node with diameter $l$, the number of such nodes is at most $l^{-d}$. So if the layer of nodes with diameter $l$ are all zoomed-in, $l$ must satisfy $$l^{-d}\cdot l^{-2}\le 9T,$$
    which indicates that $l^{-d}=O(T^{\frac{d}{d+2}})=O(T)$. Therefore, the total number of zoomed-in nodes are at most $O(T)$.

    Now, apply union bound and Lemma \ref{lem:advgap-single}, we know that with probability $1-O(T^{-2})$, the inquality in Lemma \ref{lem:advgap-single} holds for all $u$ with $L(u)\le\frac{1}{3+4\ln T}$. For nodes with $L(u)>\frac{1}{3+4\ln T}$, there are at most $O(\log^dT)$ such nodes, which is smaller than any $\Theta(T^\alpha)$ with $\alpha>0$. Note that by Lemma \ref{lem:total-mass} again, $t_d(u)\ge\frac{1}{9L(u)^2}$, where $t_d(u)$ refers to the round when $u$ is de-activated. So by our definition of the adversarial zooming dimension, for a diameter $l$, the number of $u$ with $L(u)=l$ that zooms-in is at most $c\cdot l^{-d_z}$. Similarly to the greedy method, in order to maximize the number of zoomed-in nodes, one must perform the greedy method to fill large nodes first. Let $H$ be the height of last layer that is completed in the greedy method, then it must satisfy $$c\sum_{h=0}^H(2^{-h})^{-d_z}\cdot (2^{-h})^{-2}\le 9T,$$
    which implies that $c2^{(2+d_z)H}\le T$, so the total number of activated nodes are at most $$|A_T|\le O(c2^{d_zH})\le O(c2^d\cdot (T/c)^{\frac{d_z}{d_z+2}})=O\left(2^dT^{\frac{d_z}{d_z+2}}c^{\frac{2}{d_z+2}}\right).$$
\end{proof}

\end{document}